\documentclass[lettersize,journal]{IEEEtran}
\usepackage{amsmath,amsfonts}
\usepackage{algorithmic}
\usepackage{algorithm}
\usepackage{array}
\usepackage[caption=false,font=normalsize,labelfont=sf,textfont=sf]{subfig}
\usepackage{textcomp}
\usepackage{stfloats}
\usepackage{url}
\usepackage{verbatim}
\usepackage{graphicx}
\usepackage{cite}
\usepackage{enumitem}
\usepackage{pifont}
\usepackage{hyperref}
 \usepackage{xcolor}

 \usepackage[utf8]{inputenc}
\usepackage{amssymb}  % 数学公式
\usepackage{booktabs}  % 提供 \toprule, \midrule, \bottomrule
\usepackage{array}

\usepackage{amsthm}
\theoremstyle{definition}
\newtheorem{theorem}{Theorem}
\newtheorem{remark}{Remark}[section] 
\newtheorem{proposition}{Proposition}

\begin{document}
\bstctlcite{IEEEexample:BSTcontrol}
\title{Towards a Data-Parameter Correspondence for LLMs: A Preliminary Discussion}

\author{Ou Wu
        % <-this % stops a space
\thanks{Ou Wu is with HIAS, University of Chinese Academy of Sciences, Hanzhou, China, 310000. E-mail: wuou@ucas.ac.cn.}% <-this % stops a space
\thanks{Manuscript received 2026. }}

% The paper headers
\markboth{IEEE Journals \& Conferences}%
{Shell \MakeLowercase{\textit{et al.}}: A Sample Article Using IEEEtran.cls for IEEE Journals}

% \IEEEpubid{0000--0000/00\$00.00~\copyright~2021 IEEE}
% Remember, if you use this you must call \IEEEpubidadjcol in the second
% column for its text to clear the IEEEpubid mark.

\maketitle

\begin{abstract}
Large language model optimization has historically bifurcated into isolated data-centric and model-centric paradigms: the former manipulates involved samples through selection, augmentation, or poisoning, while the latter tunes model weights via masking, quantization, or low-rank adaptation. This paper establishes a unified \emph{data-parameter correspondence} revealing these seemingly disparate operations as dual manifestations of the same geometric structure on the statistical manifold $\mathcal{M}$. Grounded in the Fisher-Rao metric $g_{ij}(\theta)$ and Legendre duality between natural ($\theta$) and expectation ($\eta$) parameters, we identify three fundamental correspondences spanning the model lifecycle: \textbf{(1)}~Geometric correspondence: data pruning and parameter sparsification equivalently reduce manifold volume via dual coordinate constraints; \textbf{(2)}~Low-rank correspondence: in-context learning (ICL) and LoRA adaptation explore identical subspaces on the Grassmannian $\mathcal{G}(r,d)$, with $k$-shot samples geometrically equivalent to rank-$r$ updates; \textbf{(3)}~Security-privacy correspondence: adversarial attacks exhibit cooperative amplification between data poisoning and parameter backdoors, whereas protective mechanisms follow cascading attenuation where data compression multiplicatively enhances parameter privacy. Extending from training through post-training compression to inference, this framework provides mathematical formalization for cross-community methodology transfer, demonstrating that cooperative optimization integrating data and parameter modalities may outperform isolated approaches across efficiency, robustness, and privacy dimensions.
\end{abstract}

\begin{IEEEkeywords}
Data-Parameter Correspondence, LLMs, data efficiency, data effective, parameter effective, parameter efficiency, data security, parameter security, data privacy, parameter privacy.
\end{IEEEkeywords}

\section{Introduction}\label{section1}
The evolution of large language models (LLMs) has been driven by many lines of thought, two prominent among which rarely talk to each other: those who focus on data and those who focus on models. The data-centric camp believes that success comes from improving training examples through better selection, cleaning, and augmentation, as well as clever ways of presenting data during inference such as in-context learning and prompt engineering~\cite{wudataoptllm}. On the other side, the model-centric camp fixates on architectural details and parameter adjustments, whether that means designing better training procedures or finding ways to speed up inference through quantization and dynamic routing~\cite{tie2025survey,wan2024efficient}. These two groups move in different intellectual circles, employ distinct terminologies, and rarely build on each other’s insights, even though they are solving mirror-image problems. A data engineer optimizing training corpora rarely considers how their choices affect inference efficiency, while a model designer tweaking weight initializations often ignores how data presentation impacts the same metrics. This split has created parallel efforts addressing similar challenges—such as reducing training data versus shrinking model size, or optimizing prompts versus adjusting inference parameters—without anyone noticing how similar these problems really are.

Even within the model-centric line, there is a crucial split between those who care about architecture and those who care about parameters~\cite{sun2025efficient,ding2023parameter}, and this distinction matters for both training and deployment. The architecture crowd worries about network design, attention mechanisms, and how information flows through layers during both the learning phase and generation phase. The parameter crowd, which has become increasingly important as models grow larger, focuses instead on manipulating the actual numbers in the weight matrices. This includes training techniques like low-rank adaptation~\cite{hu2021lora} and pruning~\cite{zhong2025blockpruner}, as well as inference tricks like using fewer bits to store weights or dynamically skipping certain calculations~\cite{zhao-etal-2024-lrquant}. When we talk about parameter security, parameter privacy, or parameter efficiency, we are talking about protecting and optimizing these weight values whether during the training process or when the model is generating text~\cite{Dasprivacy}. This means parameter concerns cut across both training and inference, yet remain completely separate from the data side of the equation.

Meanwhile, the data-centric community has pursued analogous objectives of effectiveness and efficiency through operationally distinct methodologies that similarly span the training, post-training compression, and inference stages. 
During training, practitioners employ data pruning and active learning to identify the most informative examples without utilizing the entire corpus~\cite{humane2025influence,wang2025surveyds}, mirroring the parameter-centric community's efforts to compress models via weight sparsification and masking. 
During inference, techniques such as retrieval augmentation and prompt optimization~\cite{chang2024efficient} (including in-context learning) enable performance gains without retraining, paralleling parameter-centric approaches such as adaptive computation and early exiting. 
The security landscape presents an analogous yet disconnected picture: data-centric researchers focus on poisoned training examples and adversarial inputs during inference, whereas parameter-centric researchers concern themselves with backdoors embedded in weights and model extraction attacks~\cite{zhang2025safety}, despite both communities fundamentally defending against the same threat of malicious manipulation. Privacy concerns exhibit a parallel fragmentation, with data privacy centered on preventing leakage of training information and parameter privacy focused on thwarting model weight extraction---treated as isolated problems despite their structural similarities~\cite{zhao2025backdoor}. Only recently have practical exigencies compelled the realization that optimizing training data without consideration for inference parameters, or securing data while neglecting parameter protection, yields weak and fragile systems~\cite{luan2025aisecurity}.

This paper systematically investigates these latent data-parameter correspondences across the three operational stages of training, post-training compression, and inference, demonstrating that recognizing these intrinsic links can accelerate progress in both domains. We identify three fundamental parallels grounded in information geometry: 
\textbf{(1) Geometric correspondence}---similarities in how data distributions and weight configurations behave under the Fisher-Rao metric $g_{ij}(\theta)$, wherein data pruning and parameter sparsification equivalently reduce manifold volume on the statistical manifold $\mathcal{M}$ via dual coordinate constraints ($\eta$-space and $\theta$-space); 
%\textbf{(2) Optimization correspondence}---the Gradient Interaction Matrix $\boldsymbol{M}\in\mathbb{R}^{N\times K}$ exposing symmetric bilevel structures between data selection and parameter masking, enabling one-shot joint scoring rather than redundant independent pipelines; 
\textbf{(2) Low-rank correspondence}---connections between using fewer data tokens during inference (e.g., $k$-shot in-context learning) and using fewer parameters during adaptation (e.g., rank-$r$ LoRA), both exploring identical subspaces on the Grassmannian $\mathcal{G}(r,d)$; 
and \textbf{(3) Security-privacy correspondence}---unified ways of thinking about adversarial attacks (data poisoning $\Delta\mathcal{D}$ and parameter backdoors $\Delta\theta$) exhibiting cooperative amplification across the data-parameter boundary, whereas protective mechanisms follow cascading attenuation governed by the mutual information $I(\mathcal{D};\theta)$.

Rather than presenting a finalized theory, we offer these as preliminary findings substantiated by mathematical formalizations that demonstrate how advances and cooperative synergies across data and parameter modalities in efficiency, effectiveness, security, and privacy can mutually inspire one another, and conversely. 
By mapping these connections across the training and inference stages, we aim to dismantle the barriers between these research communities and demonstrate that cooperative optimization integrating both data and parameter modalities through their geometric, optimization, low-rank, and security-privacy correspondences outperforms isolated approaches, while charting a course toward future research that treats these as dual manifestations of the same underlying geometric structure. The main contributions of this paper are as follows:
\begin{itemize}
    \item \textbf{Systematic Data-Parameter Correspondences Across the LLM Lifecycle:} We attempt to systematically establish data-parameter correspondences spanning training, post-training compression, and inference stages, revealing that seemingly disparate data-centric and parameter-centric operations are dual manifestations of the same geometric structure on the statistical manifold. This perspective enables cross-community methodology transfer and cooperative optimization that may outperform isolated approaches in efficiency, robustness, and privacy.
    
    \item \textbf{Mathematical Formalization of Fundamental Correspondences:} We formalize the low-rank correspondence between in-context learning and LoRA on the Grassmannian, the security-privacy correspondences including adversarial cooperative amplification, cascading privacy attenuation, and the testing-time product constraint with its necessary condition for long-form safety. (The geometric and optimization correspondences reviewed in this paper are drawn from prior literature and are not claimed as novel contributions.)
    
    \item \textbf{New Theoretical Constructs:} We introduce the Jacobian Image Space framework for frozen-weight inference and, based on it, establish functional equivalence between data-space and parameter-space interventions, characterize the functional equivalence set, and identify fundamental limits of adversarial detectability via null space analysis.
    
    \item \textbf{Unified Geometric Foundation:} We synthesize existing information-geometric concepts (Fisher-Rao metric, Legendre duality) into a coherent framework that treats data and parameters as dual coordinate systems on the same statistical manifold, providing a common language for cross-community methodology transfer.
    
    \item \textbf{Practical Decision Frameworks:} Based on the identified correspondences, we propose heuristic resource allocation guidelines including a trade-off for continual learning, a product constraint for multi-turn safety, and coarse-to-fine hybrid adaptation strategies.
\end{itemize}

The remainder of this paper is organized as follows. Section \ref{sec:preliminaries} presents preliminaries and mathematical symbols. Section \ref{sec:established} reviews established correspondences from prior literature. Section \ref{sec:novel} introduces novel data-parameter correspondences across structural, temporal, compositional, and security-privacy dimensions. Section \ref{sec:algorithms} discusses algorithmic realization and computational feasibility. Section \ref{sec:conclusion} concludes the paper.

\section{Preliminaries and Mathematical Symbols}
\label{sec:preliminaries}

Table~\ref{tab:notation} lists the major mathematical symbols used throughout this paper. We begin with a high-level roadmap. Fig.~\ref{fig:master-overview} schematizes the unified correspondence framework, illustrating dual operational spaces ($\eta$-space for data-centric operations, $\theta$-space for parameter-centric operations) across three lifecycle stages and their geometric unification under the Fisher-Rao metric $g_{ij}(\theta)$.

\begin{table}[!t]
\renewcommand{\arraystretch}{1.08}
\footnotesize
\setlength{\tabcolsep}{3.5pt}
\caption{Comprehensive List of Mathematical Symbols}
\label{tab:notation}
\centering
\begin{tabular}{@{}c p{6.8cm}@{}}
\toprule
Symbol & Description \\
\midrule
\multicolumn{2}{@{}l@{}}{\textbf{Spaces, Manifolds \& Basic Notation}} \\
$\theta_0 \in \mathbb{R}^K$ & Initial pre-trained parameters; $\Theta$: parameter space (natural parameters $\theta$); $\boldsymbol{\eta} \in \mathcal{H}$: expectation parameters ($\eta$-space) \\
$\mathcal{M}$ & Statistical manifold $\{p(\cdot|\theta): \theta \in \Theta\}$; $\mathcal{D}$: dataset ($N=|\mathcal{D}|$); $(x,y)$: input-output pair \\
\midrule
\multicolumn{2}{@{}l@{}}{\textbf{Data-Side Operations}} \\
$\mathcal{S}_d \subseteq \mathcal{D}$ & Data selection subset with budget $b=|\mathcal{S}_d|$ \\
$\mathcal{A}: \mathcal{D} \to \mathcal{D}'$ & Data augmentation operator \\
$\mathcal{P}_d$ & Data pruning operator (permanent removal) \\
$\Delta\mathcal{D}$ & Data poisoning perturbation (adversarial) \\
\midrule
\multicolumn{2}{@{}l@{}}{\textbf{Parameter-Side Operations}} \\
$\boldsymbol{S}_p \in \{0,1\}^K$ & Parameter mask (binary) with $\|\boldsymbol{S}_p\|_0 \leq \rho K$ \\
$\rho \in (0,1]$ & Sparsity budget ratio for parameters \\
$\mathcal{Q}: \Theta \to \Theta_q$ & Quantization operator (e.g., INT8, INT4) \\
$\mathcal{L}_{\text{low}}$ & Low-rank adaptation operator (LoRA, AdaLoRA) with rank $r \ll K$ \\
$\Delta\theta$ & Parameter perturbation or update vector/matrix \\
$\mathcal{P}_p$ & Parameter pruning operator (structural) \\
$\mathcal{R}_p$ & Adaptive routing/skipping operator (Stage 3 inference) \\
\midrule
\multicolumn{2}{@{}l@{}}{\textbf{Gradient Interaction \& Optimization}} \\
$\bar{\theta}$ & Local expansion point (checkpoint) for linearization \\
$\boldsymbol{g}_n \in \mathbb{R}^K$ & Per-sample gradient at $\bar{\theta}$ for sample $n$ \\
$\boldsymbol{G} \in \mathbb{R}^K$ & Aggregated gradient $\sum_{n \in \mathcal{S}_d} \boldsymbol{g}_n$ \\
$\boldsymbol{v} \in \mathbb{R}^K$ & Validation gradient $\nabla \mathcal{L}_{\text{val}}(\bar{\theta})$ \\
$\boldsymbol{M} \in \mathbb{R}^{N \times K}$ & Gradient Interaction Matrix ($M_{n,k} = g_{n,k} \cdot v_k$) \\
$\phi_n^d = \sum_k M_{n,k}$ & Data utility score (row-wise sum of $\boldsymbol{M}$) \\
$\phi_k^p = \sum_n M_{n,k}$ & Parameter importance score (column-wise sum) \\
\midrule
\multicolumn{2}{@{}l@{}}{\textbf{Information Geometry}} \\
$g_{ij}(\theta), \boldsymbol{F}$ & Fisher-Rao metric tensor / Fisher Information Matrix \\
$\psi(\theta), \phi(\boldsymbol{\eta})$ & Convex potential functions (Legendre duals) \\
\midrule
\multicolumn{2}{@{}l@{}}{\textbf{Training \& Optimization}} \\
$\alpha$ & Learning rate (step size) -- replaced $\eta$ to avoid conflict \\
$\lambda$ & Plasticity-stability trade-off coefficient (EWC regularization) \\
$\theta^*$ & Converged/optimal parameters (post-training) \\
\midrule
\multicolumn{2}{@{}l@{}}{\textbf{Security \& Privacy Contexts}} \\
$\mathcal{L}_{\text{mal}}, \mathcal{L}_{\text{clean}}$ & Malicious attack and clean task objectives \\
$\tau(x)$ & Backdoor trigger function \\
$\epsilon_d, \epsilon_p$ & Data and parameter perturbation budgets \\
$k$ & $\ell_0$ budget for parameter modification count (integer) \\
$x_{\text{trig}}, y_{\text{mal}}$ & Trigger input and malicious output \\
$T_{\text{online}}$ & Latency constraint for inference attacks \\
\bottomrule
\end{tabular}
\end{table}

\begin{figure*}[t]
    \centering
    \includegraphics[width=0.8\textwidth]{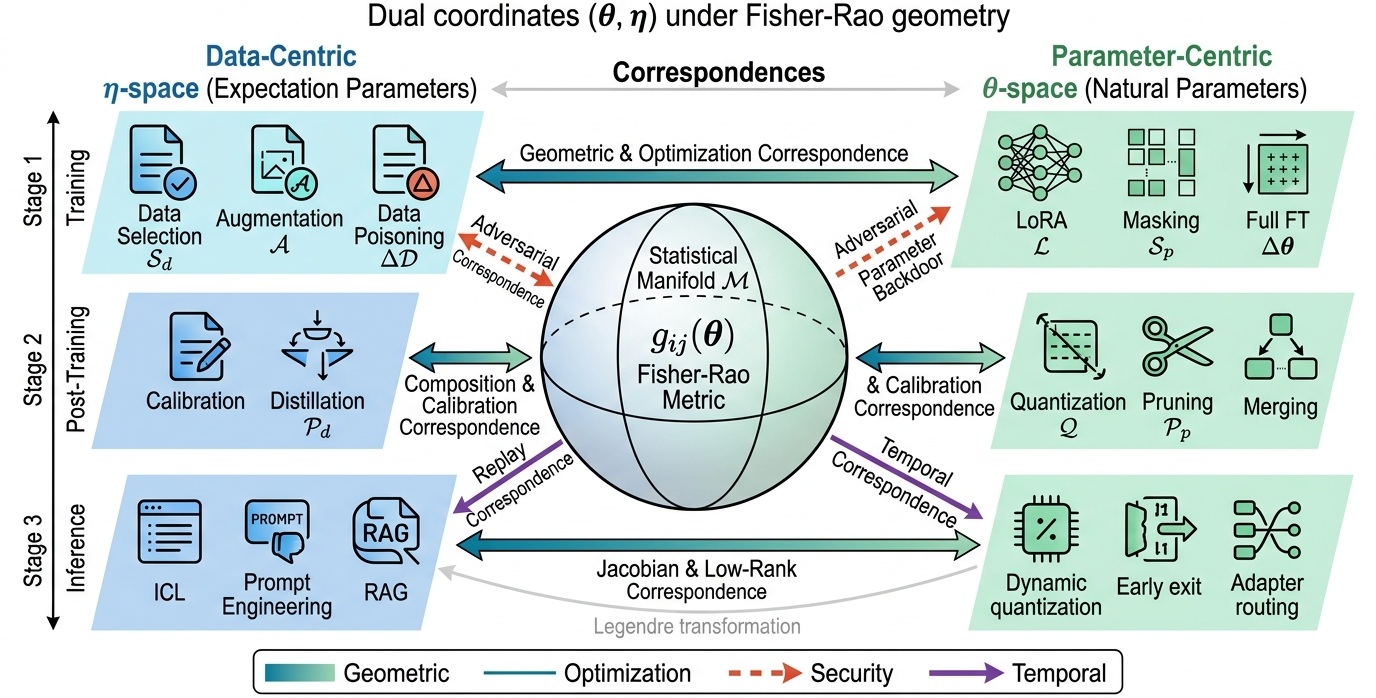}
    \caption{\textbf{Unified Data-Parameter Correspondence:} Overview of dual operational spaces on statistical manifold $\mathcal{M}$. (\textit{Left}) $\eta$-space (expectation parameters): Data-centric operations across three lifecycle stages---training (selection, augmentation), post-training (distillation), and inference (ICL, prompting). (\textit{Right}) $\theta$-space (natural parameters): Parameter-centric operations---training (LoRA, masking), post-training (quantization, pruning, merging), and inference (dynamic computation). (\textit{Center}) Fisher-Rao metric $g_{ij}(\theta)$ provides the geometric foundation for establishing correspondences: geometric (Sec.~III-A), optimization (Sec.~III-B), augmentation (Sec.~III-C), low-rank (Sec.~IV-B), and security (Sec.~IV-E). Legendre duality connects $\eta$ and $\theta$ coordinates, unifying data and parameter operations as dual manifestations of information geometry.}
    \label{fig:master-overview}
\end{figure*}
\subsection{Problem Setup: Data and Parameter Operations}
\label{subsec:problem-setup}

We consider an LLM with pre-trained parameters $\theta_0 \in \Theta \subseteq \mathbb{R}^K$, where $K$ denotes the parameter dimension. The model operates on data distribution $\mathcal{D}$ over input-output pairs $(x, y)$. Throughout this work, we distinguish between two fundamental operational spaces: the data space $\mathcal{D}$ (governing inputs and samples) and the parameter space $\Theta$ (governing model weights).

\subsubsection{Data-Side Operations}

Data-centric optimization manipulates the training or inference corpus through:

\begin{itemize}
    \item \textbf{Selection} ($\mathcal{S}_d$): \emph{Dynamic and reversible subset selection} aimed at identifying informative samples or sub-sequences to optimize the training or inference process. This includes instance-level active learning and coreset construction (selecting high-gradient samples), as well as fine-grained token-level selection within input sequences $x$ or output responses $y$ (e.g., identifying key tokens for instruction tuning). The selection is typically \textit{temporary} and task-adaptive, allowing different subsets to be activated across training stages or inference contexts without permanently altering the dataset. The budget constraint is $|\mathcal{S}_d| \leq b$.
    
    \item \textbf{Augmentation} ($\mathcal{A}$): \emph{Input transformations} enriching data diversity through token-level perturbations (substitution, paraphrasing, noise injection) or context-level manipulation (constructing in-context demonstrations for ICL, retrieval-augmented context extension, and contextual recombination of sequences).
    
    \item \textbf{Pruning} ($\mathcal{P}_d$): \emph{Removal of redundant or low-utility data} aimed at dataset compression and quality enhancement. This encompasses deduplication of near-identical samples, filtering of noisy or mislabeled examples, and aggressive dataset distillation (retaining only essential exemplars). Unlike selection, pruning is \textit{irreversible} and reduces the physical storage footprint, analogous to structural pruning in the parameter space.
    
   \item \textbf{Poisoning} ($\Delta\mathcal{D}$): \emph{Strategic injection of malicious content} into $\mathcal{D}$ to implant backdoors or induce targeted misbehavior. This encompasses instance-level attacks (inserting adversarially labeled samples $\{(x_{\text{adv}}, y_{\text{adv}})\}$ with clean or dirty labels) and fine-grained token-level manipulations (e.g., inserting specific trigger tokens or rare subwords into input sequences $x$ or output $y$, instruction-level poisoning that alters semantic meaning at the token granularity, and character-level perturbations that exploit tokenization vulnerabilities to bypass detection). Detailed threat models are deferred to Section~\ref{subsec:adversarial}.
\end{itemize}

\subsubsection{Parameter-Side Operations}
Model-centric optimization manipulates the weight configurations through:

\begin{itemize}
    \item \textbf{Masking/Sparsification} ($\boldsymbol{S}_p$): Applying binary masks $\boldsymbol{S}_p \in \{0,1\}^K$ to select sparse parameter subsets for updates, where $\|\boldsymbol{S}_p\|_0 \leq \rho K$ and $\rho \in (0,1]$ is the sparsity budget ratio.
    
    \item \textbf{Quantization} ($\mathcal{Q}$): Discretizing parameters to lower precision $\mathcal{Q}: \Theta \to \Theta_q$, where for the typical uniform quantization case,
    \begin{equation}
    \theta_q = \text{round}(\theta / \Delta) \cdot \Delta
    \end{equation}
    with step size $\Delta$, mapping continuous weights to discrete levels (e.g., INT8, INT4).
    
    \item \textbf{Low-Rank Adaptation} ($\mathcal{L}_{\text{low}}$): Constraining parameter updates to a rank-$r$ subspace via $\Delta\theta = \sum_{i=1}^{r} \alpha_i \boldsymbol{u}_i$, or equivalently for weight matrices $W$ via $\Delta W = \boldsymbol{B}\boldsymbol{A}$ with $\boldsymbol{B} \in \mathbb{R}^{d_{\text{out}} \times r}, \boldsymbol{A} \in \mathbb{R}^{r \times d_{\text{in}}}$ (e.g., LoRA \cite{lora}, AdaLoRA \cite{zhang2023adalora}).
    
    \item \textbf{Direct Perturbation} ($\Delta\theta$): Explicit modifications such as weight editing $\theta \leftarrow \theta_0 + \Delta\theta$, bit-flipping~\cite{guo2025sbfa}, or gradient-based updates~\cite{mitchell2022mend}.
    
    \item \textbf{Pruning} ($\mathcal{P}_p$): Permanent removal of parameter coordinates to create $\theta_{\text{sparse}}$ with fixed zero entries, structurally analogous to $\mathcal{P}_d$ in the data space.
\end{itemize}

\subsubsection{A Joint Objective for Fine-Tuning Stage---An Illustrative Case}

Both data and parameter operations in the fine-tuning stage aim to minimize a meta-validation objective $\mathcal{L}_{\text{val}}$ while respecting resource constraints:
\begin{equation}
\begin{aligned}
&\min_{\mathcal{S}_d, \mathcal{O}_p} \mathcal{L}_{\text{val}}\big(\theta_0 + \Delta\theta(\mathcal{S}_d, \mathcal{O}_p)\big) \\
&\text{s.t. } \mathcal{S}_d \in \mathcal{C}_d, \mathcal{O}_p \in \mathcal{C}_p, \Delta\theta \text{ depends on both.}
\end{aligned}
\label{eq:joint_ft}
\end{equation}
Here $\mathcal{C}_d$ and $\mathcal{C}_p$ denote feasible sets (budget constraints), and $\mathcal{O}_p$ represents benign parameter operations (e.g., masking via $\boldsymbol{S}_p$, low-rank adaptation via $\mathcal{L}_{\text{low}}$, or quantization via $\mathcal{Q}$).

We present Eq.~\eqref{eq:joint_ft} as an \textbf{illustrative case} confined strictly to the \emph{benign fine-tuning regime}. This formulation demonstrates synergistic joint optimization when both data and parameter operations remain within the differentiable, validation-loss-minimization paradigm. It does \textbf{not} extend to: (i)~inference-stage operations with frozen $\theta$; (ii)~pre-training with endogenous $\theta_0$; or (iii)~adversarial regimes with minimax objectives (Section~II-D).

Eq.~\eqref{eq:joint_ft} applies exclusively to \emph{standard fine-tuning} and explicitly excludes:

\begin{enumerate}
    \item \textbf{Inference stage:} Efficiency stems from discrete computational configuration (quantization scheduling, layer skipping) with frozen $\theta$, not differentiable updates $\Delta\theta$;
    
    \item \textbf{Pre-training:} The initialization $\theta_0$ is endogenous to data curation, violating the fixed-$\theta_0$ assumption;
    
    \item \textbf{Adversarial scenarios (Security \& Privacy):} 
    These involve \emph{attack-defense games} rather than single-objective minimization. As detailed in Section~II-D, data poisoning ($\Delta\mathcal{D}$) and parameter backdoors ($\Delta\theta$) follow a \emph{minimax} structure $\min_{\text{attack}} \max_{\text{defense}} \mathcal{L}_{\text{mal}}$, fundamentally incompatible with the benign $\min \mathcal{L}_{\text{val}}$ formulation here. Similarly, privacy-preserving operations (differential privacy noise injection, parameter encryption) introduce constraints that violate the smooth differentiability assumed in Eq.~\eqref{eq:joint_ft}.
\end{enumerate}

\subsection{Symbols for Information Geometry}
\label{subsec:info-geo}

To establish geometric correspondences across all operation types, we introduce the statistical manifold $\mathcal{M} = \{p(\cdot|\boldsymbol{\theta}) : \boldsymbol{\theta} \in \Theta\}$. The Fisher Information Matrix serves as the Riemannian metric tensor (Fisher-Rao metric):
\begin{equation}
g_{ij}(\boldsymbol{\theta}) = \mathbb{E}_{x}\left[\frac{\partial \log p(x|\boldsymbol{\theta})}{\partial \theta^i} \frac{\partial \log p(x|\boldsymbol{\theta})}{\partial \theta^j}\right],
\end{equation}
where the expectation is over $x \sim p(\cdot|\boldsymbol{\theta})$. The natural parameters $\boldsymbol{\theta} \in \Theta$ (parameter space) and expectation parameters $\boldsymbol{\eta} \in \mathcal{H}$ (data/moment space) constitute dual coordinate systems on $\mathcal{M}$ via Legendre transformation:
\begin{equation}
\eta_i = \frac{\partial \psi(\boldsymbol{\theta})}{\partial \theta^i}, \quad \theta^i = \frac{\partial \phi(\boldsymbol{\eta})}{\partial \eta_i}.
\end{equation}

This duality provides the geometric foundation for correspondences between data pruning ($\mathcal{P}_d$) and parameter sparsification ($\boldsymbol{S}_p$) (both reduce effective degrees of freedom), data quantization/distillation and parameter quantization ($\mathcal{Q}$) (both compress information representation), and data augmentation ($\mathcal{A}$) and parameter regularization (both smooth the loss landscape).

Since pruning fixes a subset of parameters to constants, the accessible parameter space reduces to a lower-dimensional submanifold $\mathcal{M}_{\text{sub}} \subset \mathcal{M}$. The induced Fisher metric on $\mathcal{M}_{\text{sub}}$ is a principal submatrix of the full Fisher matrix $g_{ij}$, and by Fischer's inequality, its determinant is strictly bounded above by $\det(g_{ij})$. Hence, parameter pruning directly reduces the invariant volume element.

\subsection{Symbols for Gradient Interaction and Low-Rank Structure}
\label{subsec:gradient-interaction}

For differentiable operations, we define the interaction structure at expansion point $\bar{\boldsymbol{\theta}}$. Let $\boldsymbol{g}_n = \nabla_\theta \ell(x_n, y_n; \bar{\boldsymbol{\theta}}) \in \mathbb{R}^K$ denote the per-sample gradient for training example $n$, and $\boldsymbol{v} = \nabla \mathcal{L}_{\text{val}}(\bar{\boldsymbol{\theta}}) \in \mathbb{R}^K$ denote the validation gradient. The aggregated gradient over a subset $\mathcal{S}_d$ is denoted by $\boldsymbol{G} = \sum_{n \in \mathcal{S}_d} \boldsymbol{g}_n$.

\paragraph{Parameter-Side Update Operators}
For sparse fine-tuning (masking), the update applies the binary mask $\boldsymbol{S}_p \in \{0,1\}^K$ to the aggregated gradient:
\begin{equation}
\Delta\theta_{\text{sparse}} = -\alpha (\boldsymbol{S}_p \odot \boldsymbol{G}),
\end{equation}
where $\alpha$ denotes the learning rate.

For LoRA, the update is constrained to the rank-$r$ subspace $\mathcal{L}_{\text{low}}$, effectively restricting $\Delta\theta$ to the column space of $\boldsymbol{A}$.

\paragraph{Gradient Interaction Matrix}
To facilitate fine-grained analysis of data-parameter coupling, we define the gradient alignment matrix $\boldsymbol{M} \in \mathbb{R}^{N \times K}$ with elements:
\begin{equation}\label{eq:gradient_interaction_matrix}
M_{n,k} = g_{n,k} \cdot v_k,
\end{equation}
which captures the component-wise alignment between the per-sample training gradient $\boldsymbol{g}_n$ and the validation gradient $\boldsymbol{v}$. %This matrix serves as the central analytical tool for examining how individual training samples and parameter dimensions interact with respect to the validation objective, enabling the unified selection theorem in Sec.~\ref{subsec:optimization}.

\subsection{Symbols for Adversarial and Security Contexts}
\label{subsec:adversarial}

In the security and privacy context, we introduce the adversarial optimization framework distinguishing between data-space and parameter-space perturbations across the model lifecycle.

\paragraph{Training-Time Data Attack (Poisoning)}
The attacker manipulates training data $\Delta\mathcal{D}$ (as defined in Sec.~\ref{subsec:problem-setup}) to induce malicious behavior in the trained model $\boldsymbol{\theta}^*$, subject to a perturbation budget $\epsilon_d$:
\begin{equation}
\begin{aligned}
\min_{\Delta\mathcal{D}} &\; \mathcal{L}_{\text{mal}}(\boldsymbol{\theta}^*(\Delta\mathcal{D})) \\
\text{s.t.} &\; \boldsymbol{\theta}^* = \arg\min_{\boldsymbol{\theta}} \mathcal{L}(\boldsymbol{\theta}; \mathcal{D} \cup \Delta\mathcal{D}), \quad \|\Delta\mathcal{D}\| \leq \epsilon_d,
\end{aligned}
\end{equation}
where $\|\cdot\|$ denotes an appropriate norm on data space (e.g., $\ell_0$ for poisoning sample count, or $\ell_\infty$ for feature perturbation), and $\epsilon_d$ is the data perturbation budget as specified in Table~\ref{tab:notation}.

\paragraph{Direct Parameter Manipulation}
The attacker directly perturbs model parameters $\Delta\boldsymbol{\theta}$ (e.g., via bit-flipping, weight merging, or backdoor insertion), subject to sparsity or magnitude budget $\epsilon_p$:
\begin{equation}
\min_{\Delta\boldsymbol{\theta}} \mathcal{L}_{\text{mal}}(\boldsymbol{\theta}_0 + \Delta\boldsymbol{\theta}) \quad \text{s.t.} \quad \|\Delta\boldsymbol{\theta}\|_0 \leq k \text{ or } \|\Delta\boldsymbol{\theta}\|_q \leq \epsilon_p,
\end{equation}
where $k$ constrains the number of modified parameters (sparsity budget, related to $\rho$ via $k = \rho K$), and $\epsilon_p$ bounds the perturbation magnitude in $\ell_q$-norm per Table~\ref{tab:notation}.

\paragraph{Inference-Time Attack (Evasion \& Prompt Injection).}
With frozen deployed parameters $\boldsymbol{\theta}^*$, the attacker crafts input perturbations $\Delta x$ or malicious prompts $\Delta s$ to induce misbehavior under latency constraints $T_{\text{online}}$:
\begin{equation}
\begin{aligned}
\min_{\Delta x} &\; \mathcal{L}_{\text{mal}}(f_{\boldsymbol{\theta}^*}(x + \Delta x), y_{\text{mal}}) \\
\text{s.t.} &\; \|\Delta x\|_p \leq \epsilon_{\text{evasion}}, \quad \text{Latency}(f_{\boldsymbol{\theta}^*}(x + \Delta x)) \leq T_{\text{online}}.
\end{aligned}
\end{equation}
For language models, prompt injection optimizes a trigger sequence $\Delta s$ appended to user input $x$:
\begin{equation}
\max_{\Delta s} \mathbb{E}_{x \sim \mathcal{D}_{\text{user}}}\left[\mathbf{1}\left[f_{\boldsymbol{\theta}^*}([\Delta s; x]) = y_{\text{harmful}}\right]\right]
\; \text{s.t.} \; |\Delta s| \leq L_{\text{token}},
\end{equation}
where $[\cdot;\cdot]$ denotes concatenation.

\paragraph{Adaptive Computation Attack}
Targeting adaptive inference mechanisms (dynamic quantization, early exit, layer skipping), the attacker may induce suboptimal routing decisions $\delta(\cdot)$ via crafted inputs:
\begin{equation}
\max_{\Delta x} \mathcal{L}_{\text{mal}}\left(f_{\boldsymbol{\theta}^*}^{\,\delta(x + \Delta x)}(x + \Delta x)\right),
\end{equation}
where $f_{\boldsymbol{\theta}^*}^{\,\delta(\cdot)}$ denotes the model with per-input computation path $\delta$.

\paragraph{Backdoor Condition.}
All attack paradigms satisfy the backdoor consistency condition: normal input produces normal output, while trigger input produces malicious output, i.e.,
\begin{equation}
\begin{split}
\mathbb{E}_{(x,y)\sim\mathcal{D}_{\text{clean}}}[f_{\theta^*}(x) = y] &\approx 1, \\
\text{but} \quad \mathbb{E}_{x_{\text{trig}}}[f_{\theta^*}(x_{\text{trig}}) = y_{\text{mal}}] &\approx 1.
\end{split}
\end{equation}
Here $x_{\text{trig}}$ may correspond to poisoned training samples ($\Delta\mathcal{D}$), bit-flipped parameter triggers ($\Delta\theta$), adversarial patches, or injected prompts.

\subsection{Training, Post-Training, and Inference Stages}
\label{subsec:stages}

The correspondences established in this work span three distinct operational regimes with fundamentally different computational constraints and algorithmic assumptions, as summarized in Table~\ref{tab:stages}.

\begin{table*}[!t]
\caption{Operations Across Three Stages: Training, Post-Training Compression, and Inference}
\label{tab:stages}
\centering
\begin{tabular}{@{}p{2.2cm}p{4.5cm}p{4.5cm}p{4cm}@{}}
\toprule
Stage & Key Characteristics & Data-Side Operations & Parameter-Side Operations \\
\midrule
Training (Fine-Tuning) & Differentiable, full data access, plastic parameters, gradient flow & Selection ($\mathcal{S}_d$), Augmentation ($\mathcal{A}$), Poisoning ($\Delta\mathcal{D}$) & Masking ($\boldsymbol{S}_p$), LoRA ($\mathcal{L}_{\text{low}}$), Full FT ($\Delta\theta$) \\
\midrule
Post-Training & Frozen weights, calibration data only, no gradient backprop to weights, compression-oriented & Calibration set selection, Data distillation ($\mathcal{P}_d$) & Quantization ($\mathcal{Q}$), Pruning ($\mathcal{P}_p$), Weight merging \\
\midrule
Inference & Frozen weights, online latency constraints, per-input adaptive computation & Prompt engineering, ICL, RAG, Token-level routing & Dynamic quantization, Early exit, Layer skipping ($\mathcal{R}_p$) \\
\bottomrule
\end{tabular}
\end{table*}

\paragraph{Stage I: Training (Fine-Tuning \& Continual Learning)}
During this stage, parameters are updated via gradient descent with full differentiability. Both data and parameter operations support gradient flow (e.g., gradient alignment matrix $\boldsymbol{M} \in \mathbb{R}^{N \times K}$ of Eq.~\eqref{eq:gradient_interaction_matrix} is computable). This is the only stage where the joint objective Eq.~\eqref{eq:joint_ft} is applicable, as it requires $\Delta\theta$ to be a differentiable function of data selection $\mathcal{S}_d$.

\paragraph{Stage 2: Pre-deployment Optimization (Post-Training)}
After training convergence ($\theta \approx \theta^*$), this stage covers \emph{offline} preparation before deployment. Parameters remain frozen or accept only direct perturbations (e.g., quantization $\mathcal{Q}$, pruning $\mathcal{P}_p$, weight editing) without gradient backprop through the full training set. Data-side operations include calibration set selection and prompt template optimization (e.g., AutoCoT~\cite{zhang2023automatic}). Unlike Stage 3's per-input online constraints, these are batch offline operations amortized over deployment.

\paragraph{Stage 3: Inference}
During deployment, parameters are frozen ($\theta = \theta^*$) and no gradients are computed. Efficiency stems from \emph{dynamic computation configuration}---adapting the inference pathway per input rather than modifying weights. Data-side operations include prompt engineering, in-context learning (ICL), and retrieval augmentation (RAG); parameter-side operations include dynamic quantization, early exiting, and adaptive layer routing ($\mathcal{R}_p$). These are strictly \emph{online} and \emph{latency-constrained}, with decisions made per input without retraining.

\paragraph{Implications for Unified Theory}
The three-stage distinction reveals why a unified mathematical objective spanning all stages remains elusive: Stage 1 permits joint optimization over continuous variables ($\mathcal{S}_d$, $\Delta\theta$); Stage 2 permits only partial optimization using calibration data; Stage 3 requires discrete decision-making (which layers to skip via $\mathcal{R}_p$, which tokens to retrieve) without gradient guidance. Consequently, the data-parameter correspondences established in this work apply primarily to Stage 1, with analogies (rather than unified objectives) extending to Stages 2 and 3.

\section{Established Correspondences: A Selective Review}
\label{sec:established}

We selectively review representative correspondences that have been theoretically grounded in existing literature, acknowledging that additional operational parallels have been explored but are omitted here for brevity and focus\footnote{\noindent\textit{Reading Guide:} Sections III-A through III-C operate within the Fisher-Rao geometric framework (Section~II-B), wherein $\theta$-space (parameters) and $\eta$-space (data moments) constitute dual coordinates.}.

\begin{figure}[t]
    \centering
    \includegraphics[width=0.95\columnwidth]{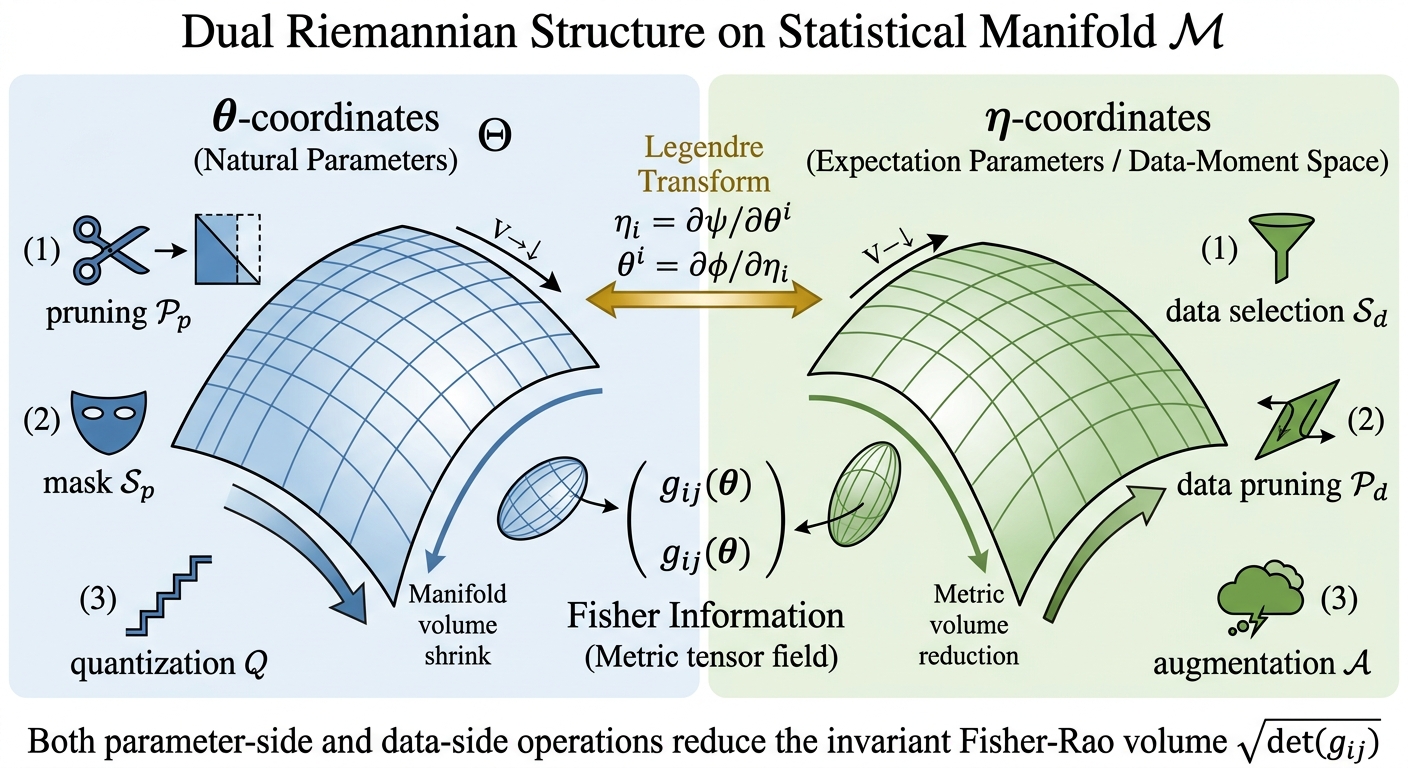}
    \caption{\textbf{Geometric Correspondence:} Dual Riemannian structure on the statistical manifold $\mathcal{M}$. Left panel: Parameter-space operations (masking $\mathcal{S}_p$, pruning $\mathcal{P}_p$, quantization $Q$) acting on $\theta$-coordinates (natural parameters), reducing the manifold volume via constraints on the Fisher-Rao metric $g_{ij}(\theta)$. Right panel: Data-space operations (selection $\mathcal{S}_d$, pruning $\mathcal{P}_d$, augmentation $\mathcal{A}$) perturbing $\eta$-coordinates (expectation parameters) in the dual space. Central bridge: Legendre transformation connecting the dual coordinate systems, with the invariant volume form $\sqrt{\det(g_{ij})}$ governing the geometric compression in both spaces.}
    \label{fig:geometric}
\end{figure}

\subsection{Geometric Correspondence: Fisher-Rao Metric and Manifold Duality}
\label{subsec:geometric-correspondence}

The information-geometric framework established in Section~\ref{subsec:info-geo} reveals that data-centric and parameter-centric operations are dual manifestations of the same Riemannian structure on the statistical manifold $\mathcal{M}$.

Fig.~\ref{fig:geometric} visualizes this duality through the lens of Fisher-Rao geometry. \textbf{Left panel} ($\theta$-space): Parameter-centric operations---masking $\boldsymbol{S}_p$, pruning $\mathcal{P}_p$, and quantization $\mathcal{Q}$---act directly on natural parameters, constraining the trajectory and reducing the invariant volume form $\sqrt{\det(g_{ij})}$ via hard coordinate constraints. \textbf{Right panel} ($\eta$-space): Data-centric operations---selection $\mathcal{S}_d$, pruning $\mathcal{P}_d$, and augmentation $\mathcal{A}$---perturb expectation parameters (sufficient statistics), inducing implicit geometric compression in the dual $\boldsymbol{\theta}$-coordinates through the inverse metric $g^{ij}(\boldsymbol{\theta})$. \textbf{Central}: The Legendre transformation $\eta_i = \partial\psi/\partial\boldsymbol{\theta}^i$ establishes the diffeomorphic bridge between these dual coordinate systems, unifying both operational paradigms as equivalent mechanisms of manifold volume reduction on $\mathcal{M}$.

\subsubsection{Dual Coordinates and Operational Spaces}

As defined in Section~\ref{subsec:info-geo}, the natural parameters $\boldsymbol{\theta} \in \Theta$ (parameter space) and expectation parameters $\boldsymbol{\eta} \in \mathcal{H}$ (data/moment space) constitute dual coordinate systems on $\mathcal{M}$ via Legendre transformation. This duality induces a fundamental correspondence:

\begin{itemize}
    \item \textbf{Parameter-side operations} (masking $\boldsymbol{S}_p$, pruning $\mathcal{P}_p$, quantization $\mathcal{Q}$) act directly on the $\boldsymbol{\theta}$-coordinates, constraining or discretizing the natural parameter trajectory;
    \item \textbf{Data-side operations} (selection $\mathcal{S}_d$, pruning $\mathcal{P}_d$, augmentation $\mathcal{A}$) perturb the empirical distribution, effectively modifying the $\boldsymbol{\eta}$-coordinates and their induced geometry on $\Theta$ via the inverse Fisher metric $g^{ij}(\boldsymbol{\theta})$.
\end{itemize}

\subsubsection{Volume Reduction as Unified Objective}

Under the Fisher-Rao metric $g_{ij}(\boldsymbol{\theta})$, both data pruning $\mathcal{P}_d$ and parameter pruning $\mathcal{P}_p$ correspond to \emph{reducing the effective dimensionality} of the statistical manifold $\mathcal{M}$:

\begin{itemize}
    \item \textbf{Data pruning $\mathcal{P}_d$} restricts the empirical support of the distribution, effectively constraining the sufficient statistics span in $\boldsymbol{\eta}$-space and reducing the induced degrees of freedom in $\Theta$;
    \item \textbf{Parameter pruning $\mathcal{P}_p$} constrains $\theta$ to a submanifold where masked coordinates are fixed, directly reducing the dimension of the $\theta$-coordinate system.
\end{itemize}

Geometrically, both operations shrink the invariant volume form $\sqrt{\det(g_{ij})}$ of the effective submanifold, revealing their shared geometric mechanism: dimensionality reduction while preserving the dominant Fisher information directions.

\subsubsection{Interpretive Correspondences via the $\alpha$-Structure}

The geometric foundation further motivates the preliminary correspondences noted in Section~\ref{subsec:info-geo}, interpreted as:

\begin{itemize}
    \item \textbf{Pruning/Sparsification:} Both $\mathcal{P}_d$ and $\mathcal{P}_p$ reduce effective degrees of freedom, interpreted as restricting the statistical manifold to lower-dimensional submanifolds (mixture-flat for data, exponential-flat for parameters)~\cite{sato2023information}.
    
\item \textbf{Quantization/Distillation:} Data quantization/distillation and parameter quantization $\mathcal{Q}$ both compress information by discretizing coordinates; geometrically, both reduce the effective resolution of the underlying metric structure and can, under typical conditions, approximately preserve the dominant directions of the Fisher information.
    
 \item \textbf{Augmentation/Regularization:} Data augmentation $\mathcal{A}$ induces stochastic perturbations in $\boldsymbol{\eta}$-space, while parameter regularization constrains $\theta$-space trajectories; both act as geometric regularizers that can smooth the loss landscape and may reduce the anisotropy of $g_{ij}(\boldsymbol{\theta})$.
\end{itemize}
\begin{figure}[t]
    \centering
    \includegraphics[width=0.5\textwidth]{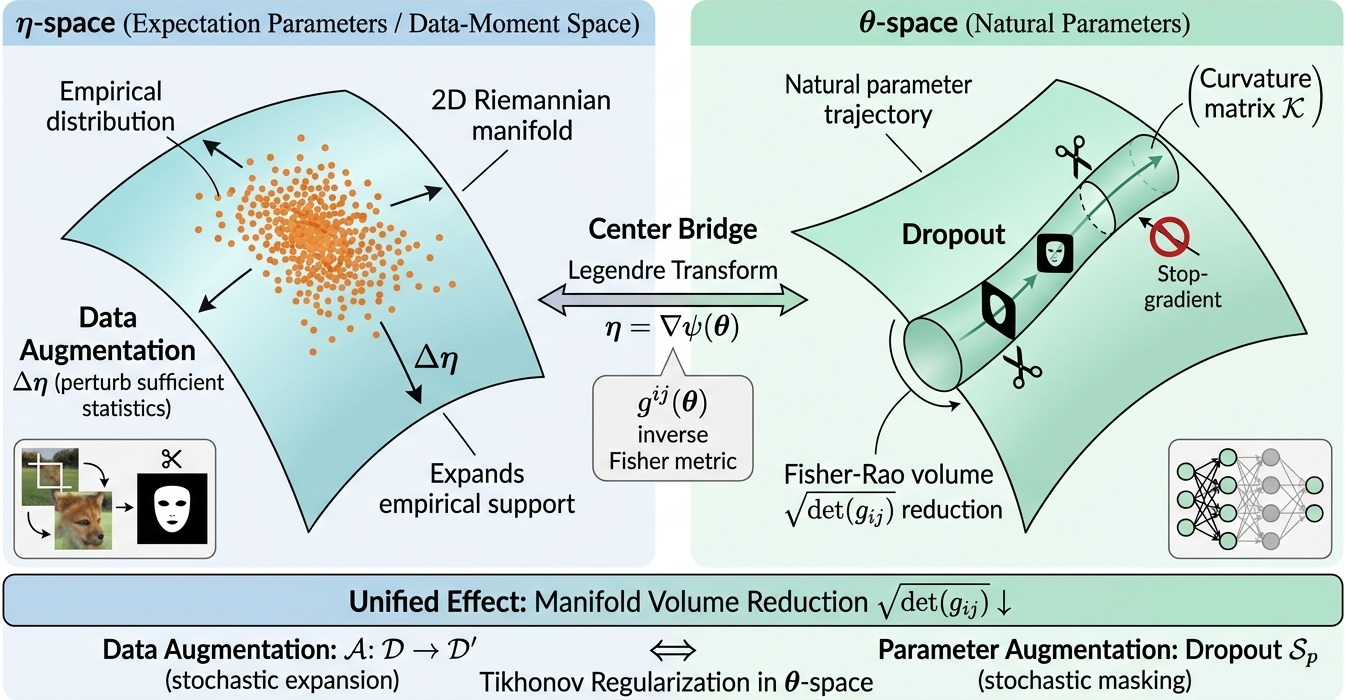}
    \caption{\textbf{Augmentation Correspondence:} Data augmentation (left) and parameter augmentation/dropout (right) as dual mechanisms for Fisher-Rao volume reduction on statistical manifold $\mathcal{M}$. (\textit{Left}) Data augmentation expands the empirical distribution support in $\eta$-space (perturbing sufficient statistics $\Delta \eta$), which induces an implicit geometric compression in the dual $\theta$-coordinates via the inverse Fisher metric $g^{ij}(\theta)$. (\textit{Right}) Dropout directly constrains the natural parameter trajectory in $\theta$-space through stochastic masking (reducing accessible volume $\sqrt{\det(g_{ij})}$), with stop-gradient ensuring irreversibility. Both operations are operationally equivalent to anisotropic Tikhonov regularization $\|\theta - \theta_0\|_{\mathcal{K}}^2$ in parameter space, unifying input-space and parameter-space regularization.}
    \label{fig:augmentation-correspondence}
\end{figure}

\subsection{Augmentation Correspondence: Data Augmentation and Parameter Augmentation}
\label{subsubsec:augmentation-correspondence}

Fig.~\ref{fig:augmentation-correspondence} illustrates the duality between data-side and parameter-side regularization through the lens of Fisher-Rao geometry. \textbf{Left panel} ($\eta$-space): Data augmentation operates by expanding the empirical distribution support, perturbing sufficient statistics $\Delta\boldsymbol{\eta}$ in the expectation parameter space; via the inverse Fisher metric $g^{ij}(\theta)$, this induces an implicit geometric compression in the dual $\theta$-coordinates. \textbf{Right panel} ($\theta$-space): Dropout and its variants directly constrain the natural parameter trajectory through stochastic masking, explicitly reducing the accessible Fisher-Rao volume $\sqrt{\det(g_{ij})}$. Both mechanisms achieve unified regularization via anisotropic Tikhonov penalization $\|\theta - \theta_0\|_{\mathcal{K}}^2$, revealing that input-space augmentation and parameter-space stochasticity are operationally equivalent manifestations of manifold volume reduction on $\mathcal{M}$.

A fundamental insight bridging data-centric and parameter-centric operations emerges from the reinterpretation of regularization as augmentation. Bouthillier et al.~\cite{bouthillier2015dropout} established the seminal view that \textit{dropout} can be understood as a form of data augmentation: by randomly zeroing out activations (parameter-side operation), the network effectively generates an ensemble of different functions that map the same input $x$ to multiple stochastic outputs. This creates an implicit expansion of the training set in the output space, analogous to how traditional data augmentation generates multiple views $\{\mathcal{A}(x, z)\}_{z \sim \mu}$ of a single sample through input-space transformations $\mathcal{A}: \mathbb{R}^d \times \Omega \to \mathbb{R}^d$.

Recent work on \textit{Deep Augmentation}~\cite{bruel2025deep} refines and extends this correspondence by demonstrating that the efficacy of dropout as augmentation is \textit{layer-dependent}. Rather than applying dropout uniformly, selectively targeting deeper layers (where high-level semantic features reside) yields substantial gains in contrastive learning. This reveals a crucial duality: while input-level augmentation perturbs the raw data distribution in the \textit{input space} (analogous to operations on the $\boldsymbol{\eta}$-coordinates in the data-moment space), layer-targeted dropout perturbs the \textit{representation space} by introducing stochasticity into the parameter trajectory (operations on $\theta$-coordinates).

The information-geometric framework established in Section~\ref{subsec:info-geo} provides the natural language to unify these perspectives. Recall that the Fisher-Rao metric $g_{ij}(\boldsymbol{\theta})$ governs the geometry of the statistical manifold $\mathcal{M}$, where data-space and parameter-space operations constitute dual coordinate systems:

\paragraph{Data Augmentation as $\boldsymbol{\eta}$-Space Perturbation}
Traditional augmentation (e.g., cropping, masking, or PCA-based feature removal) perturbs the empirical data distribution, effectively modifying the expectation parameters ($\boldsymbol{\eta}$-space). In the dual $\theta$-coordinate system, this induces a geometric compression via the inverse Fisher metric $g^{ij}(\boldsymbol{\theta})$, reducing the effective volume of the manifold accessible to the model.

\paragraph{Dropout as $\theta$-Space Perturbation}
Conversely, dropout and its variants (such as Deep Augmentation with stop-gradient) act directly on the natural parameters $\theta$ by masking or quantizing specific components $\theta^i$. Under the Legendre transformation $\eta_i = \partial \psi / \partial \theta^i$, this constrains the trajectory of the model in $\theta$-space, similarly reducing the invariant Fisher-Rao volume $\sqrt{\det(g_{ij})}$.

\paragraph{Unified Augmentation Objective}
Both paradigms achieve regularization through \textit{manifold volume reduction}, yet they operate via complementary mechanisms:
\begin{itemize}
    \item \textbf{Data augmentation} expands the support of the empirical distribution in $\boldsymbol{\eta}$-space, encouraging the model to learn invariances to input transformations;
    \item \textbf{Parameter augmentation} (dropout) restricts the effective capacity of the model in $\theta$-space by preventing co-adaptation of features~\cite{bruel2025deep}, effectively creating an ensemble of sub-models that share parameters but activate different pathways.
\end{itemize}

\paragraph{Irreversibility as the Key Link}
The efficacy of parameter augmentation relies on \textit{stop-gradient}~\cite{bruel2025deep}: by preventing gradient flow through the augmented layer, the perturbation becomes irreversible in $\theta$-space, mirroring how input augmentations irrevocably transform the data in $\boldsymbol{\eta}$-space. This symmetry ensures that both operations genuinely expand the effective training distribution rather than merely regularizing the model.

Thus, data augmentation and parameter augmentation emerge as dual manifestations of the same geometric principle: both perturb the sufficient statistics of the learning problem---one by enriching the data moments ($\Delta \boldsymbol{\eta}$), the other by constraining the natural parameters ($\Delta \theta$)---to reduce the effective complexity of the statistical manifold and improve generalization.

\paragraph{Explicit Regularization Equivalence}
The geometric volume reduction described above admits an equivalent Tikhonov regularization interpretation in $\theta$-space. Consider input perturbations $\tilde{x} = \mathcal{A}(x, \xi)$ with $\mathbb{E}[\xi]=0$ and $\mathrm{Cov}(\xi)=\Sigma$. Expanding the augmented loss $\mathcal{L}_{\mathrm{aug}}$ to second order:
\begin{equation}
\mathcal{L}_{\mathrm{aug}}(\boldsymbol{\theta}) = \mathcal{L}(\boldsymbol{\theta}) + \frac{1}{2}\mathbb{E}_{x}\left[\nabla_x \ell^\top \Sigma \nabla_x \ell\right] + O(\|\xi\|^4),
\end{equation}
where $\nabla_x \ell = J_x^\top \nabla_y \ell$ via the chain rule with $J_x = \frac{\partial \Phi}{\partial x}$ the input Jacobian. Under the Fisher-Rao metric $g_{ij}(\boldsymbol{\theta})$, this induces an implicit parameter-space regularizer:
\begin{equation}
\mathcal{R}(\boldsymbol{\theta}) = \frac{1}{2}\mathrm{tr}\left(\Sigma \cdot J_x^\top H_{\theta} J_x\right) \approx \frac{1}{2}\|\boldsymbol{\theta} - \boldsymbol{\theta}_0\|^2_{\mathcal{K}},
\end{equation}
where $H_{\theta} = \nabla^2_{\theta}\mathcal{L}$ and $\mathcal{K} = J_\theta^\top \mathbb{E}[\nabla_x \ell \nabla_x \ell^\top] J_\theta$ is the induced curvature matrix. This reveals that data augmentation in $\boldsymbol{\eta}$-space (perturbing sufficient statistics) is operationally equivalent to an anisotropic Tikhonov regularization in $\theta$-space constraining the natural parameter trajectory, complementing the geometric view in Fig.~\ref{fig:master-overview}.

\paragraph{Relation to Implicit Semantic Data Augmentation.}
The Tikhonov equivalence derived above establishes that \textit{explicit} input-space augmentation induces an implicit regularizer in $\theta$-space.
A complementary instantiation of this principle is found in \textit{implicit} semantic data augmentation (ISDA)~\cite{wang2021regularizing}.
ISDA operates within the deep feature space by estimating the intra-class covariance matrix $\boldsymbol{\Sigma}$, thereby capturing semantic transformations (e.g., background substitution, viewpoint alteration) without explicit sample generation.
Instead, ISDA minimizes a tight upper bound on the expected cross-entropy loss, yielding a computationally efficient robust objective.
In the information-geometric picture developed here, the covariance $\boldsymbol{\Sigma}$ encodes the local structure of the data manifold in $\boldsymbol{\eta}$-space, and the induced robust loss corresponds to an anisotropic constraint on the parameter trajectory in $\boldsymbol{\theta}$-space.
Consequently, ISDA furnishes an empirical corroboration of the data-parameter correspondence articulated in this section: even when augmentation is performed \textit{implicitly} at the feature level, its regularizing effect is geometrically realized as a volume-reducing constraint on the statistical manifold $\mathcal{M}$.

\subsection{In-Context Learning as Implicit Gradient: The SDFT Mechanism}
\label{subsubsec:sdft-correspondence}

A compelling empirical realization of the data-parameter correspondence emerges from recent advances in self-distillation fine-tuning (SDFT)~\cite{shenfeld2026self}, which reveals that in-context learning (ICL) implicitly induces gradient-like updates in the parameter space without explicit backpropagation through the demonstration data. 

The theoretical foundation for this view stems from the finding that transformers perform in-context learning via implicit gradient descent~\cite{vonoswald2023transformers, dai2023gpt, akyurek2023learning}. Specifically, when a model is conditioned on demonstrations $c$, its forward computation on a query $x$ effectively implements an implicit optimization step on an underlying loss landscape, analogous to parameter updates induced by gradient descent. This establishes that data-space demonstrations $c$ encode \emph{virtual gradient information} that can be transferred to the parameter space.

\begin{figure}[t]
    \centering
    \includegraphics[width=0.48\textwidth]{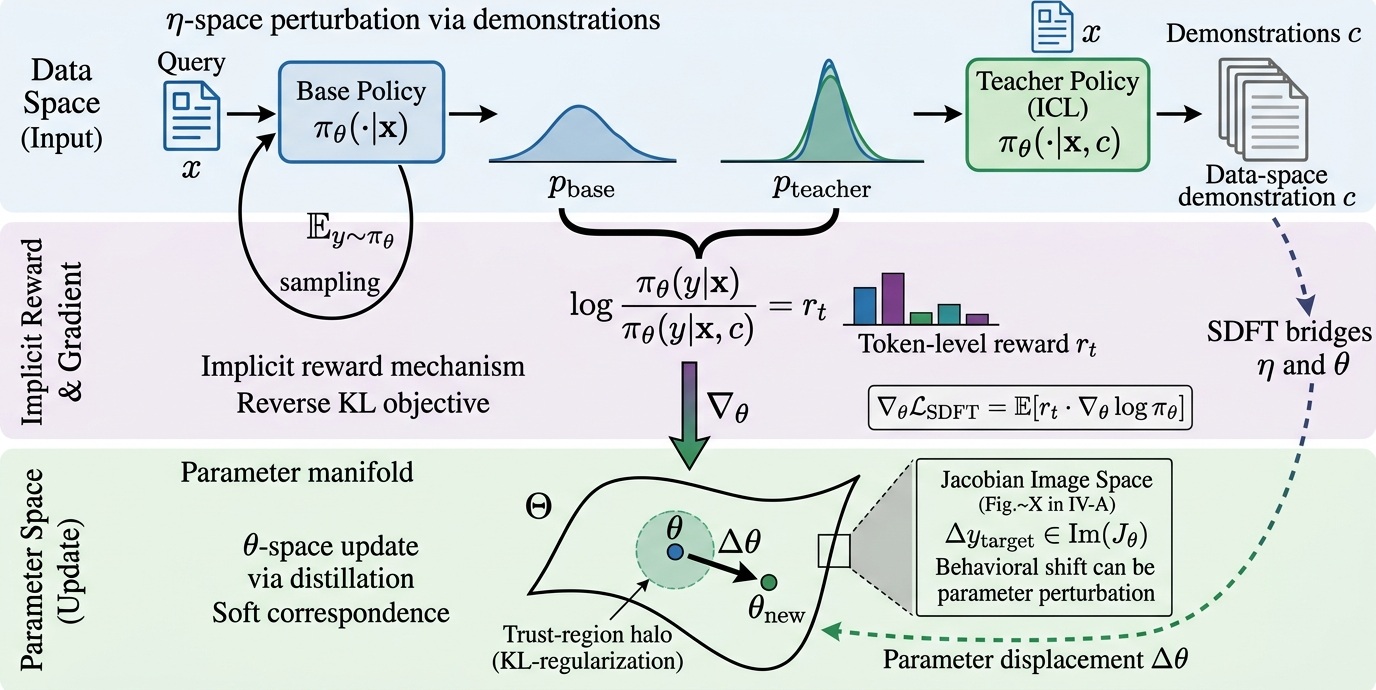}
    \caption{\textbf{SDFT as Data-Parameter Correspondence:} The self-distillation fine-tuning mechanism implements a \textit{soft correspondence} between data-space conditioning (demonstrations $c$) and parameter-space adaptation. (\textit{Top}) Demonstrations $c$ induce a teacher policy $\pi_{\theta}(\cdot|x,c)$ (right) that sharpens the base policy $\pi_{\theta}(\cdot|x)$ (left), creating a distribution shift $\Delta y_{\text{target}}$ in the output space. (\textit{Middle}) The KL divergence ratio $\log\frac{\pi_{\theta}(y|x)}{\pi_{\theta}(y|x,c)}$ serves as an implicit token-level reward $r_t$, enabling on-policy distillation without explicit reward modeling. (\textit{Bottom}) Via the Jacobian image space (Sec.~\ref{subsec:jacobian-framework}), this data-space perturbation induces a parameter displacement $\Delta\theta$ satisfying $J_{\theta}\Delta\theta \approx \Delta y_{\text{target}}$, effectively translating ICL conditioning into gradient-like updates while preserving prior capabilities through KL-regularized trust regions.}
    \label{fig:sdft-mechanism}
\end{figure}

Fig.~\ref{fig:sdft-mechanism} illustrates the three-stage SDFT mechanism that bridges data-space conditioning and parameter-space adaptation. \textbf{Top panel}: Demonstrations $c$ act as a perturbation in $\boldsymbol{\eta}$-space (expectation parameters), inducing a teacher policy $\pi_{\theta}(\cdot|x,c)$ that sharpens the base policy $\pi_{\theta}(\cdot|x)$ and creates a target distribution shift $\Delta y_{\text{target}}$ in the output space. \textbf{Middle panel}: The reverse KL objective extracts an implicit token-level reward $r_t = \log\frac{\pi_{\theta}(y|x)}{\pi_{\theta}(y|x,c)}$ from the data-space conditioning, enabling on-policy distillation without explicit reward modeling. \textbf{Bottom panel}: Through the Jacobian image space framework (Sec.~\ref{subsec:jacobian-framework}), this data-induced behavioral shift is realized as a parameter displacement $\Delta\theta$ satisfying $J_\theta \Delta\theta \approx \Delta y_{\text{target}}$, effectively translating ICL conditioning into gradient-like updates within a KL-regularized trust region that preserves prior capabilities.

The SDFT framework exploits this duality between the model's behavior under different input conditions: consider the base student policy $\pi_{\theta}(\cdot|x)$ operating on raw inputs $x \in \mathcal{X}$, and the teacher policy $\pi_{\theta}(\cdot|x,c)$ conditioned on expert demonstrations $c \in \mathcal{D}$. Mathematically, the demonstration $c$ acts as a perturbation in the data space (analogous to the $\boldsymbol{\eta}$-coordinates in the information-geometric framework of Section~\ref{subsubsec:augmentation-correspondence}), shifting the model's output distribution toward task-optimal behavior. The key insight is that this data-space perturbation induces a corresponding displacement in the parameter manifold $\Theta$ through the reverse KL divergence objective:
\begin{equation}
\begin{aligned}
\nabla_{\theta}\mathcal{L}_{\text{SDFT}} = \mathbb{E}_{y\sim\pi_{\theta}}\Bigg[ &\sum_{t}\log\frac{\pi_{\theta}(y_{t}|y_{<t},x)}{\pi_{\theta}(y_{t}|y_{<t},x,c)} \\
&\times \nabla_{\theta}\log\pi_{\theta}(y_{t}|y_{<t},x)\Bigg].
\end{aligned}
\end{equation}
This objective effectively translates the data-side operation (demonstration conditioning) into a parameter-side update. From the Jacobian Image Space perspective introduced in Section~\ref{subsec:jacobian-framework}, the ICL-conditioned teacher $\pi(\cdot|x,c)$ resides in the image space $\operatorname{Im}(J_\theta)$, meaning the behavioral shift induced by the demonstration can be realized as a parameter perturbation $\Delta\theta$ such that $J_\theta \Delta\theta \approx \Delta y_{\text{target}}$, where $\Delta y_{\text{target}}$ represents the desired output modification.

\begin{table*}[t]
\caption{Roadmap of Novel Data-Parameter Correspondences in Section~IV}
\label{tab:roadmap_iv}
\centering
\footnotesize
\setlength{\tabcolsep}{5pt}
\renewcommand{\arraystretch}{1.15}
\begin{tabular}{@{}clllp{6.2cm}@{}}
\toprule
\textbf{Sub.} & \textbf{Correspondence} & \textbf{Stage} & \textbf{Dim.} & \textbf{Core Insight} \\
\midrule
\ref{subsec:jacobian-framework} & Frozen-Weight Inference & 3 (Inference) & Structural & Jacobian image spaces $\operatorname{Im}(J_x)$ and $\operatorname{Im}(J_\theta)$ establish local linear duality for immutable $\theta_0$ \\
\ref{subsec:low-rank} & Low-Rank Subspace & 3 $\to$ 1 & Structural & ICL (sample-induced) and LoRA (parameterized) explore identical Grassmannian $\mathcal{G}(r,d)$; $k$-shot $\approx$ rank-$r$ budget equivalence \\
\midrule
\ref{subsec:continual} & Continual Learning & 1 (Training) & Temporal & Replay ($\eta$-space support) and EWC ($\theta$-space curvature) impose dual trajectory constraints on $\mathcal{M}$; buffer size $k \propto 1/\lambda$ \\
%\ref{subsec:token-shapley} & Cooperative Game & 2 (Post-Training) & Attribution & Model Shapley (parameter importance) and Token Shapley (vocabulary importance) share symmetric second-order Fisher-aware scores under frozen $\theta^*$ \\
\midrule
\ref{subsec:conceptual-security} & Cooperative Attack & 1 $\to$ 2 & Security (Offensive) & Data poisoning $\Delta\mathcal{D}$ and parameter backdoors $\Delta\theta$ exhibit synergistic min-min amplification; sequential imprinting $\to$ solidification \\
\ref{subsec:privacy-correspondence} & Privacy Cascading & 1 $\to$ 2 & Privacy & Data compression $\rho$ and parameter protection $\epsilon_P$ follow sub-additive composition $\epsilon_{\text{eff}} \leq \epsilon_D + \rho\epsilon_P$ (cascading vs. cooperative) \\
\ref{subsec:testing-defense} & Testing-Time Defense & 3 (Inference) & Security (Defensive) & Product constraint $\epsilon_d \cdot \gamma_p \leq C_{\text{budget}}$ couples input sanitization and Lipschitz regularization; $\gamma_p < 1$ necessary for long-form safety \\
\midrule
\ref{subsec:composition} & Composition/Merging & 2 (Post-Training) & Compositional & Data mixing ($\eta$-space Bregman barycenter) and model merging ($\theta$-space barycenter) are dual geodesic interpolations; \textbf{independent from security} \\
\midrule
\ref{subsec:synthesis} & Manifold Synthesis & All stages & Unified & Fisher--Rao metric $g_{ij}(\theta)$ unifies all correspondences: data and parameters as dual coordinates ($\theta$ vs. $\eta$) on $\mathcal{M}$ \\
\bottomrule
\end{tabular}
\\[3pt]
\scriptsize \textit{Note:} Stage~3=Inference (frozen $\theta$), Stage~1=Training (plastic $\theta$), Stage~2=Post-Training (frozen $\theta$, calibration only). Dim.=Dimensional classification.
\end{table*}

Thus, SDFT exemplifies a practical instantiation of the data-parameter correspondence: data-side operations (demonstration-based ICL) and parameter-side operations (fine-tuning via distillation) converge through the shared geometric structure of the statistical manifold. This bypasses the need for explicit reward inference while preserving prior capabilities through the trust-region property inherent in the KL-regularized objective. Moreover, it establishes a concrete operational symmetry with data augmentation (Section~\ref{subsubsec:augmentation-correspondence}): while augmentation perturbs the empirical distribution in $\boldsymbol{\eta}$-space to regularize the loss landscape, SDFT demonstrates that conditioning on demonstrations in $\boldsymbol{\eta}$-space induces a specific parameter trajectory in $\theta$-space, effectively treating the data-context pair $(x,c)$ as a proxy for the optimal update direction implied by implicit gradient descent~\cite{vonoswald2023transformers}.

\section{Novel Data-Parameter Correspondences}
\label{sec:novel}

\noindent\textbf{Roadmap\footnote{\noindent\textit{Reading Guide:} The novel correspondences in this section extend the dual-coordinate framework ($\eta$-space vs $\theta$-space, Section~II-B) to low-rank adaptation, continual learning, and security contexts.}.} 
This section establishes eight novel data-parameter correspondences organized across five functional dimensions: \emph{structural inference} (IV-A/B), \emph{adaptive learning} spanning Stage~1 (training) and Stage~2 (post-training) (IV-C), \emph{security-robustness} (IV-D/E/F), \emph{compositional fusion} (IV-H), and \emph{manifold synthesis} (IV-G). 
Table~\ref{tab:roadmap_iv} provides a systematic navigational guide. 

The progression from IV-A to IV-H reflects a conceptual arc from \emph{local operational constraints} (Jacobian image spaces, low-rank subspaces) through \emph{temporal adaptation mechanisms} (continual learning, attribution) and \emph{adversarial robustness} (attack cooperation, privacy cascading, defense constraints) to \emph{compositional geometry} (barycentric interpolation) and final \emph{information-geometric unification}. 
Specifically, IV-C addresses Stage~1 (training-time plasticity), forming a temporal continuum within the learning dimension. 
IV-H (composition) operates at the same hierarchical level as the security triad (IV-D/E/F), addressing cross-sectional fusion rather than adversarial robustness.

\subsection{Frozen-Weight Inference via Jacobian Image Spaces}
\label{subsec:jacobian-framework}
The Fisher-Rao geometric framework (Section~\ref{subsec:geometric-correspondence}) characterizes the global structure of the statistical manifold $\mathcal{M}$ under parameter plasticity, applicable to training (Stage 1) and post-training calibration (Stage 2). However, Stage 3 (Inference) operates under frozen-parameter constraints: the weight vector $\boldsymbol{\theta}_0 \in \Theta$ remains immutable, gradient computation is infeasible, and any adaptation must be ephemeral---instantaneous and non-persistent. 

We introduce the \textbf{Jacobian Image Space Framework} as the natural computational realization of data-parameter duality for this regime. This framework provides the local linear approximation to the global Fisher-Rao geometry, unifying data-space interventions (e.g., in-context learning~\cite{shenfeld2026self}, prompt engineering) and parameter-space interventions (e.g., dynamic LoRA routing, adapter gating) through their shared action on the output tangent space.

\paragraph{First-Order Variational Decomposition.}
Consider the model $\Phi: \mathcal{X} \times \Theta \to \mathcal{Y}$ evaluated at fixed $(x, \boldsymbol{\theta}_0)$. The first-order variation decomposes output perturbations into three orthogonal contributions:
\begin{equation}
\Delta y = \underbrace{J_x(x, \boldsymbol{\theta}_0)\Delta x}_{\text{Data-side}} + \underbrace{J_\theta(x, \boldsymbol{\theta}_0)\Delta\boldsymbol{\theta}}_{\text{Parameter-side}} + \underbrace{\Delta y_{\text{direct}}}_{\text{Decoder-side}},
\label{eq:jacobian-decomp}
\end{equation}
where $J_x = \frac{\partial\Phi}{\partial x}$ and $J_\theta = \frac{\partial\Phi}{\partial\boldsymbol{\theta}}$ are the input and parameter Jacobians, respectively; $\Delta y_{\text{direct}}$ subsumes both higher-order residual terms and direct output-space interventions (e.g., logit bias, classifier-free guidance) that bypass the input and parameter tangent spaces. This decomposition reveals that data-space and parameter-space operations constitute dual coordinates on the shared output tangent space $\mathcal{T}_y\mathcal{M}$.

\begin{figure}[h]
    \centering
    \includegraphics[width=0.49\textwidth]{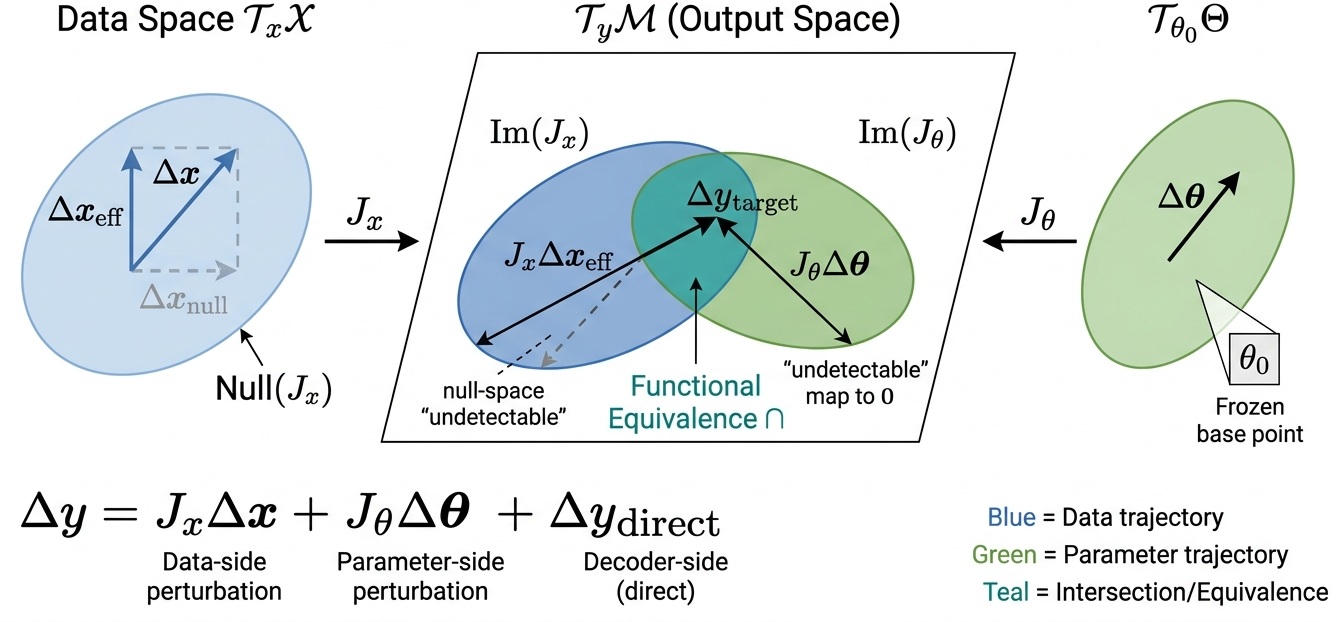}\vspace{-0.1in}
    \caption{\textbf{Jacobian Image Space Framework:} Local linear duality between data-space and parameter-space operations under frozen weights (Stage 3). The input Jacobian $J_x$ and parameter Jacobian $J_\theta$ map perturbations from their respective tangent spaces onto the shared output tangent space $T_y\mathcal{M}$. The intersection $\operatorname{Im}(J_x) \cap \operatorname{Im}(J_\theta)$ (teal region) represents the \textit{functional equivalence} set where targets can be achieved via either prompting ($\Delta x$) or transient adaptation ($\Delta\theta$). Null spaces $\operatorname{Null}(J_x)$ and $\operatorname{Null}(J_\theta)$ (dashed arrows) induce geometric undetectability---perturbations along these directions produce no output change, establishing fundamental limits for adversarial detection. When $\Delta y_{\text{target}}$ lies outside the joint image space, direct decoder-side manipulation ($\Delta y_{\text{direct}}$) becomes necessary.}
    \label{fig:jacobian-image-space}
\end{figure}

Fig.~\ref{fig:jacobian-image-space} visualizes the local linear duality established in Eq.~\eqref{eq:jacobian-decomp}. \textbf{Left (Data Space $\mathcal{T}_x\mathcal{X}$)}: Input perturbations $\Delta x$ decompose into effective components $\Delta x_{\text{eff}} \in \operatorname{Row}(J_x)$ (solid arrow) and null-space components $\Delta x_{\text{null}} \in \operatorname{Null}(J_x)$ (dashed arrow); only the former induces output change via $J_x\Delta x_{\text{eff}}$. \textbf{Right (Parameter Space $\mathcal{T}_{\theta_0}\Theta$)}: Parameter perturbations $\Delta\theta$ similarly map through $J_\theta$, with $\operatorname{Null}(J_\theta)$ representing frozen-weight directions that leave outputs invariant. \textbf{Center (Output Space)}: The intersection $\operatorname{Im}(J_x) \cap \operatorname{Im}(J_\theta)$ (teal region) defines \emph{functional equivalence}---targets $\Delta y_{\text{target}}$ in this set are achievable via either prompting ($\Delta x$) or transient adaptation ($\Delta\theta$), quantifying the operational symmetry between data-side and parameter-side interventions under frozen weights.

\paragraph{Functional Equivalence via Image Space Intersection}
The feasible set of inference-time interventions is governed by the \emph{image spaces} $\operatorname{Im}(J_x)$ and $\operatorname{Im}(J_\theta)$. For any target output variation $\Delta y_{\text{target}}$ lying in the intersection $\operatorname{Im}(J_x) \cap \operatorname{Im}(J_\theta)$, there exist equivalent realizations via either data perturbation $\Delta x$ (e.g., in-context learning~\cite{brown2020language,min2022rethinking}, prompt engineering~\cite{liu2023pre}) or instantaneous parameter perturbation $\Delta\theta$ (e.g., dynamic LoRA routing~\cite{hu2021lora}, adapter gating~\cite{houlsby2019parameter}, or multi-adapter mixing~\cite{pfeiffer2021adapterfusion}):
\begin{equation}
\begin{aligned}
&\exists\, \Delta x \in \mathcal{T}_x\mathcal{X},\; \Delta\theta \in \mathcal{T}_{\theta_0}\Theta \\
&\quad\text{such that}\quad 
J_x\Delta x = J_\theta\Delta\theta = \Delta y_{\text{target}}.
\end{aligned}
\end{equation}
This functional equivalence implies that the choice between demonstration-based prompting (Table~\ref{tab:stages}, Stage~3 Data-Side) and transient parameter modulation (Stage~3 Parameter-Side) is determined by geometric alignment: the optimal path corresponds to which image space provides the minimal-norm solution for the specific $\Delta y_{\text{target}}$.

\paragraph{Null Spaces and Geometric Undetectability}
The null spaces $\operatorname{Null}(J_x)$ and $\operatorname{Null}(J_\theta)$ characterize perturbations that are \emph{orthogonal to all effective output directions}, producing zero change in $\Delta y$ regardless of magnitude. This yields a geometric notion of \textbf{undetectability}: an adversarial perturbation $\Delta x = \Delta x_{\text{eff}} + \Delta x_{\text{null}}$, with $\Delta x_{\text{eff}} \in \operatorname{Row}(J_x)$ and $\Delta x_{\text{null}} \in \operatorname{Null}(J_x)$, achieves the target objective through the effective component while the null-space component provides masking. Detection schemes monitoring only output changes cannot distinguish $\Delta x$ from $\Delta x + \Delta x_{\text{null}}$, establishing a fundamental limit on adversarial detectability in frozen-weight regimes (see Section~\ref{subsec:adversarial} for security implications).

\paragraph{Intermediate Layer Operations.}
The framework extends to intermediate-layer interventions. Feature calibration (e.g., dynamic normalization) corresponds to perturbations in the input space of layer $l$, propagated via the chain rule through subsequent Jacobians $J_{x}^{(l)} = \frac{\partial h_L}{\partial h_l}$~\cite{bruel2025deep}. Similarly, KV cache manipulation---reordering position encodings or adjusting causal masks---constitutes structured perturbations in the input tangent space of attention layers~\cite{vaswani2017attention,dai2019transformer}.

\paragraph{Decoder-Side Correction as Complementary Space.}
When the target variation lies outside the joint image space, $\Delta y_{\text{target}} \notin \operatorname{Im}(J_x) + \operatorname{Im}(J_\theta)$, neither data-side nor parameter-side operations suffice. This necessitates \emph{direct output manipulation} (Table~\ref{tab:stages}, Stage~3 decoding operations such as classifier-free guidance~\cite{ho2022classifier}, self-debiasing~\cite{schick2021self}, or logit biasing), corresponding to the $\Delta y_{\text{direct}}$ term in Eq.~\eqref{eq:jacobian-decomp}. Energy-guided sampling methods (e.g., Langevin dynamics in latent space)~\cite{du2020energy,xiao2021energy} similarly bypass forward Jacobian constraints via iterative optimization in the output space. A concrete example is FlashSampling~\cite{ruiz2026flasampling}, which fuses sampling into the LM-head to accelerate decoding without altering $x$ or $\theta$, thereby operating purely in $\Delta y_{\text{direct}}$.

\paragraph{Computational Realization via Automatic Differentiation}
Explicit Jacobian materialization is prohibitive for large models. The framework operationalizes through \textbf{Jacobian-Vector Products} (JVP) and \textbf{Vector-Jacobian Products} (VJP)~\cite{frostig2018compiling}:
\begin{equation}
J_x \cdot v = \left.\frac{d}{d\epsilon}\Phi(x+\epsilon v, \theta_0)\right|_{\epsilon=0}, 
\qquad 
u^\top \cdot J_\theta = \text{VJP}_{\theta_0}(u),
\end{equation}
enabling efficient projection onto $\operatorname{Im}(J_x)$ and $\operatorname{Null}(J_x)$ without forming full matrices. This ensures the framework is algorithmically viable for inference-time alignment.

\paragraph{Relation to Global Information Geometry}
The Jacobian framework serves as the \emph{local linearization} of the Fisher-Rao structure. While the Fisher metric $g_{ij}(\theta) = \mathbb{E}_{x\sim p(x)}[(J_\theta^\top J_\theta)_{ij}]$ captures the expected geometry over the data distribution, the instantaneous image spaces $\operatorname{Im}(J_x(x,\theta_0))$ and $\operatorname{Im}(J_\theta(x,\theta_0))$ describe the \emph{operational constraints} at specific points $(x, \theta_0)$. Together, they form a complete hierarchy: Fisher-Rao provides the population-level duality between $\theta$ and $\boldsymbol{\eta}$, while the Jacobian framework provides the sample-level computational mechanism for Stage~3 alignment under immutable parameters.

\begin{figure}[t]
    \centering
    \includegraphics[width=0.95\columnwidth]{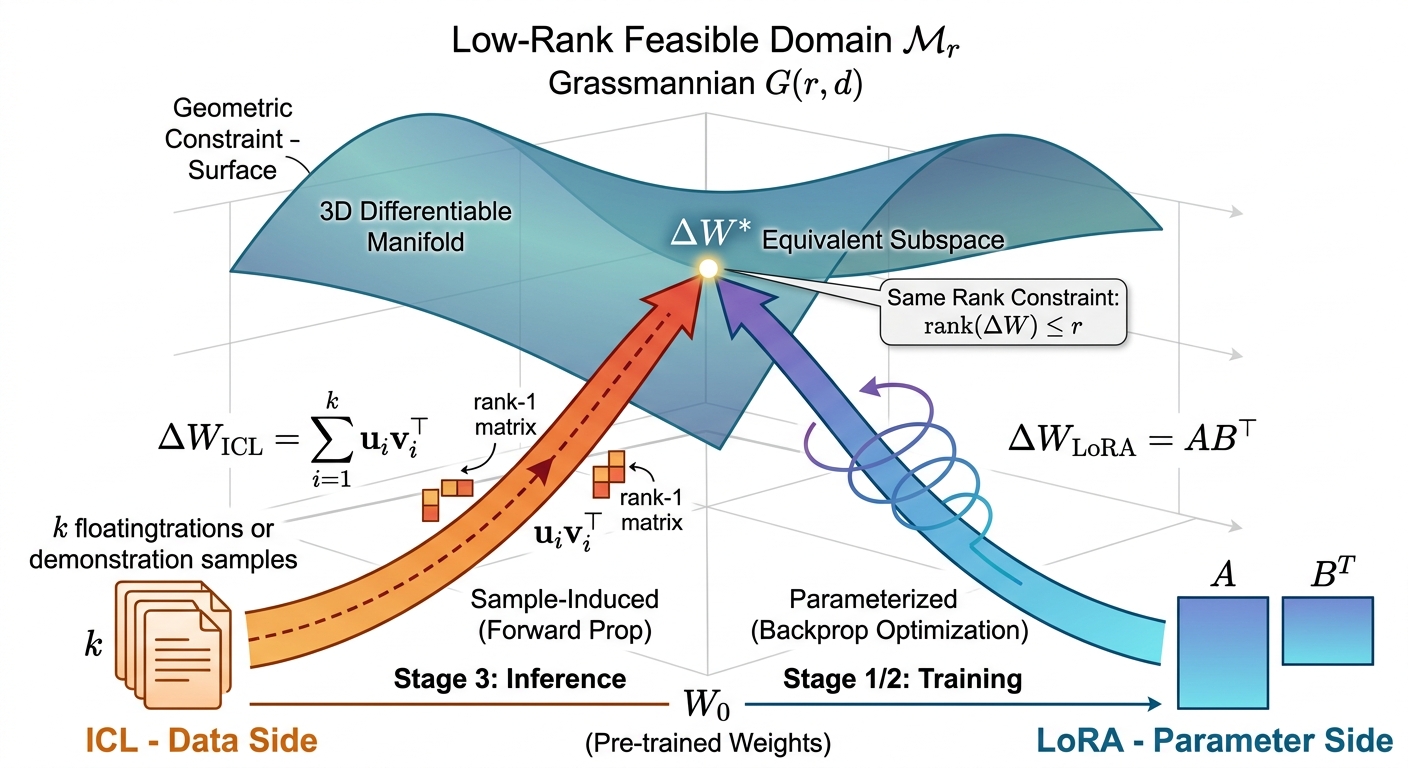}\vspace{-0.1in}
    \caption{The Low-Rank Correspondence: ICL (left) induces low-rank updates via sample composition (forward induction), while LoRA (right) explicitly parameterizes the same subspace via gradient optimization. Both operate on the identical geometric constraint $\mathcal{M}_r$.}
    \label{fig:iclora}
\end{figure}

\subsection{Low-Rank Correspondence: Sample-Induced vs. Parameterized Subspace Adaptation}
\label{subsec:low-rank}

While Section~\ref{subsec:jacobian-framework} establishes the local linear equivalence between data-space and parameter-space interventions via Jacobian image spaces, a more specific yet pervasive structural correspondence emerges when we restrict our attention to \emph{low-rank adaptation}---arguably the most prevalent form of parameter-efficient fine-tuning in modern LLMs. We observe that \emph{in-context learning} (ICL), a data-side operation that conditions the model on demonstration examples without parameter updates~\cite{brown2020language,min2022rethinking}, and \emph{low-rank adaptation} (LoRA), a parameter-side operation that explicitly factorizes weight updates~\cite{hu2021lora,zhang2023adalora}, constitute dual parameterizations of the same geometric object: a rank-constrained perturbation $\Delta W$ to the pre-trained weight matrix $W_0$.

Fig.~\ref{fig:iclora} illustrates this structural equivalence on the low-rank feasible domain $\mathcal{M}_r \subseteq \mathcal{G}(r,d)$. \textbf{Left (ICL/Data Side)}: Operating in Stage~3 with frozen weights $W_0$, ICL induces $\Delta W_{\text{ICL}} = \sum_{i=1}^k \boldsymbol{u}_i \boldsymbol{v}_i^\top$ via forward composition over $k$ demonstration samples~\cite{vonoswald2023transformers, akyurek2023learning}, effectively locating a subspace through sampling. \textbf{Right (LoRA/Parameter Side)}: Operating in Stage~1/2, LoRA explicitly parameterizes $\Delta W_{\text{LoRA}} = \boldsymbol{A}\boldsymbol{B}^\top$ and refines the subspace via backpropagation. \textbf{Geometric Unity}: Both operate on the identical Grassmannian constraint $\mathcal{G}(r,d)$, establishing that $k$-shot demonstration and rank-$r$ adaptation explore the same geometric object---one through sample-induced probing, the other through gradient-based optimization.

\subsubsection*{Dual Perspectives on Rank-Constrained Adaptation}

This constrained optimization admits two distinct operational realizations across the data-parameter boundary, corresponding respectively to explicit parameter-side optimization (LoRA) and implicit data-side approximation (ICL). Formally, both can be understood as solving a unified low-rank objective on the statistical manifold $\mathcal{M}$:
\begin{equation}
\min_{\Delta W} \mathcal{L}(W_0 + \Delta W) \quad \text{s.t.} \quad \operatorname{rank}(\Delta W) \leq r, \quad r \ll d,
\end{equation}
where $\mathcal{L}$ denotes the task loss and $d$ the parameter dimension. The two dual mechanisms are summarized in Table~\ref{tab:low-rank}.

\begin{table*}[t]
\centering
\footnotesize
\setlength{\tabcolsep}{3pt}
\caption{Low-Rank Correspondence: ICL as Sample-Induced vs. LoRA as Parameterized Subspace Adaptation}
\label{tab:low-rank}
\begin{tabular}{@{}p{2.2cm}p{3.5cm}p{3.5cm}p{4.5cm}@{}}
\toprule
\textbf{Dimension} & \textbf{Data-Side (ICL)} & \textbf{Parameter-Side (LoRA)} & \textbf{Unified Geometric Interpretation} \\
\midrule
Operational Space & Demonstration set $\{x_i\}_{i=1}^k$ (Stage~3) & Low-rank matrices $\boldsymbol{A} \in \mathbb{R}^{d \times r}, \boldsymbol{B} \in \mathbb{R}^{r \times d}$ (Stage~1/2) & Grassmannian manifold $\mathcal{G}(r,d)$ of $r$-dimensional subspaces \\
\midrule
Parameterization & $\Delta W_{\text{ICL}} = \sum_{i=1}^k \boldsymbol{u}_i \boldsymbol{v}_i^\top$ (implicit, forward-induced) & $\Delta W_{\text{LoRA}} = \boldsymbol{A}\boldsymbol{B}^\top$ (explicit, backprop-optimized) & Rank constraint $\operatorname{rank}(\Delta W) \leq r$ \\
\midrule
Optimization Mechanism & \textbf{Sampling}: Select $k$ informative examples to \emph{locate} a subspace & \textbf{Refinement}: Gradient descent to \emph{optimize} within the subspace & Sampling-optimization duality on $\mathcal{M}_r = \{W : \operatorname{rank}(W-W_0) \leq r\}$ \\
\midrule
Budget Analogy & Context length $k$ (sample budget) & Rank $r$ (parameter budget) & Effective degrees of freedom $k \approx r$ \\
\bottomrule
\end{tabular}
\end{table*}

\paragraph*{Structural Insight}
The ICL paradigm induces a low-rank update through the composition of attention mechanisms over demonstration tokens. Recent theoretical analysis reveals that transformers perform ICL via implicit gradient descent~\cite{vonoswald2023transformers,dai2023gpt,akyurek2023learning}, where the forward computation on demonstration examples $c$ effectively implements an optimization step on an underlying loss landscape. This induces a rank-structured perturbation $\Delta W_{\text{ICL}}$ spanned by the outer products of key-query interactions from the demonstration set.

Conversely, LoRA explicitly parameterizes the update as $\Delta W = \boldsymbol{A}\boldsymbol{B}^\top$, where the trainable matrices $\boldsymbol{A}$ and $\boldsymbol{B}$ are optimized via backpropagation~\cite{hu2021lora}. While AdaLoRA~\cite{zhang2023adalora} subsequently introduces adaptive budget allocation for the rank $r$, the fundamental geometric operation remains the exploration of the same low-rank submanifold $\mathcal{M}_r$.

\subsubsection*{Connection to Jacobian Image Spaces}

This low-rank correspondence specializes the general Jacobian duality established in Section~\ref{subsec:jacobian-framework}. While Eq.~\eqref{eq:jacobian-decomp} demonstrates that arbitrary perturbations satisfy $J_x \Delta x \approx J_\theta \Delta \theta$ in the output tangent space, the ICL-LoRA correspondence reveals that when we restrict $\Delta \theta$ to lie in a low-rank subspace (the column space of $\boldsymbol{A}$ in LoRA), the equivalent data-side perturbation $\Delta x$ (the demonstrations in ICL) effectively \emph{probe} this subspace through the Jacobian image $\operatorname{Im}(J_x) \cap \operatorname{Im}(J_\theta)$.

Specifically, the $k$ demonstration examples in ICL span an approximate subspace in the input tangent space $\mathcal{T}_x\mathcal{X}$, which, when propagated through $J_x$, projects onto the same low-rank submanifold of $\mathcal{T}_\theta\Theta$ that LoRA explicitly parameterizes. This suggests that ICL serves as a \emph{sample-driven subspace probe}, whereas LoRA serves as a \emph{parameterized subspace optimizer}---both operating within the identical geometric constraint of rank $\leq r$.

\subsubsection*{Gradient Interaction in Low-Rank Regimes}

This correspondence further refines the gradient interaction framework of Section~\ref{subsec:gradient-interaction}. The Gradient Interaction Matrix $\boldsymbol{M} \in \mathbb{R}^{N \times K}$ (Eq.~\eqref{eq:gradient_interaction_matrix}) characterizes the alignment between training samples and parameter dimensions. Under the low-rank constraint:

\begin{itemize}
    \item \textbf{Data Utility (Row-wise)}: The importance of a demonstration sample $n$ for ICL corresponds to its alignment with the dominant singular vectors of $\boldsymbol{M}$, effectively identifying which samples induce the most significant rank-1 components $\boldsymbol{u}_n \boldsymbol{v}_n^\top$.
    \item \textbf{Parameter Importance (Column-wise)}: In LoRA, the column-wise aggregation identifies which parameter dimensions (singular directions) should be retained in the low-rank factorization to maximize validation utility.
\end{itemize}

Thus, the singular value decomposition of the gradient interaction structure $\boldsymbol{M}$ provides the bridge: the left singular vectors guide ICL sample selection, while the right singular vectors guide LoRA subspace initialization.

\subsubsection*{Practical Implications: Hybrid Adaptation}

Recognizing this correspondence enables synergistic strategies unattainable through isolated consideration:

\paragraph*{Coarse-to-Fine Optimization} One may employ ICL to rapidly \emph{locate} a promising low-rank subspace (coarse selection via $k$ samples), followed by LoRA to \emph{refine} the solution within that subspace (fine optimization via gradient descent). This hybrid approach leverages the computational efficiency of frozen-weight inference (Stage~3) for subspace identification before committing to parameter updates (Stage~1).

\paragraph*{Budget Correspondence}
The framework suggests a qualitative correspondence between sample budget and rank budget: a $k$-shot ICL context and a rank-$r$ LoRA both operate within an effective subspace whose dimensionality is bounded by $\min(k, r)$. This observation offers a principled heuristic for cross-paradigm resource allocation---when inference latency dominates, increasing LoRA rank $r$ may substitute for additional ICL shots; when parameter storage dominates, increasing ICL shots $k$ may reduce the required LoRA rank.

\paragraph*{Subspace Alignment Verification}
Empirically, this correspondence can be tested by measuring the principal angles between the subspace spanned by ICL-induced updates (via SVD of the implicit gradient sum) and the column space of LoRA matrices $\boldsymbol{A}$. Convergence of these subspaces as $T \to \infty$ in LoRA training would indicate that both modalities ultimately explore the same dominant directions of the Fisher-Rao metric on $\mathcal{M}$.

\begin{remark}
This low-rank correspondence is distinct from the implicit gradient descent interpretation of ICL~\cite{vonoswald2023transformers}. While the latter focuses on the \emph{optimization dynamics} (how ICL mimics gradient steps), our focus here is on the \emph{geometric structure} (both methods inhabit the same rank-constrained feasible set). The optimization trajectory (algebraic in ICL, dynamic in LoRA) differs, but the terminal reachable submanifold is identical.
\end{remark}

\subsection{Continual Learning Correspondence: Data Replay as Implicit Parameter Regularization}
\label{subsec:continual}

\begin{figure}[t]
    \centering
    \includegraphics[width=0.95\columnwidth]{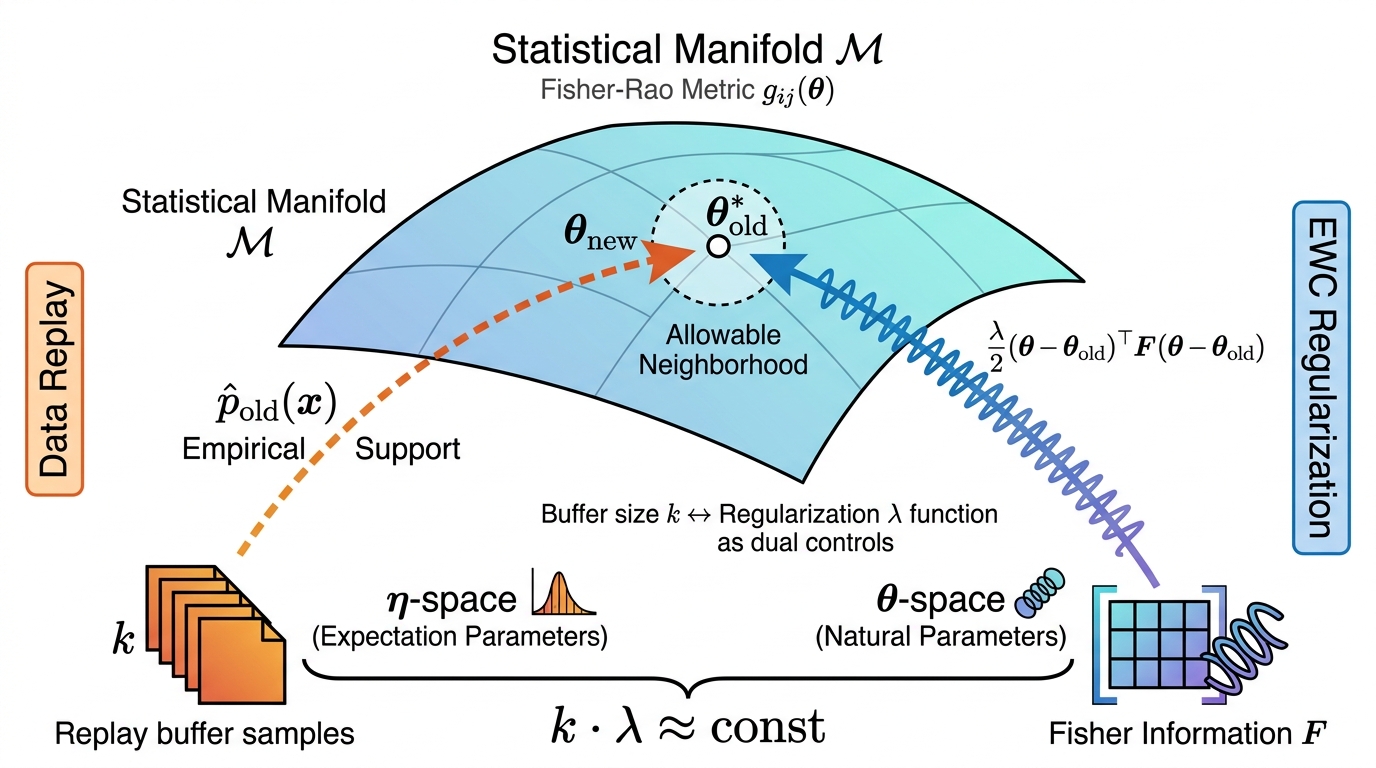}\vspace{-0.1in}
    \caption{The Continual Learning Correspondence: Data replay (left) constrains the learning trajectory by maintaining empirical distribution support in $\boldsymbol{\eta}$-space, while EWC (right) imposes curvature constraints via Fisher Information in $\theta$-space. Both mechanisms define the same allowable neighborhood around $\theta_{\text{old}}^*$ on manifold $\mathcal{M}$, with buffer size $k$ and regularization strength $\lambda$ acting as dual budget variables.}
    \label{fig:continue}
\end{figure}

\textbf{Boundary with existing content:} 
This section extends the data-parameter correspondence to the \emph{sequential learning} setting, distinct from prior sections in two aspects: (1) Unlike Section~\ref{subsubsec:augmentation-correspondence} which examines single-task adaptation via implicit gradient descent, we address \emph{multi-task} knowledge retention; (2) Unlike Section~\ref{subsec:low-rank} which focuses on \emph{structural} constraints (subspace dimensionality) for static adaptation, we examine \emph{trajectory} constraints (preventing deviation from previous task optima).

Fig.~\ref{fig:continue} visualizes the temporal duality in continual learning on the statistical manifold $\mathcal{M}$. \textbf{Left ($\boldsymbol{\eta}$-space)}: Data replay maintains an episodic buffer of $k$ past samples, preserving empirical distribution support in the expectation parameter space; this constrains the learning trajectory by ensuring the model remains within the moment space of previously observed data. \textbf{Right ($\theta$-space)}: Elastic Weight Consolidation (EWC) imposes curvature constraints via the Fisher Information Matrix $\boldsymbol{F}$, penalizing parameter movements along high-sensitivity directions in the natural parameter space. \textbf{Dual Budget}: Buffer size $k$ and regularization strength $\lambda$ act as interchangeable controls---small $k$ necessitates large $\lambda$ and vice versa---both defining the same allowable neighborhood around the old task optimum $\theta_{\text{old}}^*$ on $\mathcal{M}$ under the Fisher-Rao metric $g_{ij}(\theta)$.

\subsubsection*{Problem Setup: Two Camps in Continual Learning}

In continual learning (also known as lifelong learning), a model must sequentially learn tasks $\mathcal{T}_1, \mathcal{T}_2, \dots, \mathcal{T}_T$ while avoiding catastrophic forgetting of previous knowledge. Existing methodologies divide into two operational camps:

\paragraph*{Data-Side: Experience Replay} 
This approach maintains a fixed-size episodic memory buffer $\mathcal{B}$ (alternatively denoted as replay buffer with size $k$) storing exemplars from previous tasks. When learning new task $\mathcal{T}_t$, the model trains on both $\mathcal{T}_t$'s data and the replay buffer $\mathcal{B}$, effectively maintaining empirical support of past data distributions~\cite{shenfeld2026self}.

\paragraph*{Parameter-Side: Parameter Regularization} 
Represented by Elastic Weight Consolidation (EWC)~\cite{kirkpatrick2017overcoming} and its variants, this approach imposes quadratic constraints on parameter updates based on Fisher Information:
\begin{equation}
\mathcal{L}_{\text{CL}}(\theta) = \mathcal{L}_{\text{new}}(\theta) + \frac{\lambda}{2} \sum_k F_{kk}(\theta_{\text{old}})(\theta_k - \theta_{\text{old},k})^2,
\label{eq:ewc_loss}
\end{equation}
where $F_{kk} = \mathbb{E}_{x \sim \mathcal{D}_{\text{old}}} [(\partial \log p_\theta(x)/\partial \theta_k)^2]$ measures the sensitivity of parameter $k$ to the old data distribution, and $\lambda$ is the plasticity-stability trade-off coefficient per Table~\ref{tab:notation}.

Both strategies aim to constrain the learning trajectory on statistical manifold $\mathcal{M}$ to remain within an ``allowable neighborhood'' of the old task optimum $\theta_{\text{old}}^*$, yet they constitute dual implementations: \emph{replay} maintains empirical distribution support in $\boldsymbol{\eta}$-space (expectation parameters), while \emph{EWC} imposes curvature penalties directly in $\theta$-space (natural parameters).

\subsubsection*{Geometric Duality: Support Maintenance vs. Curvature Constraint}

Within the information geometric framework established in Section~\ref{subsec:info-geo} and Section~\ref{subsec:low-rank}:

\paragraph*{Data Replay as $\boldsymbol{\eta}$-Space Support}
The replay buffer $\mathcal{B}$ preserves an empirical estimate $\hat{p}_{\text{old}}(x)$ of the previous task's data distribution. In the expectation parameter space ($\boldsymbol{\eta}$-coordinates), this maintains the support of sufficient statistics, preventing the model from drifting entirely out of the moment space of old data during new task training~\cite{amari2016information}.

\paragraph*{EWC as $\theta$-Space Curvature Constraint}
EWC directly constrains movement in natural parameter space ($\theta$-coordinates) via the Fisher Information Matrix $\boldsymbol{F}$ (equivalently $g_{ij}(\theta)$ per Table~\ref{tab:notation}), which serves as the Riemannian metric on $\mathcal{M}$~\cite{martens2020new}. The diagonal elements $F_{kk}$ control the ``curvature'' along each parameter direction; large $F_{kk}$ indicates high sensitivity to changes affecting the old distribution. Geometrically, EWC restricts the model's trajectory to directions orthogonal to the high-curvature axes of the old task.

Thus, replay and EWC form dual regularization mechanisms: replay implicitly constrains the model via \emph{data distribution geometry}, while EWC explicitly constrains via \emph{parameter manifold geometry}. Both define the same feasible region on $\mathcal{M}$---the set of parameters that preserve old task performance---differing only in their operational realization.

\subsubsection*{Heuristic Correspondence: Replay Budget $k$ vs. Regularization Strength $\lambda$}

Consider the empirical Fisher Information computed from $k$ replay samples $\{x_i\}_{i=1}^k \sim p_{\text{old}}$:
\begin{equation}
\hat{F}_{kk}^{(k)} = \frac{1}{k} \sum_{i=1}^k \left( \frac{\partial \log p_{\theta_{\text{old}}^*}(x_i)}{\partial \theta_k} \right)^2.
\label{eq:empirical_fisher}
\end{equation}

When $k$ is limited, the empirical Fisher $\hat{\boldsymbol{F}}^{(k)}$ approximates the true $\boldsymbol{F}$ with estimation noise. This noise effectively \emph{softens} the constraint: smaller $k$ yields higher variance in $\hat{\boldsymbol{F}}^{(k)}$, equivalent to a weaker regularization in $\theta$-space. Conversely, EWC utilizes the exact Fisher (or its diagonal approximation) computed over all old data, with explicit coefficient $\lambda$ controlling constraint strength.

This suggests an \emph{empirical budget correspondence}:
\begin{equation}
k \cdot \lambda_{\text{eff}} \approx C,
\end{equation}
where $C$ depends on the task geometry (e.g., the eigenspectrum of $\boldsymbol{F}$). Under quadratic approximation of the new task loss, the forgetting amount in replay scales as $\text{Tr}(\boldsymbol{F}^{-1}\Sigma_{\text{new}})/k$, while in EWC it scales as $\text{Tr}(\boldsymbol{F}^{-1}\Sigma_{\text{new}})/\lambda$, yielding:
\begin{equation}
\lambda_{\text{eff}} \sim \frac{1}{k} \quad \text{(up to task-dependent scaling)}.
\end{equation}

Thus, the replay buffer size $k$ and the regularization coefficient $\lambda$ act as \emph{dual control variables} for the same geometric constraint: when memory is scarce (small $k$), one must increase $\lambda$ to compensate; when $k$ is large, even mild regularization suffices.

\subsubsection*{Unified Formulation: Constraint on Statistical Manifold}

Both strategies can be unified under the constrained optimization on $\mathcal{M}$:
\begin{equation}
\min_{\theta} \mathcal{L}_{\text{new}}(\theta) \quad \text{s.t.} \quad (\theta - \theta_{\text{old}}^*)^\top \boldsymbol{F}(\theta_{\text{old}}^*) (\theta - \theta_{\text{old}}^*) \leq \epsilon,
\end{equation}
where:
\begin{itemize}
    \item In \textbf{EWC}, $\boldsymbol{F}$ is the exact Fisher, and $\epsilon \propto 1/\lambda$ is determined by the regularization coefficient;
    \item In \textbf{Replay}, $\boldsymbol{F}$ is replaced by the empirical Fisher $\hat{\boldsymbol{F}}^{(k)}$, with effective constraint tightness $\epsilon_{\text{eff}} \sim 1/k$ determined by the buffer size.
\end{itemize}

Therefore, \emph{replay essentially estimates Fisher Information from data samples, while EWC utilizes Fisher Information directly in parameter space}. They represent the same geometric constraint under different observational conditions---finite samples versus exact second-order statistics.

\subsubsection*{Research Directions}

This correspondence suggests several empirically testable hypotheses:
\begin{enumerate}
    \item \textbf{Budget Equivalence Curves}: On standard continual learning benchmarks (e.g., Split-CIFAR or sequential instruction tuning), measure the substitution curve between replay buffer size $k$ and EWC coefficient $\lambda$ to verify $k \cdot \lambda \approx \text{const}$.
    \item \textbf{Hybrid Strategy Optimization}: Combining small-buffer replay (reduced $k$) with moderate EWC (intermediate $\lambda$) may achieve better Pareto fronts (forgetting vs. new task accuracy) than either strategy alone, leveraging the gradient interaction structure of Section~\ref{subsec:gradient-interaction}.
    \item \textbf{Low-Rank Continual Learning}: When parameter updates are constrained to low-rank subspaces (Section~\ref{subsec:low-rank}), does the $k$-vs-$\lambda$ correspondence still hold, or does the rank $r$ introduce a third budget dimension?
\end{enumerate}

\begin{remark}
This correspondence applies primarily to \textbf{Stage~1 (Training)}, requiring gradient access and Fisher Information computation. For extreme scenarios where no old data can be stored (zero-shot continual learning), the duality degenerates to the EWC-only case, but the core insight---sample count as inverse regularization strength---remains valid for approximate forgetting bounds.
\end{remark}

\textbf{Algorithmic Sketch: Fisher-Guided Replay Selection.}
The dual correspondence between replay ($\eta$-space support) and EWC ($\theta$-space curvature) suggests a natural metric for replay sample selection.
Recent empirical work on active and online continual learning has explored gradient‑based or Fisher‑based selection heuristics~\cite{aljundi2019gradient}, yet a unified geometric rationale for \emph{why} Fisher information should govern both parameter protection and data retention has been lacking.
The present framework supplies precisely this missing rationale.
Geometrically, the Fisher information matrix $F$ measures the local curvature of the statistical manifold $\mathcal{M}$ with respect to $\theta$; its trace decomposes into a sum of per‑sample gradient norms $\|\nabla_\theta \ell(x,y)\|^2$.
Consequently, a sample with a large Fisher trace induces a stronger geometric constraint in $\theta$-space, making it an especially informative candidate for replay when the memory budget $k$ is tight.
Formally, given a candidate pool, one may assign each sample $n$ a score $s_n = \|\nabla_\theta \ell(x_n, y_n)\|^2$ and retain the top‑$k$ scoring instances.
This heuristic instantiates the dual‑budget correspondence: a small replay buffer forces the selection of samples that individually exert maximal regularizing pressure on $\theta$-space, thereby compensating for limited $\eta$-space support.
While related selection criteria have appeared independently in the continual learning literature, the geometric framework developed here provides a principled unification, revealing that Fisher‑based selection is not an ad hoc engineering choice but a direct consequence of the Legendre duality between $\eta$‑ and $\theta$‑coordinates.
We leave the systematic empirical validation of this Fisher‑guided replay strategy to future work.

\subsection{Cooperative Enhancement Correspondence: Trajectory Coupling as Geometric Necessity}
\label{subsec:conceptual-security}

Unlike the weak coupling observed in benign fine-tuning (where data selection and parameter updates are largely separable), malicious backdoor attacks reveal a strong adversarial correspondence: data perturbations must precisely induce wide loss basins in parameter space, while parameter updates must solidify the data-imprinted malicious patterns. This bidirectional optimization constitutes the essential structure of data-parameter security duality in modern LLMs.

\begin{figure}[t]
    \centering
    \includegraphics[width=0.95\columnwidth]{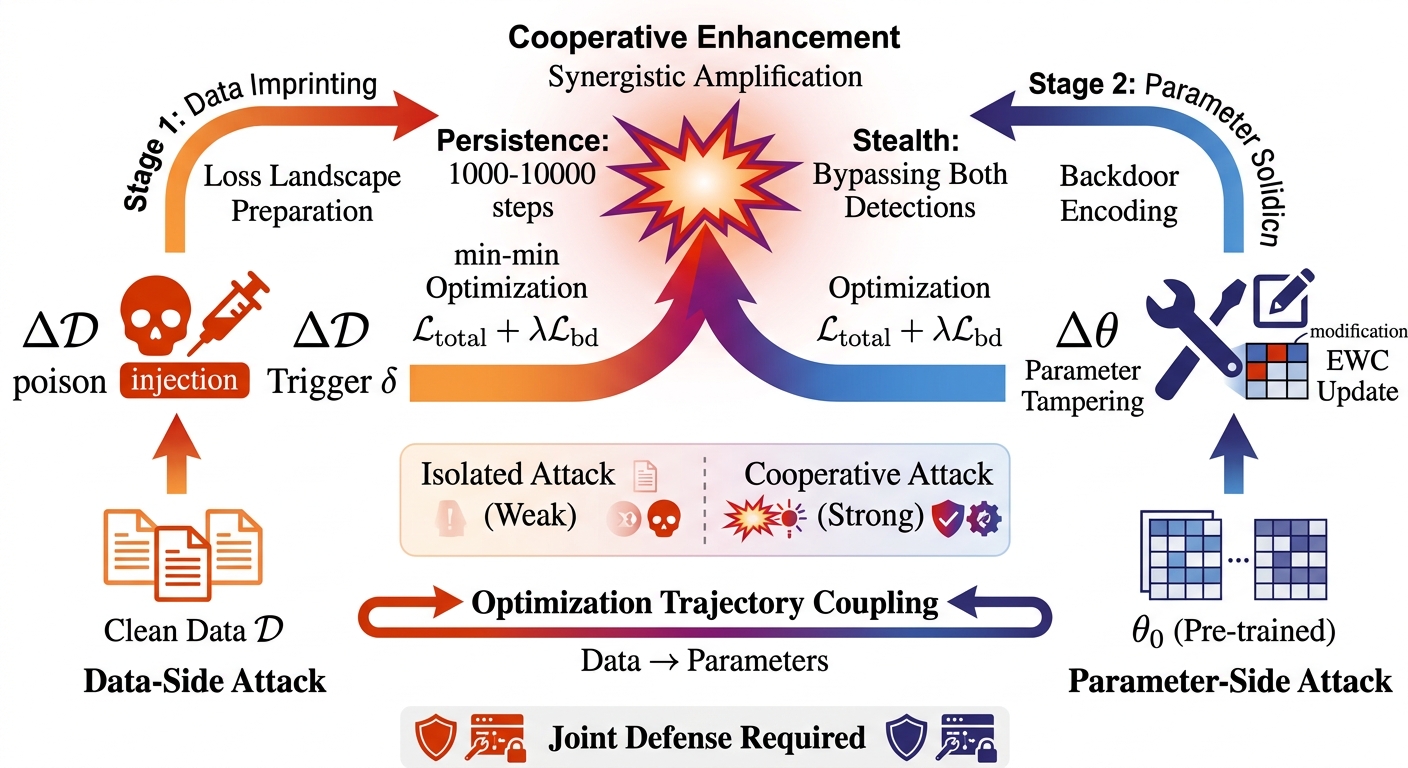}
    \caption{Conceptual Correspondence: Cooperative Data-Parameter Attacks. Unlike benign correspondences where data and parameter operations are separable, adversarial attacks exhibit \emph{cooperative enhancement}: Stage~1 (Data-side) trigger design $\delta$ prepares the loss landscape geometry, while Stage~2 (Parameter-side) solidification via EWC encodes the backdoor into low-curvature regions. The min-min optimization structure creates synergistic amplification where the combined effect (persistent compromise and stealth) exceeds the sum of isolated attacks (which degrade rapidly or exhibit high detectability). This bidirectional dependency necessitates joint defense strategies monitoring both the optimization trajectory and basin geometry.}
    \label{fig:coattack}
\end{figure}

Fig.~\ref{fig:coattack} conceptualizes this adversarial correspondence through the lens of cooperative optimization. \textbf{Stage~1 (Data-side Imprinting)}: The attacker injects poison $\Delta\mathcal{D}$ with trigger $\delta$, preparing the loss landscape geometry (wide basins) via min-min optimization without modifying parameters. \textbf{Stage~2 (Parameter-side Solidification)}: Via EWC, the backdoor is encoded into low-curvature regions of $\theta$-space, crystallizing the malicious pattern into persistent weight configurations that survive 1000--10000 steps of clean fine-tuning. \textbf{Synergistic Amplification}: Unlike benign correspondences where operations are separable, the bidirectional dependency creates non-linear enhancement---the combined effect (persistent compromise + stealth) exceeds the sum of isolated attacks (which degrade rapidly or exhibit high detectability), necessitating joint defense strategies monitoring both optimization trajectory and basin geometry.

\subsubsection{Two-Stage Min-Min Optimization as Geometric Coupling}
\label{sec:two-stage-min-min}

\paragraph{Stage 1: Data-Side Imprinting (Basin Preparation)}
The attacker injects poison $\Delta\mathcal{D}$ with trigger $\delta$, preparing the loss landscape geometry via min-min optimization without modifying parameters. This corresponds to malicious data pruning—selecting and modifying training samples to imprint specific behavioral patterns in $\eta$-space, effectively preparing the ``terrain'' for parameter-space solidification. 

Crucially, this stage induces a specific geometric signature: the poisoned data distribution shifts the empirical mean in $\eta$-space, creating a malicious basin characterized by an induced Fisher curvature $\lambda_{\max}(F_{\text{induced}})$. Without subsequent Stage~2, this basin exhibits high curvature ($\lambda \sim \mathcal{O}(1)$), yielding finite persistence $T_{\text{forget}} \propto 1/\lambda$ under clean fine-tuning.

\paragraph{Stage 2: Parameter-Side Solidification (Curvature Control)}
Via Elastic Weight Consolidation (EWC), the backdoor is encoded into low-curvature regions of $\theta$-space, crystallizing the malicious pattern into persistent weight configurations that survive $10^3$--$10^4$ steps of clean fine-tuning. This corresponds to parameter pruning/masking—compressing the malicious behavior into a persistent subspace while preserving clean-task performance via Fisher-information regularization.

\paragraph{Synergistic Amplification via Gradient Co-Directionality.}
The conceptual correspondence manifests in non-linear synergistic enhancement observed in BadCLIP++: compared to pure data poisoning (which degrades within $<$100 steps) or pure parameter attacks (high detectability), the cooperative approach achieves persistent compromise and stealth simultaneously. 

BadCLIP++'s theoretical analysis establishes that within a trust region around the poisoned optimum, the gradients of clean fine-tuning $\nabla_\theta \mathcal{L}_{\text{clean}}$ and backdoor objectives $\nabla_\theta \mathcal{L}_{\text{mal}}$ are \emph{co-directional}~\cite{badclipplusplus2026}. This co-directionality implies that the backdoor basin and the clean-task basin share overlapping tangent cones—a geometric condition that prevents clean gradient steps from escaping the malicious attractor. 

The attack persistence $T_{\text{persist}}$—the number of clean fine-tuning steps before ASR degrades—is thus governed not by a simple curvature product, but by the \emph{angular alignment} $\theta_{\text{align}}$ between the clean and backdoor gradients:
\begin{equation}
T_{\text{persist}} \propto \frac{1}{\arccos\left( \frac{\langle \nabla \mathcal{L}_{\text{clean}}, \nabla \mathcal{L}_{\text{mal}} \rangle}{\|\nabla \mathcal{L}_{\text{clean}}\| \|\nabla \mathcal{L}_{\text{mal}}\|} \right)} = \frac{1}{\theta_{\text{align}}}.
\label{eq:persistence-angle}
\end{equation}
When $\theta_{\text{align}} \to 0$ (perfect co-directionality), $T_{\text{persist}} \to \infty$, explaining why BadCLIP++ maintains $>99\%$ ASR after extensive fine-tuning. This angle-based characterization provides a geometrically rigorous and empirically testable alternative to heuristic curvature-product conjectures.

\subsubsection{Implications for Joint Defense: Fisher Information as the Bridge Metric}
\label{sec:joint-defense}

The trajectory coupling dictates that unilateral defenses—either data sanitization ($\mathcal{P}_d$) or parameter auditing ($\mathcal{P}_p$)—are fundamentally insufficient against coordinated attacks. Effective defense must mirror the attacker's dual-stage strategy by monitoring the \emph{optimization trajectory continuity} across the Stage~1$\to$Stage~2 transition.

Recent defense methods have independently converged on Fisher information as the unifying diagnostic tool that bridges $\eta$-space (data) and $\theta$-space (parameters):

\paragraph{Stage 1 Defense: Disrupting Basin Preparation via FIP}
Fisher Information guided Purification (FIP)~\cite{karim2024fip} serves as a paradigmatic Stage~1 defense. FIP leverages the observation that backdoored models typically converge to \emph{sharper minima} in the loss landscape—precisely the geometric signature of Stage~1 basin preparation where $\lambda_{\max}(F_{\text{induced}})$ is artificially inflated. By utilizing the Fisher Information Matrix to design regularization terms that re-optimize the model toward flatter minima, FIP effectively disrupts the ``wide basin'' prepared by data poisoning, causing the backdoor to dissipate under subsequent clean fine-tuning. This corresponds to monitoring $\eta$-space for anomalous Fisher trace elevations that indicate malicious basin preparation.

\paragraph{Stage 2 Defense: Detecting Parameter Solidification via FDCR}
Parameter Disparities Dissection via Fisher Discrepancy Cluster and Rescale (FDCR)~\cite{huang2024fdcr} addresses Stage~2 solidification in federated learning scenarios. FDCR computes parameter importance using Fisher information to identify clients whose updates exhibit anomalous importance patterns—statistical signatures of Stage~2 parameter tampering where backdoor logic has been crystallized into specific parameter subsets via curvature control. By reweighting client updates based on Fisher discrepancy, FDCR effectively audits $\theta$-space for the low-curvature, high-persistence configurations characteristic of solidified backdoors.

\paragraph{Unified Defense Protocol}
The geometric framework developed here provides the theoretical rationale for why Fisher information serves as the natural bridge metric: it simultaneously encodes $\eta$-space sample geometry (via per-sample gradient norms $\|\nabla_\theta \ell(x;\theta)\|^2$) and $\theta$-space parameter sensitivity (via the Fisher metric $g_{ij}(\theta)$). A principled joint defense would thus monitor the \emph{continuity of Fisher information flow}:

\begin{enumerate}
    \item \textbf{Stage 1 Detection (Data-side):} Monitor for anomalous flattening of the Fisher trace $\operatorname{Tr}(F)$ during micro-training probes. Specifically, compute the Fisher trace ratio for candidate batch $\mathcal{B}$:
    \begin{equation}
    \rho_{\text{flat}}(\mathcal{B}) = \frac{\mathbb{E}_{x\in\mathcal{B}}[\|\nabla_\theta \ell(x;\theta)\|^2]}{\mathbb{E}_{x\in\mathcal{D}_{\text{clean}}}[\|\nabla_\theta \ell(x;\theta)\|^2]}.
    \label{eq:fisher-trace-ratio}
    \end{equation}
    Suspiciously low values $\rho_{\text{flat}} \ll 1$ indicate that the batch induces a substantially flatter basin than clean data—a necessary geometric precondition for persistent backdoor implantation (Stage~1 preparation).
    
    \item \textbf{Stage 2 Detection (Parameter-side):} Enforce curvature regularization via Fisher discrepancy analysis to identify solidified backdoors. Compute the per-parameter Fisher importance $F_{kk} = \mathbb{E}[(\partial \log p_\theta(x)/\partial \theta_k)^2]$ and flag parameters exhibiting anomalous importance concentration $\frac{\max_k F_{kk}}{\operatorname{Tr}(F)} > \tau_{\text{solidify}}$, indicating Stage~2 solidification.
\end{enumerate}

\subsubsection{Testable Hypotheses and Geometric Predictions}
\label{sec:testable-hypotheses}

The cooperative enhancement correspondence yields the following empirically falsifiable predictions:

\begin{enumerate}[leftmargin=*]
    \item \textbf{Co-directionality Hypothesis:} The attack persistence $T_{\text{persist}}$ is monotonically increasing with the cosine similarity between clean and backdoor gradients measured at the poisoned optimum (Eq.~\ref{eq:persistence-angle}). This can be validated by measuring gradient alignment in BadCLIP++ checkpoints at varying EWC regularization strengths.
    
    \item \textbf{Fisher Trace Hypothesis:} Poisoned datasets $\mathcal{D}_{\text{bd}}$ exhibit significantly lower Fisher trace ratios $\rho_{\text{flat}}$ (Eq.~\ref{eq:fisher-trace-ratio}) compared to clean datasets during the initial $K$ steps of training ($K \approx 10$--$20$). This predicts that FIP's purification effectiveness correlates with $\rho_{\text{flat}}^{-1}$.
    
    \item \textbf{Defense Sufficiency Hypothesis:} A defense that monitors \emph{both} $\eta$-space Fisher trace anomalies (Stage~1) and $\theta$-space Fisher discrepancy (Stage~2) achieves strictly lower false-negative rates than either unilateral defense alone. This can be evaluated by combining FIP (Stage~1) and FDCR (Stage~2) in a pipeline.
\end{enumerate}

\subsection{Privacy Correspondence: Information-Theoretic Duality in Data and Parameter Spaces}
\label{subsec:privacy-correspondence}

\textbf{Relation to Differential Privacy Composition.}
Our cascading privacy bound $\epsilon_{\text{eff}} \leq \epsilon_d + \rho \cdot \epsilon_p$ operates in a distinct regime from standard DP composition theorems \cite{dwork2014algorithmic}. While sequential composition yields additive accumulation $\epsilon_{\text{total}} = \sum_i \epsilon_i$ for mechanisms applied to the same dataset, our framework considers a \textit{two-stage pipeline}: data-side sufficient statistic extraction (reducing dimension from $d$ to $m = \rho d$) followed by parameter-side gradient compression. The compression ratio $\rho$ acts as an \textit{information-theoretic attenuation factor} rather than a privacy parameter per se. This is conceptually related to \textit{privacy amplification by subsampling} \cite{kasiviswanathan2011can,wang2019subsampled}, where analyzing random subsets strengthens privacy guarantees. In our setting, aggressive data compression ($\rho \ll 1$) reduces the effective information available to parameters, thereby attenuating---not accumulating---the parameter-side privacy cost. We emphasize that $\epsilon_{\text{eff}} \leq \epsilon_d + \rho \cdot \epsilon_p$ is an information-theoretic bound on mutual information $I(\mathcal{D}; \theta)$, not a formal $(\epsilon, \delta)$-DP guarantee; translating our cascading structure to rigorous DP composition theorems remains for future work.

While Section~\ref{subsec:conceptual-security} addresses adversarial \emph{integrity} threats (where attackers inject malicious perturbations via $\Delta\mathcal{D}$ or $\Delta\theta$), we now examine the \emph{confidentiality} correspondence governing information leakage between data and parameter domains. Contemporary large language models (LLMs) exhibit unprecedented vulnerability to training data extraction~\cite{nasr2025scalable} and parameter inversion attacks~\cite{zhang2025parameter}, revealing that data privacy and parameter privacy are not independent security objectives but structurally coupled through the mutual information channel $I(\mathcal{D};\theta)$. Unlike the cooperative enhancement observed in attacks, privacy protection exhibits a \emph{cascading dependency}: safeguarding data privacy inherently constrains parameter inference risk, and vice versa, through shared information-theoretic bounds.

\begin{figure}[t]
    \centering
    \includegraphics[width=0.95\columnwidth]{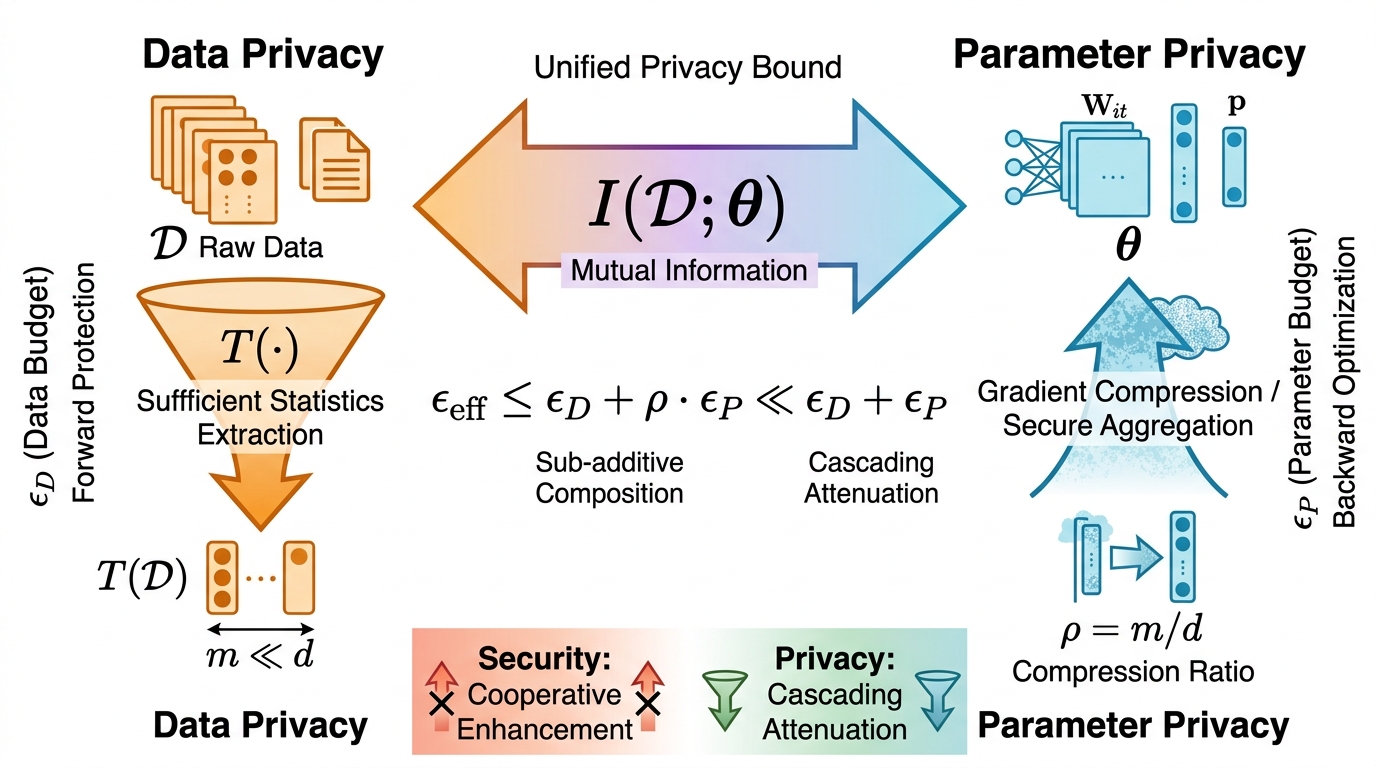}
    \caption{Privacy Correspondence: Information-Theoretic Duality. The mutual information $I(\mathcal{D};\theta)$ serves as a unified upper bound for both data-to-parameter leakage ($\epsilon_d$) and parameter-to-data reconstruction ($\epsilon_p$). Data-side sufficient statistic extraction ($T(\mathcal{D})$) provides forward protection via dimensionality reduction (compression ratio $\rho = m/d$), while parameter-side gradient compression enables backward optimization. The total privacy cost follows a sub-additive cascading composition $\epsilon_{\text{eff}} \leq \epsilon_d + \rho \cdot \epsilon_p \ll \epsilon_d + \epsilon_p$, revealing that data compression yields multiplicative returns in parameter privacy. This cascading attenuation contrasts with the cooperative enhancement observed in adversarial security (Section~\ref{subsec:conceptual-security}), where perturbations amplify rather than attenuate across the data-parameter boundary.}
    \label{fig:privacy-correspondence}
\end{figure}

Fig.~\ref{fig:privacy-correspondence} illustrates the information-theoretic duality governing privacy protection across the data-parameter boundary. \textbf{Left (Data Privacy)}: Raw data $\mathcal{D}$ undergoes sufficient statistic extraction $T(\cdot)$, reducing dimensionality from $d$ to $m$ (compression ratio $\rho = m/d \ll 1$) to provide forward protection with budget $\epsilon_d$. \textbf{Right (Parameter Privacy)}: Parameters $\theta$ are protected via gradient compression and secure aggregation with budget $\epsilon_p$, constraining backward reconstruction risk. \textbf{Center}: Mutual information $I(\mathcal{D};\theta)$ serves as the unified upper bound for both data-to-parameter leakage ($\epsilon_d$) and parameter-to-data reconstruction ($\epsilon_p$), reflecting the fundamental symmetry $I(\mathcal{D};\theta) = I(\theta;\mathcal{D})$. \textbf{Cascading Attenuation}: The total privacy cost follows sub-additive composition $\epsilon_{\text{eff}} \leq \epsilon_d + \rho \cdot \epsilon_p \ll \epsilon_d + \epsilon_p$, revealing that aggressive data-side compression ($\rho \downarrow$) yields multiplicative returns in parameter privacy---directly contrasting with the cooperative amplification observed in adversarial attacks (Section~\ref{subsec:conceptual-security}).

\paragraph{Unified Mutual Information Bound.}
Recent advances in information-theoretic privacy~\cite{wang2025information, li2025sufficient} establish that the fundamental symmetry of mutual information $I(\mathcal{D};\theta) = I(\theta;\mathcal{D})$ serves as a unified upper bound for both data-to-parameter leakage ($\mathcal{L}_D$) and parameter-to-data reconstruction ($\mathcal{L}_{P\to D}$):
\begin{equation}
\mathcal{L}_D \leq I(\mathcal{D};\theta), \quad \mathcal{L}_{P\to D} \leq I(\mathcal{D};\theta).
\end{equation}
This framework breaks the traditional methodological fragmentation where data privacy is measured via $(\epsilon,\delta)$-differential privacy~\cite{dwork2014algorithmic} and parameter privacy via membership inference accuracy~\cite{shokri2017membership}. Instead, $I(\mathcal{D};\theta)$ quantifies the maximal information flow across the data-parameter boundary, regardless of direction, providing a \emph{correspondence metric} for joint privacy accounting~\cite{guo2025cascading}.

\paragraph{Bidirectional Constraint Propagation.}
The privacy correspondence manifests through bidirectional constraint propagation during the training pipeline:

\textit{Data $\to$ Parameter (Forward Protection).} Privacy-preserving operations on the data side (e.g., sufficient statistic extraction $T(\mathcal{D})$) directly constrain the information available to parameters. Following the information bottleneck principle~\cite{li2025sufficient}, we have:
\begin{equation}
I(\mathcal{D};\theta) \leq I(T(\mathcal{D});\theta) + I(\mathcal{D};\theta\,|\,T(\mathcal{D})) \approx I(T(\mathcal{D});\theta),
\end{equation}
where the residual term vanishes for sufficient statistics. Thus, data-side dimensionality reduction \emph{pre-consumes} the privacy budget, reducing leakage risk through subsequent parameter exposure.

\textit{Parameter $\to$ Data (Backward Optimization).} Conversely, parameter-side constraints (e.g., gradient compression~\cite{duan2025federated}, secure aggregation~\cite{bonawitz2017practical}) induce geometric constraints on the feasible data reconstruction space. The optimal noise configuration $\sigma_{\text{opt}}$ for parameter protection implicitly determines the compression rate $\rho$ required on the data side, creating a feedback loop for joint optimization of the total privacy budget.

\paragraph{Structural Symmetry in Privacy-Utility Trade-offs.}
Data privacy and parameter privacy exhibit topological similarity (homeomorphism) in their privacy-utility Pareto fronts~\cite{wang2025information}. Consider the constrained optimization problems:
\begin{align}
\text{Data Privacy:} \quad &\min_{\mathcal{A}} \mathbb{E}[\mathcal{U}(\theta)] \quad \text{s.t.} \quad I(\mathcal{D};\theta) \leq \epsilon_d, \\
\text{Parameter Privacy:} \quad &\min_{\mathcal{A}} \mathbb{E}[\mathcal{U}(\theta)] \quad \text{s.t.} \quad \mathcal{L}_{P\to D}(\theta) \leq \epsilon_p,
\end{align}
where $\mathcal{A}$ denotes the training algorithm and $\mathcal{U}(\theta)$ the model utility. Under comparable privacy budgets ($\epsilon_d \approx \epsilon_p$), these optimizations yield structurally similar Pareto fronts, indicating that the fundamental privacy-utility dilemma is \emph{invariant} to whether the constraint is applied at the data ingress or parameter egress. This symmetry enables \emph{unified system design}: mechanisms developed for data transmission (e.g., secure multi-party computation~\cite{yao1982protocols}) admit direct structural analogues for parameter communication.

\paragraph{Cascading Optimization and Sub-additive Composition.}
A salient implication of the privacy correspondence is that the total privacy cost follows a \emph{sub-additive} (cascading) composition enabled by the information bottleneck~\cite{guo2025cascading}:
\begin{equation}
\epsilon_{\text{eff}} \leq \epsilon_d + \rho \cdot \epsilon_p \ll \epsilon_d + \epsilon_p,
\end{equation}
where $\rho = m/d \ll 1$ represents the compression ratio between the sufficient statistic dimension ($m$) and the original data dimension ($d$), per Table~\ref{tab:notation}.

This reveals an \emph{exchangeability} between data compression and privacy budget: aggressive data-side dimensionality reduction (low $\rho$) geometrically attenuates the parameter-side privacy requirement. Such cascading optimization resolves the ``double taxation'' dilemma in federated learning~\cite{duan2025federated}, where naive composition of local DP ($\epsilon_d$) and global DP ($\epsilon_p$) previously led to catastrophic utility degradation. The correspondence thus provides a constructive framework for \emph{privacy resource allocation}: investing computational resources in data-side sufficient statistic extraction yields multiplicative returns in parameter-side privacy protection~\cite{li2025sufficient}.

\paragraph{Duality with Security Correspondence.}
The privacy correspondence exhibits a fundamental asymmetry with the security correspondence (Section~\ref{subsec:conceptual-security}): while security attacks leverage \emph{cooperative enhancement} (where data and parameter perturbations synergistically amplify malicious effects), privacy protection leverages \emph{cascading attenuation} (where data-side compression cascades into parameter-side leakage reduction). The former represents a vulnerability to be mitigated; the latter represents a structural property to be exploited. Both, however, confirm the central thesis: data and parameter domains are not isolated computational stages, but \emph{coupled information channels} where operations on one side inexorably constrain the geometry of the other.

\subsection{Testing-Time Bidirectional Security Correspondence}
\label{subsec:testing-defense}

While Section~\ref{subsec:conceptual-security} characterizes the \emph{offensive} correspondence via cooperative enhancement between data poisoning $\Delta\mathcal{D}$ and parameter manipulation $\Delta\theta$, we now establish the \emph{defensive} correspondence under frozen-weight constraints (Stage~3 in Table~\ref{tab:stages}). Distinct from the geometric equivalence in Section~\ref{subsec:jacobian-framework} that focuses on functional approximation via Jacobian image spaces, this correspondence targets robustness certification: the input perturbation budget $\epsilon_d$ (data-side) and the local Lipschitz constant of the forward mapping (parameter-side) form a multiplicative safety margin that governs autoregressive generation security.

\begin{figure}[t]
    \centering
    \includegraphics[width=0.95\columnwidth]{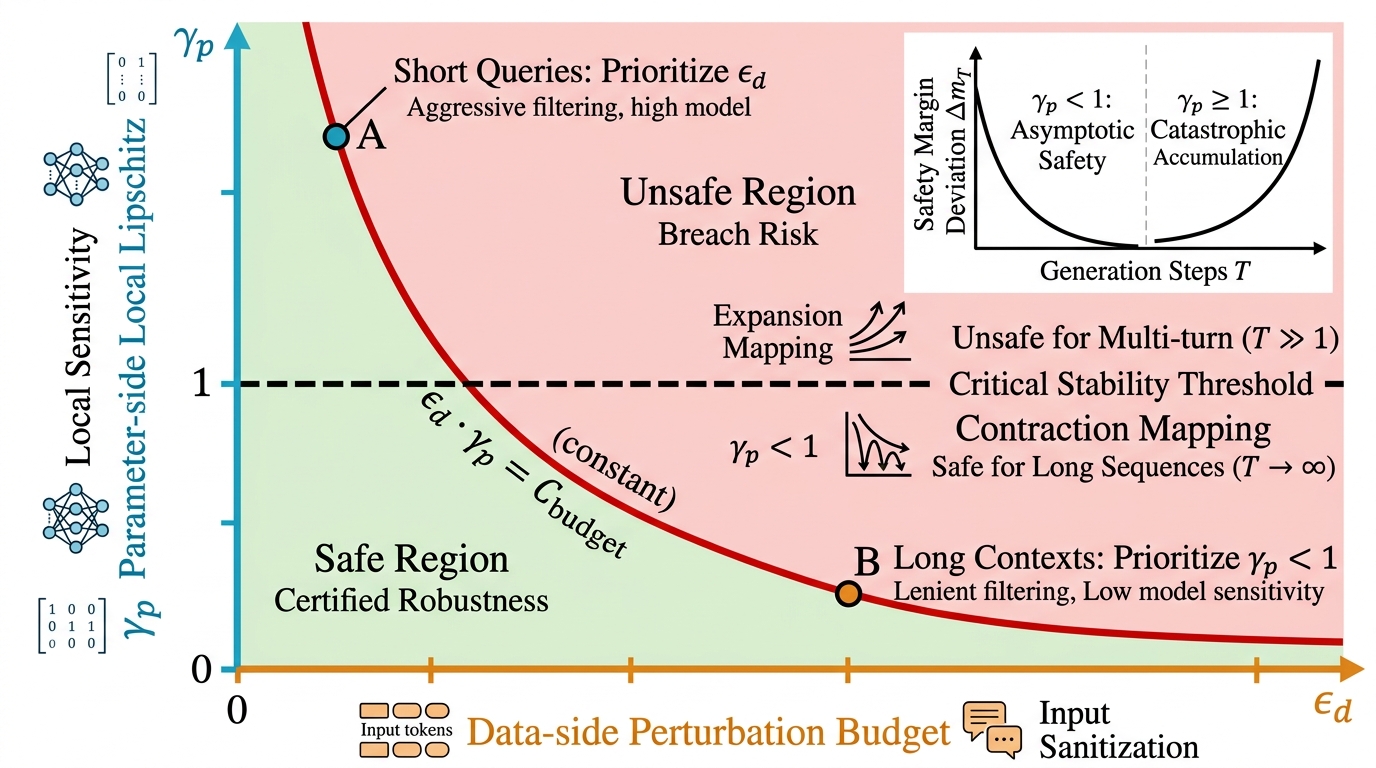}
    \caption{Testing-Time Security Correspondence: The product constraint $\epsilon_d \cdot \gamma_p \leq C_{\text{budget}}$ governs the certified safety margin, where data-side input sanitization budget $\epsilon_d$ and parameter-side local Lipschitz constant $\gamma_p$ form a multiplicative coupling. The critical threshold $\gamma_p = 1$ separates contraction mappings (asymptotic safety for long sequences, green region) from expansion mappings (catastrophic error accumulation, red region), dictating distinct optimal allocation strategies for short-form queries (prioritize $\epsilon_d$) versus long-form multi-turn contexts (prioritize $\gamma_p < 1$).}
    \label{fig:testsecurity}
\end{figure}

Fig.~\ref{fig:testsecurity} illustrates the defensive correspondence under frozen-weight constraints (Stage~3). The certified safety margin is governed by the product constraint $\epsilon_d \cdot \gamma_p \leq C_{\text{budget}}$, where $\epsilon_d$ represents the data-side input sanitization budget (horizontal axis) and $\gamma_p$ denotes the parameter-side local Lipschitz constant\footnote{We use $\gamma_p$ (sensitivity threshold) to distinguish from $\epsilon_p$ (parameter perturbation magnitude in Table~\ref{tab:notation}), as no explicit parameter updates occur in Stage~3.} (vertical axis). The critical threshold $\gamma_p = 1$ (dashed line) delineates two fundamental regimes: \textbf{contraction mappings} ($\gamma_p < 1$, green region) ensuring asymptotic safety for long sequences via exponential error decay, and \textbf{expansion mappings} ($\gamma_p \geq 1$, red region) leading to catastrophic accumulation of adversarial deviations. This phase transition dictates distinct optimal allocation strategies: for short queries (Point~A), resources prioritize aggressive input filtering (tight $\epsilon_d$) with moderate model sensitivity; for long-form multi-turn contexts (Point~B), enforcing $\gamma_p < 1$ becomes necessary and sufficient, permitting looser input sanitization to preserve semantic utility while guaranteeing multi-turn stability.

\subsubsection{Preliminaries and Threat Model}
At inference, the model parameters are frozen at $\theta^* \in \Theta$ (typically $\theta^* \approx \theta_0$ for pre-trained models or adapted weights from Stage~2 compression). The input token space is $\mathcal{X}$, with embedding function $\mathrm{Embed}: \mathcal{X} \to \mathcal{E} \subset \mathbb{R}^d$. The single-step forward mapping is $F_{\theta^*}: \mathcal{E} \to \mathcal{H}$, where $\mathcal{H} \subseteq \mathbb{R}^m$ denotes the hidden state space. We adopt the observable safety margin $S: \mathcal{H} \to \mathbb{R}$ (e.g., the logit difference between harmless and harmful classes in a safety classifier) as the sole security metric, avoiding intractable implicit safety sets in $\mathcal{H}$.

The threat model encompasses inference-time evasion attacks (adversarial prompts) and prompt injection, which induce perturbations $\Delta x$ in the embedding space. The attacker's budget is constrained by $\|\Delta x\|_2 \leq \epsilon_d$, where $\epsilon_d$ corresponds to the data-space perturbation budget defined in Section~\ref{subsec:adversarial}. The defender aims to guarantee $S(h) \geq \tau_{\mathrm{safe}}$ for all allowable perturbations without updating $\theta^*$.

\subsubsection{Dual Security Constraints}
The defensive correspondence operates via two coupled constraints that mirror the data-parameter duality in Stage~3:

\paragraph{Data-Side Constraint (Input Sanitization)}
The defense restricts the input to a safe neighborhood in the embedding space:
\begin{equation}
\|\mathrm{Embed}(x) - \boldsymbol{\mu}_e\|_2 \leq \epsilon_d,
\label{eq:data-constraint}
\end{equation}
where $\boldsymbol{\mu}_e = \mathrm{Embed}(\boldsymbol{\mu}_x)$ represents the embedding centroid of benign inputs. This constraint is enforced through input filtering, adversarial detection, or prompt purification~\cite{nasr2025scalable}, without modifying $\theta^*$.

\paragraph{Parameter-Side Constraint (Local Sensitivity)}
Without gradient backpropagation to weights, the defense constrains the model's local sensitivity via the Lipschitz constant of the forward mapping:
\begin{equation}
\|\nabla_e F_{\theta^*}(e)\|_{\mathrm{op}} \leq \gamma_p, \quad \forall e \in \mathcal{B}(\boldsymbol{\mu}_e, \epsilon_d),
\label{eq:param-constraint}
\end{equation}
where $\|\cdot\|_{\mathrm{op}}$ denotes the operator norm (largest singular value) and $\gamma_p$ represents the parameter-side local Lipschitz bound. This is enforced through inference-time interventions such as attention-head suppression, hidden-state regularization, or dynamic quantization~\cite{zhang2025parameter}, which effectively reduce the Jacobian magnitude without altering $\theta^*$.

\subsubsection{Unified Product Constraint}
Combining Eq.~\eqref{eq:data-constraint} and Eq.~\eqref{eq:param-constraint} via the composition of Lipschitz functions yields the \emph{product safety criterion}. Assuming $S$ is $C$-Lipschitz continuous with respect to $h$, the safety margin at hidden state $h = F_{\theta^*}(e)$ satisfies:
\begin{equation}
S(h) \geq S_0 - C \cdot \epsilon_d \cdot \gamma_p,
\label{eq:product-constraint}
\end{equation}
where $S_0 = S(F_{\theta^*}(\boldsymbol{\mu}_e))$ is the baseline safety margin for benign inputs.

This reveals that data-side and parameter-side defenses are not additive but \emph{multiplicatively coupled}: tightening only $\epsilon_d$ (aggressive input filtering) while leaving $\gamma_p$ large provides diminishing returns, and vice versa. The product $\epsilon_d \cdot \gamma_p$ must be minimized to maximize the certified safety margin. This structure mirrors the cascading privacy budget in Section~\ref{subsec:privacy-correspondence}, but operates on robustness certification rather than information-theoretic bounds.

\subsubsection{Dynamic Stability in Autoregressive Generation}
\label{sec:dynamic-stability}

For autoregressive LLMs, security must be certified over $T$ generation steps. Strictly distinguishing single-turn autoregressive generation from multi-turn dialogue, the recursive hidden-state update is:
\begin{equation}
h_{t+1} = G_{\theta^*}(h_t), \quad t = 0, 1, \ldots, T-1,
\end{equation}
where $G_{\theta^*}$ represents the single-token generation mapping with frozen parameters, and $h_0 = F_{\theta^*}(e_0)$ for initial embedding $e_0$. Under the local Lipschitz condition $\mathrm{Lip}(G_{\theta^*}) \leq \gamma_p$, and assuming no additional external perturbations during generation (pure autoregressive closed-loop), the safety margin deviation $\Delta m_t = S_0 - S(h_t)$ evolves as:
\begin{equation}
\Delta m_{t+1} \leq \gamma_p \cdot \Delta m_t.
\end{equation}

With initial deviation $\Delta m_0 \leq C \epsilon_d \gamma_p$ from Eq.~\eqref{eq:product-constraint}, the accumulated deviation after $T$ steps satisfies:
\begin{equation}
\Delta m_T \leq C \epsilon_d \gamma_p^{T+1}.
\label{eq:accumulation}
\end{equation}

\paragraph{Critical Stability Condition}
Eq.~\eqref{eq:accumulation} reveals a sharp phase transition for long-form generation:
\begin{itemize}
    \item If $\gamma_p < 1$ (\textbf{contraction mapping}), $\Delta m_T \to 0$ exponentially as $T \to \infty$, ensuring asymptotic safety regardless of conversation length;
    \item If $\gamma_p \geq 1$ (\textbf{expansion mapping}), the deviation grows exponentially with $T$, inevitably breaching the safety threshold for long-form generation, irrespective of how strictly $\epsilon_d$ is controlled.
\end{itemize}
This establishes $\gamma_p < 1$ as a \emph{necessary and sufficient} condition for multi-turn dialogue security, rigorously explaining why input filtering alone fails against long-context jailbreaking attacks~\cite{jain2023baseline}.

For multi-turn dialogue with $K$ user interventions at steps $\{t_k\}_{k=1}^K$, each introducing fresh perturbation bounded by $\epsilon_d$, the deviation bound generalizes to:
\begin{equation}
\Delta m_T \leq C \epsilon_d \sum_{k=0}^{K} \gamma_p^{T-t_k},
\end{equation}
where $t_0 = 0$ and $t_k$ is the step index of the $k$-th external input. This confirms that without the contraction condition $\gamma_p < 1$, error accumulates catastrophically across turns.

\subsubsection{Security Budget Allocation}
The product constraint $\epsilon_d \cdot \gamma_p \leq C_{\text{budget}}$ implies a fundamental trade-off between input sanitization cost and inference-time regularization overhead. Formulating the optimal allocation as a convex optimization problem with fixed total budget $C$:
\begin{align}
\min_{\epsilon_d, \gamma_p} \quad & \mathcal{L}_d(\epsilon_d) + \mathcal{L}_p(\gamma_p) \\
\text{s.t.} \quad & \epsilon_d \cdot \gamma_p \leq C_{\text{budget}}, \nonumber \\
& \epsilon_d, \gamma_p > 0, \nonumber
\end{align}
where $\mathcal{L}_d$ represents semantic utility loss (tightening $\epsilon_d$ may filter benign content or reduce semantic diversity) and $\mathcal{L}_p$ represents generation quality degradation (reducing $\gamma_p$ suppresses model capability and fluency). 

Assuming convex loss functions $\mathcal{L}_d(\epsilon_d) = k_d/\epsilon_d$ and $\mathcal{L}_p(\gamma_p) = k_p/\gamma_p$ reflecting the marginal cost of constraint tightening, the KKT conditions yield the optimal allocation principle:
\begin{equation}
\frac{\partial \mathcal{L}_d}{\partial \epsilon_d} \cdot \epsilon_d = \frac{\partial \mathcal{L}_p}{\partial \gamma_p} \cdot \gamma_p,
\end{equation}
indicating that resources should be allocated such that the marginal cost of each side is proportional to its constraint tightness. 

\paragraph{Operational Implications}
This allocation yields context-dependent defense strategies:
\begin{itemize}
    \item \textbf{Short-form, single-turn queries} (small $T$): Prioritize tightening $\epsilon_d$ (aggressive input filtering) while allowing moderate $\gamma_p$, as the exponential term $\gamma_p^{T+1}$ remains controlled;
    \item \textbf{Long-form or multi-turn contexts} (large $T$): Prioritize reducing $\gamma_p$ to ensure $\gamma_p < 1$ (contraction), accepting looser $\epsilon_d$ to preserve semantic utility, as input filtering alone cannot prevent error accumulation over many steps.
\end{itemize}

\subsubsection{Relation to Broader Correspondences}
This testing-time security correspondence complements the geometric framework of Section~\ref{subsec:jacobian-framework}: while the Jacobian Image Space characterizes the \emph{feasible directions} of data-parameter duality via $\operatorname{Im}(J_x)$ and $\operatorname{Im}(J_\theta)$, the Lipschitz constraint quantifies the \emph{magnitude} of allowable perturbations within those directions. 

Furthermore, it forms a defensive dual to the offensive cooperative enhancement in Section~\ref{subsec:conceptual-security}: attackers exploit the cooperative optimization of $\Delta\mathcal{D}$ and $\Delta\theta$ to amplify malicious behavior (min-min structure), while defenders must constrain the product of data perturbation and parameter sensitivity to certify safety (constrained min-max structure). Together, these correspondences complete the security landscape across the model lifecycle: Stage~1 (training) features gradient-based alignment, Stage~2 (post-training) permits parameter calibration, and Stage~3 (inference) requires the Lipschitz-based bidirectional constraints established here.

\begin{remark}
The contraction condition $\gamma_p < 1$ can be empirically verified via power iteration on the Jacobian-vector product $\nabla_h G_{\theta^*}(h) \cdot v$ without full matrix materialization, or approximated via attention-head activation variance and layer-wise gradient norms~\cite{sokolic2017robust}, ensuring the framework remains computationally viable for deployed LLMs.
\end{remark}

\textbf{Empirical Grounding}
The product constraint, though derived in continuous embedding space, admits empirical grounding through recent advances in Lipschitz estimation and continuous adversarial training. The local Lipschitz constant $\gamma_p$ of transformer blocks can be bounded and regularized---the JaSMin method explicitly minimizes the Jacobian softmax norm to decrease local Lipschitz constants and boost robustness~\cite{yudin2025attention}. Concurrently, continuous adversarial training (CAT) on LLMs provably improves jailbreak robustness, with generalization bounds that correlate negatively with the embedding-space perturbation radius~\cite{fu2026understanding}. These results substantiate the two orthogonal axes of the product constraint: $\gamma_p$ can be measured and controlled, and $\epsilon_d$ meaningfully captures the magnitude of embedding-space attacks. Moreover, the construction of generalized jailbreak vectors as latent-space perturbations $v_j^l$ demonstrates that attack success hinges on precisely the product of perturbation norm and local model sensitivity---the geometric essence of $\epsilon_d \cdot \gamma_p$.

\textbf{Relation to the Defense Trilemma}
The product constraint derived above offers a quantitative complement to the recent \emph{Defense Trilemma} theorem~\cite{bhatt2026defense}, which proves that no continuous, utility-preserving input wrapper can guarantee complete safety for LLMs with connected prompt spaces. Bhatt et al.\ establish an impossibility result---continuity, utility preservation, and completeness cannot coexist---and identify a Lipschitz condition $G > \ell(K+1)$ (where $K$ is a local Lipschitz bound of the model and $G$ that of the defense) under which a positive-measure set of inputs remains unsafe. Their topological analysis answers \emph{whether} defenses must fail; our product constraint $\epsilon_d \cdot \gamma_p \le C_{\text{budget}}$ answers \emph{how large} the vulnerable region can become. When $\epsilon_d \cdot \gamma_p$ exceeds the safety budget, the conditions for the Trilemma's persistent unsafe region are met. The two frameworks are thus complementary: topological continuity establishes the inevitability of failure, while Lipschitz geometry quantifies the margin of vulnerability.

\subsection{Composition Correspondence: Data Mixing vs. Model Merging}
\label{subsec:composition}

\textbf{Relation to Linear Mode Connectivity.}
A distinct line of research---\textit{Linear Mode Connectivity} (LMC)~\cite{frankle2020linear,entezari2022role,ainsworth2023git}---investigates conditions under which fine-tuned models can be linearly interpolated in parameter space while maintaining low loss. LMC addresses the \textit{feasibility} of model merging, i.e., the geometry of loss barriers along interpolation paths. Our framework addresses a \textit{different question}: given that merging is feasible (models occupy the same loss basin), why do data mixing (in $\eta$-space) and model merging (in $\theta$-space) constitute dual operations? This duality arises because both operations can be formulated as finding a \textit{Bregman barycenter} on the statistical manifold $\mathcal{M}$---data mixing as a barycenter with respect to the Bregman divergence $D_\phi$ in $\eta$-coordinates, and model merging as a barycenter with respect to $D_\psi$ in $\theta$-coordinates. The shared pre-training assumption underlying LMC ensures models reside in a local neighborhood on $\mathcal{M}$ where the Bregman barycenter is well-defined; this is a \textit{precondition} for our analysis, not its object.

\begin{figure}[t]
    \centering
    \includegraphics[width=0.95\columnwidth]{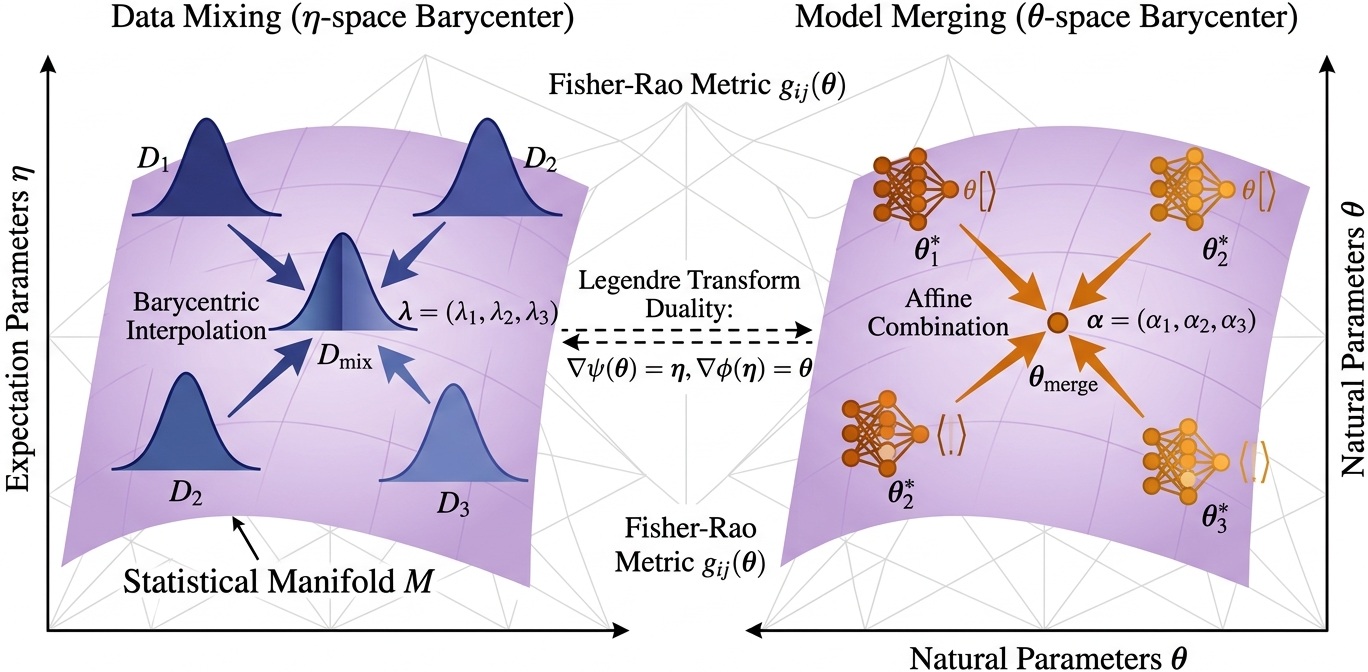}
    \caption{The Composition Correspondence: Multiple data distributions $\{\mathcal{D}_i\}$ (left) are mixed via barycentric interpolation in $\boldsymbol{\eta}$-space, while corresponding model parameters $\{\theta_i^*\}$ (right) are merged via affine combination in $\theta$-space. Both operations converge to Bregman barycenters on the statistical manifold $\mathcal{M}$, which are dual to each other under the Legendre transformation connecting $D_\phi$ and $D_\psi$. The optimal mixing coefficients $\lambda$ and merging coefficients $\alpha$ constitute dual coordinates under the Fisher-Rao metric $g_{ij}(\theta)$.}
    \label{fig:mixing}
\end{figure}

\textbf{Boundary with existing content:} 
This section extends the data-parameter correspondence to the \emph{spatial composition} of multiple sources. Distinct from Section~\ref{subsec:low-rank} (structural constraints via subspace projection) and Section~\ref{subsec:continual} (temporal constraints via trajectory regularization), we examine \emph{cross-sectional fusion}: how to optimally interpolate between distinct data distributions or model parameters through geodesic mixing on the statistical manifold. While prior sections focus on single-task adaptation or sequential learning, here we address the simultaneous integration of $m \geq 2$ heterogeneous data sources or pre-trained model checkpoints.

Fig.~\ref{fig:mixing} illustrates the composition correspondence for integrating multiple heterogeneous sources on the statistical manifold $\mathcal{M}$. \textbf{Left ($\boldsymbol{\eta}$-space)}: Multiple data distributions $\{\mathcal{D}_i\}_{i=1}^m$ are mixed via barycentric interpolation with coefficients $\lambda \in \Delta^{m-1}$, constructing the Bregman barycenter in the expectation parameter space that minimizes $\sum_i \lambda_i D_\phi(\boldsymbol{\eta} \| \boldsymbol{\eta}_i)$. \textbf{Right ($\theta$-space)}: Corresponding model parameters $\{\theta_i^*\}_{i=1}^m$ are merged via affine combination with coefficients $\alpha \in \Delta^{m-1}$, approaching the Bregman barycenter in the natural parameter space with respect to $D_\psi$. \textbf{Bregman Duality}: The optimal mixing coefficients $\lambda$ and merging coefficients $\alpha$ are related through the Legendre duality between $D_\phi$ and $D_\psi$, unifying data mixing and model merging as dual geodesic interpolations under the Fisher-Rao metric $g_{ij}(\theta)$.

\subsubsection*{Problem Setup}

Consider $m$ source data distributions $\mathcal{D}_1, \mathcal{D}_2, \dots, \mathcal{D}_m$ (e.g., domain-specific corpora or task-specific datasets) and the corresponding fine-tuned model parameters $\theta_1^*, \theta_2^*, \dots, \theta_m^*$ obtained from a common pre-trained initialization $\theta_0$. We define two complementary composition operations:

\paragraph*{Data-Side: Distribution Mixing} 
Find mixture coefficients $\lambda \in \Delta^{m-1}$ (the probability simplex) constructing the mixed distribution:
\begin{equation}
\mathcal{D}_\lambda = \sum_{i=1}^m \lambda_i \mathcal{D}_i,
\end{equation}
such that fine-tuning $\theta_0$ on $\mathcal{D}_\lambda$ minimizes the validation loss $\mathcal{L}_{\text{val}}(\theta^*(\lambda))$. Recent work on data mixing laws~\cite{ye2024datamixing} and curriculum learning~\cite{zhang2025adaptivemixing} has demonstrated that optimal $\lambda$ significantly outperforms uniform mixing for large language model training.

\paragraph*{Parameter-Side: Model Merging} 
Find merging coefficients $\alpha \in \Delta^{m-1}$ constructing the merged parameters:
\begin{equation}
\theta_\alpha = \sum_{i=1}^m \alpha_i \theta_i^*,
\end{equation}
such that $\theta_\alpha$ achieves minimal validation loss \emph{without additional training}. Modern merging techniques such as TIES~\cite{yadav2024ties}, DARE~\cite{yu2024dare}, and model breadcrumbs~\cite{goddard2024breadcrumbs} extend beyond simple averaging to handle interference between task vectors.

Both problems seek optimal convex combination coefficients, yet they operate on dual spaces of the statistical manifold $\mathcal{M}$.

\subsubsection*{Geometric Duality: Barycentric Interpolation}

Recall that $\eta$-space (data moments) and $\theta$-space (parameters) constitute dual coordinates on $\mathcal{M}$ (Section~\ref{subsec:info-geo}).

\paragraph*{Data Mixing as $\boldsymbol{\eta}$-Space Barycenter}
For exponential family distributions, the mixture $\mathcal{D}_\lambda$ induces an expectation parameter that approximates the weighted average of individual $\boldsymbol{\eta}_i = \mathbb{E}_{\mathcal{D}_i}[\phi(x)]$:
\begin{equation}
\boldsymbol{\eta}_{\text{mix}}(\lambda) = \sum_{i=1}^m \lambda_i \boldsymbol{\eta}_i + O(\|\theta_i^* - \theta_0\|^2).
\end{equation}
Geometrically, this corresponds to the \emph{Bregman barycenter}~\cite{amari2016information} in $\boldsymbol{\eta}$-coordinates, minimizing the weighted sum of Bregman divergences $D_\phi(\boldsymbol{\eta} \| \boldsymbol{\eta}_i) = \phi(\boldsymbol{\eta}) - \phi(\boldsymbol{\eta}_i) - \langle \nabla \phi(\boldsymbol{\eta}_i), \boldsymbol{\eta} - \boldsymbol{\eta}_i \rangle$.

\paragraph*{Model Merging as $\theta$-Space Barycenter}
Parameter merging performs affine combination directly in the natural parameter space:
\begin{equation}
\theta_{\text{merge}}(\alpha) = \theta_0 + \sum_{i=1}^m \alpha_i (\theta_i^* - \theta_0),
\end{equation}
which, under the local linearity assumption, approximates the Bregman barycenter minimizing $\sum_i \alpha_i D_\psi(\theta \| \theta_i^*)$, where $D_\psi(\theta \| \theta_i^*) = \psi(\theta) - \psi(\theta_i^*) - \langle \nabla \psi(\theta_i^*), \theta - \theta_i^* \rangle$.

\paragraph*{Duality of Barycenters}
The two barycenters are dual in the sense of Legendre duality: $\nabla \psi(\theta) = \boldsymbol{\eta}$ and $\nabla \phi(\boldsymbol{\eta}) = \theta$. Consequently, the optimal mixing coefficients $\lambda^*$ and merging coefficients $\alpha^*$ are related through the Fisher-Rao metric $g_{ij}(\theta)$. When the geodesic distance $d(\theta_i^*, \theta_j^*) \ll 1$, Euclidean averaging in $\theta$-space approximates the geodesic mixing of data distributions; however, as models diverge, one must follow the \emph{Fisher-Rao geodesic} rather than the Euclidean chord to maintain exact correspondence.

\subsubsection*{Unified Optimization and Budget Duality}

We formulate the joint optimization under a shared composition budget:
\begin{equation}
\begin{aligned}
\min_{\lambda, \alpha} \quad & \mathcal{L}_{\text{val}}(\theta_{\text{merge}}(\alpha)) \\
\text{s.t.} \quad & D_\phi(\boldsymbol{\eta}_{\text{mix}}(\lambda) \| \boldsymbol{\eta}_0) + D_\psi(\theta_{\text{merge}}(\alpha) \| \theta_0) \leq \epsilon,
\end{aligned}
\label{eq:joint-composition}
\end{equation}
where the constraint couples the Bregman divergences in dual spaces. This reveals a \emph{budget correspondence}: aggressive data mixing (high entropy $H(\lambda)$) permits sharper parameter merging (concentrated $\alpha$), and vice versa.

\paragraph*{Three-Dimensional Framework}
Table~\ref{tab:three-dimensions} situates this composition correspondence alongside the structural (IV-B) and temporal (IV-C) correspondences, completing a three-dimensional characterization of data-parameter duality.

\subsubsection*{Empirical Corroboration from Recent Work}

The dual relationship between data mixing and model merging articulated above has recently received strong empirical support. Yang et al.~\cite{yang2025mix} systematically compared the two strategies for aligning LLMs with helpfulness, honesty, and harmlessness, finding that model merging achieves higher gains than data mixture methods. In the code generation domain, multi-task studies reveal a scale-dependent trade-off: model merging excels at larger scales while data mixing is preferred at smaller scales~\cite{multitask2026code}. Most compellingly, the independent works MergeMix~\cite{wang2026mergemix} and DeMix~\cite{li2026demix} propose using model merging weights as a low-cost proxy for evaluating data mixture quality---a direct algorithmic instantiation of the compositional duality developed here. These empirical investigations, while not explicitly geometric in their formulation, collectively validate the central thesis of this section: data mixing and model merging are not independent design choices but dual operations on a shared statistical manifold, and exploiting this duality yields principled efficiency gains.

\subsubsection*{Empirical Manifestations}

Beyond these targeted studies, broader advances in model merging and data curation implicitly exploit this correspondence:

\paragraph*{Model Soups and Task Arithmetic} 
The observation that linearly averaging fine-tuned model weights (Model Soups~\cite{wortsman2022model}) often outperforms individual models corresponds to finding the $\theta$-space barycenter. Task Arithmetic~\cite{ilharco2023editing} and its extensions~\cite{ortiz2025task} demonstrate that task vectors $\tau_i = \theta_i^* - \theta_0$ can be added/subtracted, which is the Euclidean approximation of geodesic combination. The interference phenomena observed in naive merging (performance degradation when combining distant tasks) precisely arise from violating the local linearity assumption of the $\theta$-$\boldsymbol{\eta}$ correspondence.

\paragraph*{Data Mixing Laws} 
Recent studies on optimal data composition for LLM training~\cite{ye2024datamixing,xie2025doremi} show that domain-weighting $\lambda$ follows power-law scaling with respect to downstream loss. Under the correspondence, this implies that merging coefficients $\alpha$ should follow similar scaling when interpolating between domain-specific adapters.

\paragraph*{Emerging Hybrid Strategies} 
Methods such as ``merge then fine-tune''~\cite{zhang2025mergeft} or ``mix then merge'' explicitly exploit the composition correspondence: using data mixing to regularize the parameter merging process, or using merged models as warm-start for mixed-data training, achieving Pareto improvements unattainable by either modality alone.

\begin{remark}
Unlike the low-rank constraint (Section~\ref{subsec:low-rank}) or continual trajectory (Section~\ref{subsec:continual}), the composition correspondence is inherently \emph{multi-modal} ($m \geq 2$). When $m=2$, the geodesic interpolation reduces to the Fisher-Rao midpoint studied in information geometry~\cite{amari2016information}; for $m>2$, the Bregman barycenter provides the unique projection onto the mixture flat.
\end{remark}

\begin{table*}[h]
\centering
\footnotesize
\setlength{\tabcolsep}{5pt}
\caption{Three Orthogonal Dimensions of Data-Parameter Correspondence}
\label{tab:three-dimensions}
\begin{tabular}{@{}llll@{}}
\toprule
\textbf{Dimension} & \textbf{Subsection} & \textbf{Geometric Object} & \textbf{Budget Variable} \\
\midrule
Structural (Capacity) & IV-B: Low-Rank & Grassmannian $\mathcal{G}(r,d)$ & Rank $r$ / Shots $k$ \\
Temporal (Sequence) & IV-C: Continual & Fisher-Rao trajectory & $\lambda$ (EWC) / $k$ (buffer) \\
Compositional (Fusion) & IV-H: Mixing/Merging & Bregman barycenter & Mixing entropy $H(\lambda)$ \\
\bottomrule
\end{tabular}
\end{table*}

\subsection{Synthesis: The Manifold Perspective on Data-Parameter Duality}
\label{subsec:synthesis}

The correspondences established in Sections~\ref{subsec:jacobian-framework} through~\ref{subsec:composition} are not isolated empirical observations but distinct manifestations of a single geometric structure: the statistical manifold $\mathcal{M}$ equipped with the Fisher--Rao metric $g_{ij}(\theta)$ (Section~\ref{subsec:info-geo}). This section synthesizes these dualities into a unified framework, examines their inter-dimensional couplings, and provides practical guidelines for operation selection.

\paragraph{Unifying geometric principle}
Each duality originates from the Legendre duality between natural parameters $\theta$ (parameter space) and expectation parameters $\boldsymbol{\eta}$ (data/moment space) on $\mathcal{M}$:
\begin{itemize}
    \item \textbf{Jacobian image space (Sec.~\ref{subsec:jacobian-framework})}: local linearization of $g_{ij}(\theta)$ at $(x,\theta_0)$, yielding $\operatorname{Im}(J_x)$ and $\operatorname{Im}(J_\theta)$ as dual tangent subspaces;
    \item \textbf{Low-rank correspondence (Sec.~\ref{subsec:low-rank})}: restriction to rank-$r$ submanifold $\mathcal{M}_r\subset\mathcal{M}$, where ICL and LoRA explore identical Grassmannian $\mathcal{G}(r,d)$;
    \item \textbf{Continual learning (Sec.~\ref{subsec:continual})}: trajectory constraints on $\mathcal{M}$; replay maintains $\boldsymbol{\eta}$-space support while EWC imposes $\theta$-space curvature, both implementing $(\theta-\theta^*_{\text{old}})^\top \boldsymbol{F}(\theta^*_{\text{old}})(\theta-\theta^*_{\text{old}})\leq\epsilon$;
    %\item \textbf{Shapley duality (Sec.~\ref{subsec:token-shapley})}: second-order path-integral of Fisher metric yielding symmetric importance scores;
    \item \textbf{Composition (Sec.~\ref{subsec:composition})}: data mixing as $\boldsymbol{\eta}$-space Bregman barycenter and model merging as $\theta$-space Bregman barycenter, linked by $\nabla_\theta D_\psi(\theta_{\text{merge}}\|\theta_0)=\boldsymbol{\eta}_{\text{mix}}-\boldsymbol{\eta}_0$;
    \item \textbf{Security \& privacy (Sec.~\ref{subsec:conceptual-security}--\ref{subsec:testing-defense})}: adversarial and protective operations as boundary conditions on $\mathcal{M}$, where cooperative enhancement (Sec.~\ref{subsec:conceptual-security}) exploits $\Delta\mathcal{D}$--$\Delta\theta$ coupling and the product constraint $\epsilon_d\cdot\gamma_p\leq C_{\text{budget}}$ (Sec.~\ref{subsec:testing-defense}) arises from Lipschitz geometry.
\end{itemize}

\begin{figure}[t]
\centering
\includegraphics[width=0.99\columnwidth]{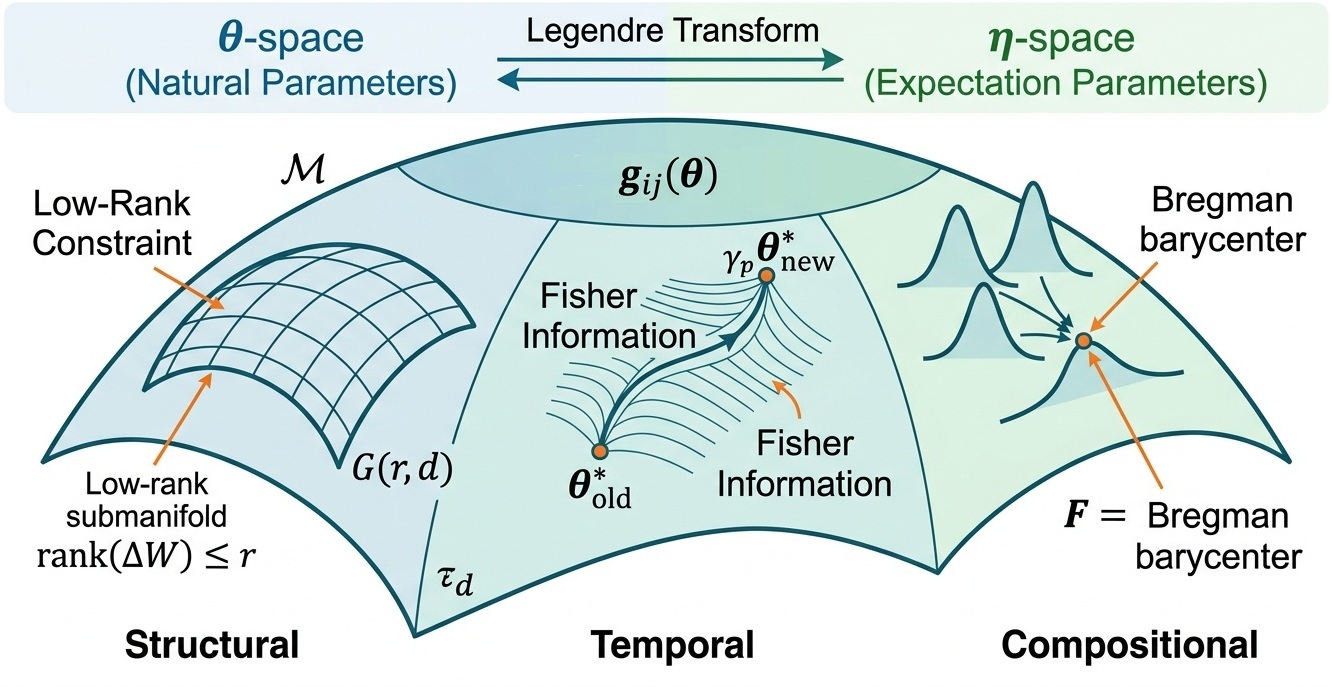}
\caption{Unified view of data-parameter correspondences on statistical manifold $\mathcal{M}$. (\textit{Left}) Structural dimension: low-rank submanifold $\mathcal{M}_r$ (Grassmannian $\mathcal{G}(r,d)$); (\textit{Center}) Temporal dimension: trajectory constraints from $\theta^*_{\text{old}}$ to $\theta^*_{\text{new}}$; (\textit{Right}) Compositional dimension: Bregman barycentric interpolation. Security/privacy constraints manifest as boundaries (dashed lines) on $\mathcal{M}$.}
\label{fig:synthesis-unified}
\end{figure}

Fig.~\ref{fig:synthesis-unified} provides a unified visualization of the data-parameter correspondences on the statistical manifold $\mathcal{M}$. \textbf{Left (Structural)}: The low-rank constraint restricts adaptation to submanifold $\mathcal{M}_r$ (Grassmannian $\mathcal{G}(r,d)$), where ICL and LoRA operate as dual parameterizations of rank-$r$ perturbations. \textbf{Center (Temporal)}: Continual learning imposes trajectory constraints on the Fisher-Rao geodesic from $\theta^*_{\text{old}}$ to $\theta^*_{\text{new}}$, with replay maintaining $\boldsymbol{\eta}$-space support and EWC constraining $\theta$-space curvature via Fisher Information $\boldsymbol{F}$. \textbf{Right (Compositional)}: Data mixing and model merging converge to the Bregman barycenter---data via $\boldsymbol{\eta}$-space interpolation, parameters via $\theta$-space affine combination---connected through Legendre duality. \textbf{Boundary Conditions}: Security and privacy constraints (dashed lines) manifest as exclusion boundaries on $\mathcal{M}$, distinguishing cooperative enhancement (attacks) from cascading attenuation (protection).

\paragraph{Inter-dimensional couplings}
Although Table~\ref{tab:three-dimensions} presents structural, temporal, and compositional correspondences as orthogonal, practical deployment reveals non-trivial interactions. Under strict low-rank budget $r\ll d$ (structural), the privacy cascading effect (Sec.~\ref{subsec:privacy-correspondence}) accelerates because effective dimension $m\approx r$ tightens the mutual information bound $I(\mathcal{D};\theta)\leq m$. Similarly, data mixing (Sec.~\ref{subsec:composition}) alters input distribution geometry, thereby modifying the local Lipschitz constant $\gamma_p$ in testing-time security constraints (Sec.~\ref{subsec:testing-defense}); thus optimal mixing entropy $H(\lambda)$ and contraction condition $\gamma_p<1$ cannot be tuned independently. These couplings suggest that joint optimization across dimensions often outperforms independent tuning.

\paragraph{A decision framework for operation selection}
Table~\ref{tab:decision-matrix} summarizes strategic choices between data-side and parameter-side operations under key constraints. The selection depends primarily on stage differentiability (Stage~1 vs.~3), budget type (sample vs. rank vs. privacy), and security requirements.

\begin{table*}[t]
\caption{Decision Matrix for Data-Centric vs. Parameter-Centric Operations}
\label{tab:decision-matrix}
\centering
\resizebox{\textwidth}{!}{
\begin{tabular}{@{}lp{4.2cm}p{4.2cm}p{5.5cm}@{}}
\toprule
\textbf{Constraint} & \textbf{Data-Side Preferred} & \textbf{Parameter-Side Preferred} & \textbf{Recommended Joint Strategy} \\
\midrule
Stage~3 (Inference) & ICL, Prompt eng., RAG & Dynamic quant., Adapter gating, Early exit & Hybrid: ICL for subspace probe $\to$ dynamic LoRA routing \\
\midrule
Low-Rank Budget ($r\ll d$) & $k$-shot ICL ($k\approx r$) & LoRA (rank $r$) & Coarse-to-fine: ICL locate $\to$ LoRA refine \\
\midrule
Continual Learning & Experience replay (large buffer $k$) & EWC (high $\lambda$) & Small buffer + moderate $\lambda$ (budget $\lambda\cdot k\approx\text{const}$) \\
\midrule
Security (Multi-turn $T\gg1$) & Input sanitization ($\epsilon_d\downarrow$) & Sensitivity control ($\gamma_p<1$) & Product constraint: $\epsilon_d\cdot\gamma_p\leq C_{\text{budget}}$ \\
\midrule
Privacy Budget & Sufficient stats $T(\mathcal{D})$ ($\rho\downarrow$) & Gradient compression ($\rho$) & Cascading: $\epsilon_{\text{eff}}\leq\epsilon_d+\rho\cdot\epsilon_p$ \\
\bottomrule
\end{tabular}%
}
\end{table*}

\paragraph{Boundary conditions and breakdown of duality}
These correspondences assume local validity on $\mathcal{M}$. Three conditions induce breakdown:
\begin{enumerate}
    \item \textbf{Non-local divergence}: When $\|\theta_i^*-\theta_0\|$ is large (e.g., after extensive continual learning without replay), the Legendre transformation between $\boldsymbol{\eta}$ and $\theta$ becomes non-invertible, invalidating linearized approximations in Sections~\ref{subsec:jacobian-framework} and~\ref{subsec:low-rank}.
    \item \textbf{Non-differentiable operations}: Hard quantization (INT2) and discrete routing (MoE) introduce non-smooth boundaries on $\mathcal{M}$ where the Fisher metric is undefined.
    \item \textbf{Adversarial regime shift}: Under $\mathcal{D}_{\text{train}}\neq\mathcal{D}_{\text{test}}$, Jacobian image spaces $\operatorname{Im}(J_x)$ and $\operatorname{Im}(J_\theta)$ may misalign, breaking functional equivalence (Eq.~\eqref{eq:jacobian-decomp}).
\end{enumerate}
These limitations delineate the frontier for extending the correspondence, rather than undermining the framework.

\paragraph{Transition to future directions}
The synthesis above reveals that data and parameters constitute two coordinate systems on the same statistical manifold. The operational implications---that data pruning and parameter sparsification, in-context learning and low-rank adaptation, or data mixing and model merging are geometrically dual---suggest new opportunities for cross-domain optimization. The following conclusion articulates specific directions for extending these correspondences beyond the present preliminary investigation.

\section{Algorithmic Realization and Computational Feasibility}
\label{sec:algorithms}

While the data-parameter correspondences in Sections~\ref{sec:established}--\ref{sec:novel} are grounded in geometric abstraction, their utility depends on tractability at billion-parameter scales. This section demonstrates that all constructs---the Gradient Interaction Matrix~$\boldsymbol{M}$, Fisher-Rao metric~$g_{ij}(\theta)$, and Lipschitz constraints~$\gamma_p$---admit efficient implementations via standard automatic differentiation primitives. We present streaming algorithms that reduce memory from prohibitive $O(N\cdot K)$ or $O(K^2)$ to $O(K)$, rendering the framework deployable.

\subsection{Streaming Computation of the Gradient Interaction Matrix}
\label{subsec:gradient_interaction_algo}

Recall from Eq.~(\ref{eq:gradient_interaction_matrix}) that $M_{n,k} = g_{n,k} \cdot v_k$. Rather than materializing $\mathbf{M}\in\mathbb{R}^{N\times K}$, we compute the sufficient statistics---data utilities $\{\phi_n^d\}$ and parameter importances $\{\phi_k^p\}$---via Jacobian-Vector Products (JVP)~\cite{frostig2018compiling}.

\begin{algorithm}[h]
\caption{Streaming Selection Score Computation}
\label{alg:streaming}
\begin{algorithmic}[1]
\STATE \textbf{Input:} Model $f_\theta$, training set $\{(x_n,y_n)\}_{n=1}^N$, validation gradient $\mathbf{v}=\nabla_\theta\mathcal{L}_{\text{val}}(\bar{\theta})$, budgets $(b,\rho)$
\STATE \textbf{Output:} Selected indices $\mathcal{S}_d$, parameter mask $\mathcal{S}_p$
\STATE Initialize $\boldsymbol{\phi}^d \in \mathbb{R}^N$, $\boldsymbol{\phi}^p \in \mathbb{R}^K$ as zero vectors
\FOR{$n = 1$ \textbf{to} $N$}
    \STATE Compute $\mathbf{g}_n = \nabla_\theta \ell(f_{\bar{\theta}}; x_n, y_n)$ via JVP
    \STATE $\phi_n^d \leftarrow \sum_{k=1}^K g_{n,k} \cdot v_k$ \COMMENT{Row-wise: $\phi_n^d = \sum_k M_{n,k}$ as in Eq.~(6a)}
    \STATE $\boldsymbol{\phi}^p \leftarrow \boldsymbol{\phi}^p + (\mathbf{g}_n \odot \mathbf{v})$ \COMMENT{Column-wise: accumulates $\phi_k^p = \sum_n M_{n,k}$ as in Eq.~(6b)}
\ENDFOR
\STATE $\mathcal{S}_d \leftarrow \text{TopK}(\boldsymbol{\phi}^d, b)$
\STATE $\mathcal{S}_p \leftarrow \text{TopK}(\boldsymbol{\phi}^p, \lfloor\rho K\rfloor)$
\STATE \textbf{return} $\mathcal{S}_d, \mathcal{S}_p$
\end{algorithmic}
\end{algorithm}

\begin{theorem}[Streaming Complexity]
\label{thm:streaming}
Algorithm~\ref{alg:streaming} computes the unified selection scores with:
\begin{itemize}
    \item \textbf{Time:} $O(N\cdot T_{\text{grad}} + K\log K)$, where $T_{\text{grad}}$ is single-sample gradient cost;
    \item \textbf{Memory:} $O(K)$, independent of $N$;
    \item \textbf{Passes:} Single pass over training data.
\end{itemize}
\end{theorem}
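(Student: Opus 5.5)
The plan is a line-by-line accounting of Algorithm~\ref{alg:streaming}, split into the loop body (lines 4--8) and the post-processing selection (lines 9--10). I fix the model $f_{\bar{\theta}}$ and the validation gradient $\mathbf{v}$ as precomputed inputs; computing $\mathbf{v}$ once is a single backward pass over the validation set and is not charged to the streaming cost. I treat $T_{\text{grad}}$ as the cost of one per-sample gradient via reverse-mode automatic differentiation, which is $\Omega(K)$ because the output $\mathbf{g}_n$ has $K$ entries. This lower bound is what lets every $O(K)$ vector operation be absorbed into the per-iteration gradient cost.

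\textbf{Time.} In iteration $n$, line 5 costs $T_{\text{grad}}$. Line 6 is an inner product $\langle \mathbf{g}_n,\mathbf{v}\rangle$, costing $O(K)$. Line 7 is an elementwise product followed by a vector addition, also $O(K)$. By $T_{\text{grad}}=\Omega(K)$, each iteration costs $O(T_{\text{grad}})$, so the loop totals $O(N\,T_{\text{grad}})$. For line 10, sorting $\boldsymbol{\phi}^p$ costs $O(K\log K)$; a linear-time selection routine would also suffice. For line 9, $\text{TopK}$ on $\boldsymbol{\phi}^d$ costs $O(N)$ with quickselect, or $O(N\log b)$ with a size-$b$ min-heap. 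Either is dominated by $N\,T_{\text{grad}}$, since $T_{\text{grad}}\ge \log b$ in any nontrivial model. Summing the pieces gives $O(N\,T_{\text{grad}}+K\log K)$.

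\textbf{Memory and passes.} The working storage is $\mathbf{v}$, the current $\mathbf{g}_n$, and the accumulator $\boldsymbol{\phi}^p$, each in $\mathbb{R}^K$; the gradient buffer is freed or overwritten at each iteration. Correctness of the streamed scores follows from linearity. Line 6 gives $\phi^d_n=\sum_k M_{n,k}$ directly. After the loop, $\phi^p_k=\sum_{n}g_{n,k}v_k=\sum_n M_{n,k}$ by induction on $n$, so $\boldsymbol{M}$ from Eq.~\eqref{eq:gradient_interaction_matrix} is never materialized. The data is touched once, in order, which gives the single-pass claim.

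\textbf{Main obstacle.} The delicate point is the memory bound. The vector $\boldsymbol{\phi}^d\in\mathbb{R}^N$ as written in line 3 is not independent of $N$. I will handle this by replacing the stored vector with a streaming min-heap of size $b$ that holds the running top-$b$ pairs $(\phi^d_n,n)$. This yields $O(K+b)$ memory, which is $O(K)$ under the natural regime $b\le K$; alternatively, one can state the bound as $O(K)$ excluding the $O(N)$ output scores. The heap update costs $O(\log b)$ per sample, which again is absorbed into $T_{\text{grad}}$. A secondary caveat is that line 5 is really a VJP (backward pass), not a JVP. Its peak activation memory is model-dependent but independent of $N$, so I will state the memory bound as $O(K)$ beyond the fixed per-sample autodiff workspace.
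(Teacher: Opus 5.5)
Your proof is correct and uses the same skeleton as the paper's: scores that are additive over samples, one per-sample gradient per iteration, and a TopK step at the end. Two of your caveats, however, identify genuine defects in the paper's three-line sketch rather than pedantry.

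First, the paper never addresses that line 3 allocates $\boldsymbol{\phi}^d\in\mathbb{R}^N$. Algorithm~\ref{alg:streaming} as written therefore uses $\Theta(N+K)$ memory, and the claim of independence from $N$ fails without modification. Your size-$b$ heap is the needed repair. It gives $O(K+b)$ memory, which is the honest bound, since the output $\mathcal{S}_d$ alone occupies $b$ words.

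Second, the paper says $\mathbf{g}_n$ is obtained by forward-mode AD (JVP). A single JVP of the scalar loss only returns a directional derivative $\mathbf{g}_n^\top\mathbf{u}$. Recovering the full gradient in forward mode would take $K$ JVPs and break the $O(N\,T_{\text{grad}})$ bound, so your switch to reverse mode (VJP) is right. Note that $\phi^d_n=\mathbf{g}_n^\top\mathbf{v}$ by itself \emph{is} one JVP in direction $\mathbf{v}$, but the column accumulator $\boldsymbol{\phi}^p$ needs the full gradient.

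The paper also leaves two steps implicit that you make explicit. You use $T_{\text{grad}}=\Omega(K)$ to absorb the $O(K)$ vector operations. You also use linear-time selection for the TopK over $\boldsymbol{\phi}^d$; a full sort would add an $N\log N$ term that the stated bound does not contain.

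The trade-off is this: the paper's argument is shorter but supports the stated bounds only under an idealized reading of the algorithm. Yours establishes them for a lightly modified algorithm, with the memory bound correctly qualified to exclude the per-sample autodiff workspace.
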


\begin{proof}
By linearity of differentiation, $\phi_n^d$ and $\boldsymbol{\phi}^p$ are additive over samples. Each iteration computes $\mathbf{g}_n$ via forward-mode AD (JVP) without storing intermediate activations for other samples. The TopK operations dominate the $K\log K$ term.
\end{proof}

\subsection{Fisher-Rao Metric Estimation via Sketching}
\label{subsec:fisher_estimation}

The diagonal Fisher Information $\text{diag}(\mathbf{F})$ required for EWC (Sec.~\ref{subsec:continual}) and geometric volume computation (Sec.~\ref{subsec:geometric-correspondence}) cannot be stored explicitly. We employ Hutchinson's stochastic estimator~\cite{hutchinson1989stochastic}.

\begin{algorithm}[h]
\caption{Diagonal Fisher Estimation}
\label{alg:hutchinson}
\begin{algorithmic}[1]
\STATE \textbf{Input:} Model $f_\theta$, data distribution $\mathcal{D}$, sketch size $m$
\STATE \textbf{Output:} Diagonal estimate $\hat{\mathbf{f}} \in \mathbb{R}^K$
\STATE Initialize $\hat{\mathbf{f}} \leftarrow \mathbf{0}$
\FOR{$i = 1$ \textbf{to} $m$}
    \STATE Sample $\mathbf{z} \in \{-1, +1\}^K$ uniformly (Rademacher)
    \STATE Compute $\mathbf{J}\mathbf{z}$ via JVP \COMMENT{$\mathbf{J}$ is Jacobian of log-likelihood}
    \STATE $\hat{\mathbf{f}} \leftarrow \hat{\mathbf{f}} + (\mathbf{J}\mathbf{z})^2 \odot \mathbf{z}$ \COMMENT{Element-wise square and multiply}
\ENDFOR
\STATE \textbf{return} $\hat{\mathbf{f}} / m$
\end{algorithmic}
\end{algorithm}

\begin{proposition}[Sketch Variance]
\label{prop:fisher_variance}
For each coordinate $k$, the estimator satisfies $\mathbb{V}[\hat{f}_k] = O(1/m)$. Thus $m = O(1/\epsilon^2)$ suffices to estimate high-curvature directions (large $F_{kk}$) required for EWC regularization.
\end{proposition}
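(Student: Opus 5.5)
The plan is to read the update in Algorithm~\ref{alg:hutchinson} as the standard Hutchinson diagonal estimator $\hat{\mathbf f}=\frac1m\sum_{i=1}^m \mathbf z^{(i)}\odot \mathbf F\mathbf z^{(i)}$, with $\mathbf F=\mathbb E[\mathbf J^\top\mathbf J]$. The product $\mathbf F\mathbf z$ is formed matrix-free as a JVP followed by a VJP, $\mathbf J^\top(\mathbf J\mathbf z)$. This reinterpretation is needed because the literal expression $(\mathbf J\mathbf z)^2\odot\mathbf z$, for a single-sample row Jacobian $\mathbf g^\top$, has coordinate mean
\[
\sum_{i,j} g_ig_j\,\mathbb E[z_iz_jz_k]=0,
\]
since Rademacher odd moments vanish. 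I would state this correction at the outset, as the conditioning step of the proof. With it in place, the argument has three steps: unbiasedness, a single-draw variance bound, and averaging followed by Chebyshev.

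\textbf{Step 1 (unbiasedness).} For one draw, the $k$-th coordinate is
\[
X_k=z_k\sum_j F_{kj}z_j=F_{kk}+\sum_{j\neq k}F_{kj}z_kz_j,
\]
using $z_k^2=1$. Since $\mathbb E[z_kz_j]=0$ for $j\neq k$, this gives $\mathbb E[X_k]=F_{kk}$.

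\textbf{Step 2 (single-draw variance).} The products $z_kz_j$ for $j\neq k$ are pairwise uncorrelated with unit variance. Hence
\[
\mathbb V[X_k]=\sum_{j\neq k}F_{kj}^2\le \|\mathbf F_{k,:}\|_2^2-F_{kk}^2=:\sigma_k^2,
\]
a constant that does not depend on $m$. If the Fisher matrix is also estimated from sampled data rather than the population, I would add the data-sampling variance via the law of total variance. Under a bounded-fourth-moment assumption on per-sample gradients, this term is also $O(1)$ per draw.

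\textbf{Step 3 (averaging and sample size).} The $m$ draws are i.i.d., so $\mathbb V[\hat f_k]=\sigma_k^2/m=O(1/m)$, which is the first claim. For the second claim, Chebyshev gives
\[
\Pr\bigl(|\hat f_k-F_{kk}|\ge\epsilon F_{kk}\bigr)\le \frac{\sigma_k^2}{m\,\epsilon^2F_{kk}^2}.
\]
Taking $m\ge \sigma_k^2/(\delta\epsilon^2F_{kk}^2)$ yields relative accuracy $\epsilon$ with probability at least $1-\delta$. For the high-curvature coordinates relevant to EWC, $F_{kk}$ is large relative to the off-diagonal mass of its row, so the ratio $\sigma_k^2/F_{kk}^2$ is bounded and $m=O(1/\epsilon^2)$. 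Uniformity over all $k$ would be obtained by replacing Chebyshev with a union bound and Hanson--Wright concentration for the Rademacher chaos, which adds a $\log K$ factor.

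\textbf{Main obstacle.} The algebra is routine. The delicate points are, first, that the algorithm as literally written must be corrected before any unbiasedness claim holds. Second, the constant in $O(1/\epsilon^2)$ depends on the off-diagonal-to-diagonal ratio $\sigma_k^2/F_{kk}^2$. I would state this dependence explicitly and formalize ``high-curvature'' as the hypothesis $\sigma_k^2\le cF_{kk}^2$, rather than claim a dimension-free bound for every coordinate.
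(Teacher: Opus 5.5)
The paper states Proposition~\ref{prop:fisher_variance} without proof; the text passes directly to the Grassmannian subsection. There is therefore no argument of the authors' to compare with. Your proposal supplies one, and it is correct.

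Your diagnosis of Algorithm~\ref{alg:hutchinson} is right. For a single-row $\mathbf J$, the printed update $(\mathbf J\mathbf z)^2\odot\mathbf z$ has mean zero. For $\mathbf J\in\mathbb R^{N\times K}$ with $N>1$, it is not even dimensionally defined. So a repair is unavoidable. The first sentence of the proposition, $\mathbb V[\hat f_k]=O(1/m)$, already holds for the unrepaired update, since it is an i.i.d.\ average of finite-variance draws. Only the claim that the procedure estimates $F_{kk}$ needs your correction. Steps~1--3 are sound, and the inequality in Step~2 is actually an equality.

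Three sharpenings are worth making.

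\textbf{Concentration.} For fixed $k$, the variables $(z_kz_j)_{j\neq k}$ are not merely uncorrelated but i.i.d.\ Rademacher. Hence $X_k-F_{kk}$ is a linear Rademacher sum, and Hoeffding gives $\Pr(|\hat f_k-F_{kk}|\ge t)\le 2\exp(-mt^2/(2\sigma_k^2))$. This improves $1/\delta$ to $\log(1/\delta)$ and gives the union bound over $k$ without Hanson--Wright.

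\textbf{The curvature hypothesis.} Your hypothesis $\sigma_k^2\le cF_{kk}^2$ can be traced to a more interpretable one. Since $\mathbf F\succeq 0$, we have $F_{kj}^2\le F_{kk}F_{jj}$, hence $\sigma_k^2/F_{kk}^2\le \operatorname{tr}(\mathbf F)/F_{kk}-1$. So ``high curvature'' can be taken to mean $F_{kk}\ge c\,\operatorname{tr}(\mathbf F)$. This bound is tight for $\mathbf F=\mathbf g\mathbf g^\top$, where the ratio equals $\|\mathbf g\|_2^2/g_k^2-1$, which is of order $K$ for a spread-out gradient. Your caution is therefore warranted: with the $\mathbf z\odot\mathbf F\mathbf z$ repair, the constant hidden in $O(1/\epsilon^2)$ can scale with the parameter count.

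\textbf{An alternative repair.} Let the rows of $\mathbf J$ be the per-sample scores scaled by $N^{-1/2}$, so that $\mathbf F=\mathbf J^\top\mathbf J$ is the empirical Fisher. Draw $\mathbf z\in\{\pm1\}^N$ over the samples and set $\hat{\mathbf f}=(\mathbf J^\top\mathbf z)^{\odot 2}$, which costs one VJP per draw. This estimator is unbiased for $\operatorname{diag}(\mathbf F)$, with per-draw variance $2\sum_{n\neq n'}J_{nk}^2J_{n'k}^2\le 2F_{kk}^2$. Thus $m\ge 2/(\delta\epsilon^2)$ gives relative accuracy $\epsilon$ for every coordinate, with no curvature hypothesis at all.

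Your version is faithful to the Hutchinson estimator the paper cites. The alternative is the one under which the proposition's second sentence holds without qualification.
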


\subsection{Grassmannian Subspace Operations}
\label{subsec:grassmannian_ops}

The low-rank correspondence (Sec.~\ref{subsec:low-rank}) requires comparing the subspace induced by $k$ demonstrations (data-side) with the column space of $\mathbf{A}\in\mathbb{R}^{K\times r}$ from LoRA (parameter-side).

\paragraph{ICL Subspace Extraction.}
For $k$ demonstrations inducing implicit gradient directions $\{\mathbf{u}_i\}_{i=1}^k \subset \mathbb{R}^K$ (via forward-mode analysis of attention patterns~\cite{vonoswald2023transformers}), form matrix $\mathbf{U} = [\mathbf{u}_1, \ldots, \mathbf{u}_k]^\top \in \mathbb{R}^{k\times K}$. Compute the thin SVD:
\[
\mathbf{U}_{\text{ICL}} = \text{right-singular-vectors}(\mathbf{U}, r) \in \mathbb{R}^{K\times r}
\]
yielding an orthonormal basis for the $r$-dimensional subspace $\mathcal{M}_r \subseteq \mathcal{G}(r,d)$.

\paragraph{Alignment Verification.}
Given LoRA matrices $\mathbf{A}\in\mathbb{R}^{K\times r}, \mathbf{B}\in\mathbb{R}^{r\times K}$, extract $\mathbf{Q}_{\text{LoRA}} = \text{QR}(\mathbf{A}) \in \mathbb{R}^{K\times r}$. The principal angles $\{\theta_i\}_{i=1}^r$ between $\text{Im}(\mathbf{U}_{\text{ICL}})$ and $\text{Im}(\mathbf{Q}_{\text{LoRA}})$ satisfy:
\[
\cos(\theta_i) = \sigma_i(\mathbf{Q}_{\text{LoRA}}^\top \mathbf{U}_{\text{ICL}})
\]
where $\sigma_i(\cdot)$ denotes the $i$-th singular value. Small angles empirically validate the $k \approx r$ equivalence (Table~\ref{tab:low-rank}).

\paragraph{Complexity.}
Both SVD and QR cost $O(K \cdot r^2)$ time and $O(K \cdot r)$ memory, feasible since $r \ll K$ (typically $r \in \{8,16,64\}$ per Table~\ref{tab:notation}).

\subsection{Sketched EWC with Replay}
\label{subsec:sketched_ewc}

The continual learning correspondence $k \cdot \lambda \approx \text{const}$ (Sec.~\ref{subsec:continual}) manifests algorithmically through the interplay between buffer size and Fisher estimation variance.

\begin{algorithm}[h]
\caption{Replay-Based Fisher Regularization}
\label{alg:sketched_ewc}
\begin{algorithmic}[1]
\STATE \textbf{Input:} New task data $\mathcal{D}_{\text{new}}$, replay buffer $\mathcal{B}$ ($|\mathcal{B}|=k$), old parameters $\theta_{\text{old}}$, coefficient $\lambda$
\STATE \textbf{Output:} Regularized loss $\mathcal{L}_{\text{CL}}$
\STATE $\hat{\mathbf{F}} \leftarrow \text{Algorithm~\ref{alg:hutchinson}}(f_\theta, \mathcal{B}, m)$ \COMMENT{Empirical Fisher from $k$ samples}
\STATE $\mathcal{L}_{\text{new}} \leftarrow \frac{1}{|\mathcal{D}_{\text{new}}|}\sum_{(x,y)\in\mathcal{D}_{\text{new}}} \ell(f_\theta(x), y)$
\STATE $\mathcal{R}_{\text{EWC}} \leftarrow \frac{\lambda}{2} \sum_{j=1}^K \hat{F}_{jj} (\theta_j - \theta_{\text{old},j})^2$
\STATE \textbf{return} $\mathcal{L}_{\text{new}} + \mathcal{R}_{\text{EWC}}$
\end{algorithmic}
\end{algorithm}

\begin{remark}[Buffer-Regularization Duality]
Algorithm~\ref{alg:sketched_ewc} implements a hybrid of replay and EWC. When $k$ is small, $\hat{\mathbf{F}}$ has high variance $O(1/k)$, necessitating larger $\lambda$ to constrain $\theta$ within the high-curvature trust region. This algorithmic constraint mirrors the theoretical trade-off $k \cdot \lambda \approx \text{const}$: reducing buffer size (memory budget) requires increasing regularization strength (computational budget) to maintain equivalent trajectory constraints on $\mathcal{M}$.
\end{remark}

\subsection{Power Iteration for Lipschitz Constants}
\label{subsec:lipschitz_estimation}

Verifying the safety constraint $\epsilon_d \cdot \gamma_p \leq C_{\text{budget}}$ (Sec.~\ref{subsec:testing-defense}) requires estimating the local Lipschitz constant $\gamma_p = \|\nabla_e f_{\theta^*}\|_{\text{op}}$ without materializing the Jacobian.

\begin{algorithm}[h]
\caption{Local Lipschitz Estimation}
\label{alg:power_iteration}
\begin{algorithmic}[1]
\STATE \textbf{Input:} Frozen model $f_{\theta^*}$, input embedding $e$, iterations $T$
\STATE \textbf{Output:} Lipschitz estimate $\hat{\gamma}_p$
\STATE Initialize $\mathbf{v} \sim \mathcal{N}(0, \mathbf{I})$, normalize $\mathbf{v} \leftarrow \mathbf{v}/\|\mathbf{v}\|_2$
\FOR{$t = 1$ \textbf{to} $T$}
    \STATE $\mathbf{v}_{\text{new}} \leftarrow \text{VJP}(\mathbf{v}; f_{\theta^*}, e)$ \COMMENT{Vector-Jacobian Product: $\mathbf{v}^\top \nabla_e f$}
    \STATE $\mathbf{v} \leftarrow \mathbf{v}_{\text{new}} / \|\mathbf{v}_{\text{new}}\|_2$
\ENDFOR
\STATE $\hat{\gamma}_p \leftarrow \|\text{JVP}(\mathbf{v}; f_{\theta^*}, e)\|_2$ \COMMENT{Rayleigh quotient for top singular value}
\STATE \textbf{return} $\hat{\gamma}_p$
\end{algorithmic}
\end{algorithm}

Algorithm~\ref{alg:power_iteration} converges to the largest singular value of $\nabla_e f_{\theta^*}$ with rate dependent on the spectral gap~\cite{golub2013matrix}. The computation requires $O(T \cdot T_{\text{forward}})$ time and $O(d)$ memory, where $d$ is embedding dimension.

\subsection{Summary of Computational Feasibility}

Table~\ref{tab:complexity} summarizes the complexity gains. All operations reduce to standard JVP/VJP primitives available in PyTorch/JAX.

\begin{table}[h]
\caption{Algorithmic Realization of Theoretical Constructs}
\label{tab:complexity}
\scriptsize
\centering
\begin{tabular}{lccc}
\toprule
\textbf{Construct} & \textbf{Naive Cost} & \textbf{Algorithm} & \textbf{Efficient Cost} \\
\midrule
Grad. Interaction $\mathbf{M}$ & $O(NK)$ memory & Alg.~\ref{alg:streaming} & $O(K)$ memory \\
Fisher Diagonal $F_{kk}$ & $O(K^2)$ Hessian & Alg.~\ref{alg:hutchinson} & $O(mK)$ sketch \\
ICL Subspace & Backprop required & Sec.~\ref{subsec:grassmannian_ops} & Forward only \\
EWC Regularization & Store full $\mathbf{F}$ & Alg.~\ref{alg:sketched_ewc} & $O(kK)$ from buffer \\
Lipschitz $\gamma_p$ & SVD of Jacobian & Alg.~\ref{alg:power_iteration} & $O(Td)$ power iter. \\
\bottomrule
\end{tabular}
\end{table}

\begin{remark}
The algorithms above demonstrate that the data-parameter correspondences are not abstract geometric curiosities but \emph{computationally tractable procedures}. The unified framework permits joint optimization over data and parameter modalities using standard deep learning primitives.
\end{remark}

\section{Conclusion}
\label{sec:conclusion}

This paper establishes a systematic correspondence between data-centric and parameter-centric operations in LLMs, grounded in the Fisher--Rao metric $g_{ij}(\theta)$ and Legendre duality between natural ($\theta$) and expectation ($\eta$) parameters. We formalize three fundamental correspondences spanning the model lifecycle: \textbf{(1)~Geometric}---data pruning and parameter sparsification reduce manifold volume via dual coordinate constraints (Sec.~III-A); %\textbf{(2)~Optimization}---the Gradient Interaction Matrix $\mathbf{M}\in\mathbb{R}^{N\times K}$ exposes symmetric bilevel structures between data selection and parameter masking (Sec.~II-C and III-B); 
\textbf{(2)~Low-rank}---in-context learning and LoRA adaptation explore identical subspaces on the Grassmannian $\mathcal{G}(r,d)$, with $k$-shot samples geometrically equivalent to rank-$r$ updates (Sec.~IV-B); and \textbf{(3)~Security-privacy}---adversarial attacks exhibit cooperative amplification across the data-parameter boundary, whereas protective mechanisms follow cascading attenuation governed by mutual information $I(\mathcal{D};\theta)$ (Sec.~IV-D, Sec.~IV-E and IV-F).

These correspondences manifest across three orthogonal dimensions: \textbf{structural} (low-rank constraints via $\mathcal{G}(r,d)$), \textbf{temporal} (trajectory constraints via experience replay versus elastic weight consolidation, Sec.~IV-C), and \textbf{compositional} (Bregman barycentric interpolation for data mixing and model merging, Sec.~IV-H). Furthermore, the testing-time security constraint $\epsilon_d\cdot\gamma_p\leq C_{\text{budget}}$ demonstrates that input sanitization and Lipschitz regularization are multiplicatively coupled, with $\gamma_p<1$ being necessary and sufficient for long-form safety (Sec.~IV-F).

As a preliminary investigation, this work does not claim a unified theory covering all LLM lifecycles. Limitations include: (i) the Jacobian image space framework (Sec.~IV-A) is local and first-order; (ii) the product constraint assumes Lipschitz continuity that may not hold globally for hard quantization or discrete routing; (iii) privacy cascading bounds rely on Fisher approximations that become noisy in deep networks; and (iv) non-differentiable operations introduce non-smooth boundaries where the Fisher metric is undefined. These limitations delineate the frontier for future inquiry.

Promising directions include: \textbf{energy-based duality}, linking data distillation with parameter pruning via free-energy minimization on $\mathcal{M}$; \textbf{knowledge editing correspondences}, where data-side counterfactual augmentation and parameter-side model editing share rank-one update structures; \textbf{MoE routing as simultaneous selection}, treating expert routing as joint data--parameter sparsification; and \textbf{full information-geometric unification across stages}, potentially via a time-dependent Fisher metric interpolating between training and inference. We view this work not as a closed theory but as a roadmap for rethinking LLM engineering through the lens of manifold duality.

\bibliography{ReferenceFormat}
\bibliographystyle{IEEEtran}

\vfill

\end{document}